\pdfoutput=1
\documentclass[10pt]{article}
\usepackage{thuidm}

\usepackage[utf8]{inputenc}
\usepackage{amsfonts}
\usepackage{mathtools}
\usepackage{array}
\usepackage{tikz}

\DeclareRobustCommand{\stepnum}[1]{\tikz[baseline=(sn.base)]\node[circle, fill=black!75, text=white, font=\footnotesize\bfseries\sffamily, inner sep=0pt, minimum size=1.35em] (sn) {#1};}

\theoremstyle{thuidm}                     

\newtheorem{hypothesis}[theorem]{Hypothesis}
\theoremstyle{thuidmplain}                

\crefname{equation}{Eq.}{Eqs.}
\Crefname{equation}{Eq.}{Eqs.}
\crefname{hypothesis}{Hypothesis}{Hypotheses}
\Crefname{hypothesis}{Hypothesis}{Hypotheses}
\crefname{conjecture}{Conjecture}{Conjectures}
\Crefname{conjecture}{Conjecture}{Conjectures}
\crefname{principle}{Principle}{Principles}
\Crefname{principle}{Principle}{Principles}
\crefname{claim}{Claim}{Claims}
\Crefname{claim}{Claim}{Claims}

\DeclareMathOperator*{\argmax}{arg\,max}
\DeclareMathOperator*{\argmin}{arg\,min}
\DeclareMathOperator{\CVaR}{CVaR}

\newcommand{\KL}[2]{D_{\mathrm{KL}}\!\left[#1\,\Big\|\,#2\right]}
\newcommand{\E}{\mathbb{E}}
\newcommand{\R}{\mathbb{R}}
\newcommand{\indicator}[1]{\mathbf{1}\!\left[#1\right]}
\newcommand{\norm}[1]{\left\|#1\right\|}
\newcommand{\abs}[1]{\left|#1\right|}
\newcommand{\calH}{\mathcal{H}}
\newcommand{\calI}{\mathcal{I}}
\newcommand{\calC}{\mathcal{C}}
\newcommand{\calP}{\mathcal{P}}
\newcommand{\calL}{\mathcal{L}}

\newcommand{\calF}{\mathcal{F}}
\newcommand{\calA}{\mathcal{A}}
\newcommand{\calS}{\mathcal{S}}
\newcommand{\calD}{\mathcal{D}}
\newcommand{\calR}{\mathcal{R}}

\newcommand{\calT}{\mathcal{T}}

\newcommand{\calU}{\mathcal{U}}

\newcommand{\calV}{\mathcal{V}}
\newcommand{\calO}{\mathcal{O}}

\newenvironment{restated}[1]{\par\medskip\noindent\textbf{\Cref{#1} (restated).}\ \itshape\ignorespaces}{\par\medskip}

\title{Does Scaling Toward AGI Depend on What System-1 Lets System-2 Execute?}

\author{%
  Qi Wang\aff{1}, Yun Qu\aff{1}, Lizhou Cai\aff{1}, Heming Zou\aff{1}, Haoang Chi\aff{1}, Yiqin Lv\aff{1}, Xiang Li\aff{1},\\
  Haiyan Feng\aff{1}, Mengyu Ge\aff{2}, Yize Geng\aff{2}, Yuhang Jiang\aff{1}, Yixiu Mao\aff{1}, Xiangyang Ji\aff{1}$^\dagger$%
}

\affiliation{%
  \aff{1}Department of Automation, Tsinghua University, Beijing, China\\
  \aff{2}Department of Computer Science, Tsinghua University, Beijing, China%
}

\datestamp{October 27, 2026}
\correspondence{$^\dagger$Prof. Xiangyang Ji at \href{mailto:xyji@tsinghua.edu.cn}{xyji@tsinghua.edu.cn}}
\projectwebsite{https://cheemswang.github.io/blogs/sys1-sys2-codesign.html}

\labwebsite[www.thuidm.com]{https://www.thuidm.com/}

\begin{document}

\begin{abstract}
Progress toward artificial general intelligence (AGI) has largely relied on scaling a resource-intensive executive, System-2, through larger models, more data and compute, longer reasoning, and more agents. We ask whether the next stage may come instead from co-designing it with System-1, a lightweight, semantically aware predictor that evaluates candidate interactions before execution, so that System-2 acts only on those worth their cost. Our answer is a qualified yes.
We study \textit{conditional interaction}, a predict--gate--execute--update loop: System-1 estimates each candidate's utility, System-2 executes those admitted by an adaptive threshold, and the outcomes update System-1. The loop spans training, test-time reasoning, adaptive decision-making, and agentic tool use, with utility and cost defined by the application.
Under ideal screening and equal execution costs, execution compute falls by a factor approaching $1/\rho_u$, where $\rho_u$ is the density of utility-relevant interactions (cost-weighted when costs differ); prediction errors, exploration, and distribution shift reduce the saving. We bound selection regret by predictor error and the utility margin.
These results motivate our central hypothesis: $\rho_u$ is a measurable coordinate of scaling efficiency and makes informative compute a better description of scaling than total compute. 
\end{abstract}

\maketitle

\section{The Interaction Problem at Scale}
\label{sec:intro}
\begin{figure}
\centering
\centerline{\includegraphics[width=0.95\textwidth]{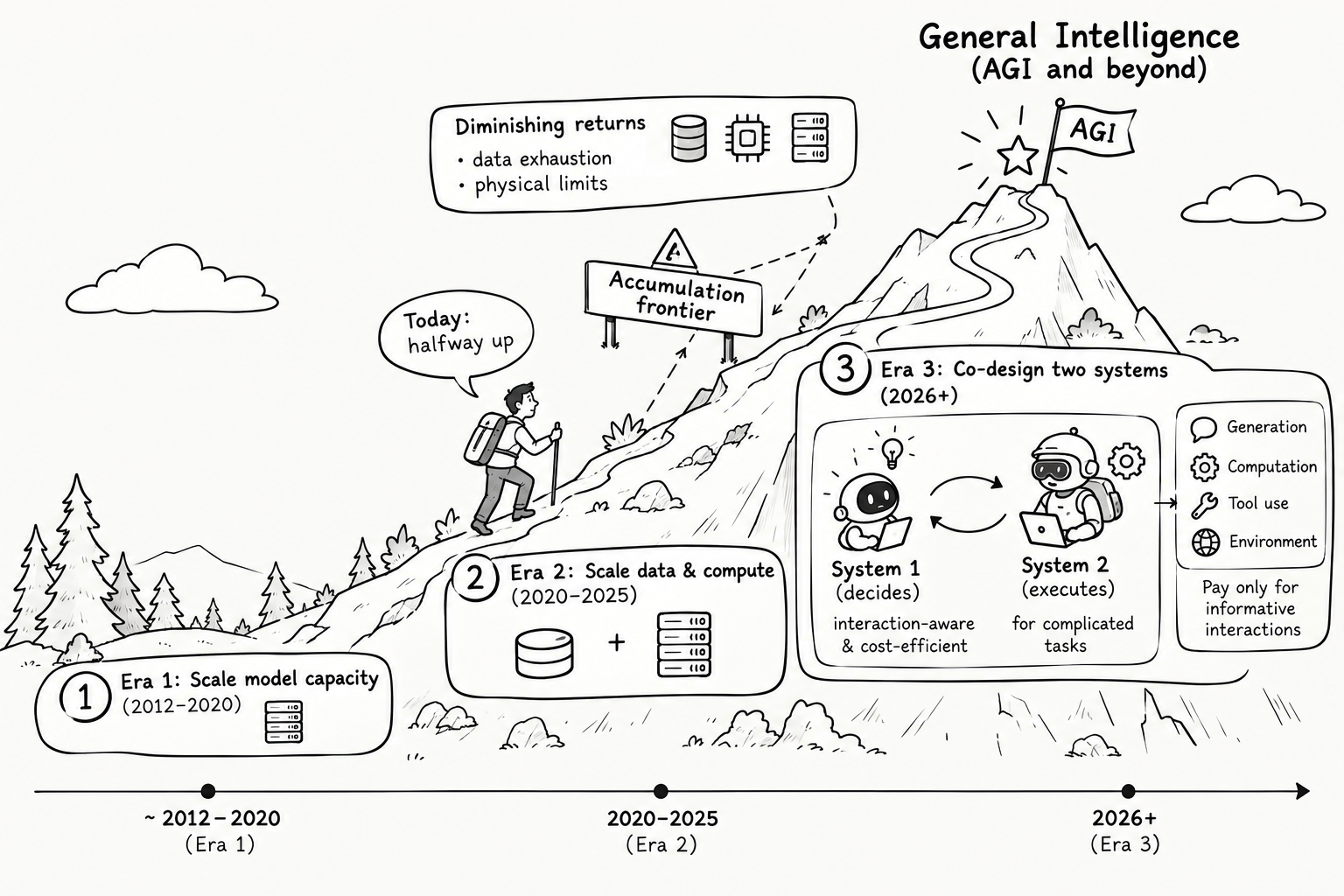}}
\caption{\textbf{An Illustrative Roadmap to AGI and its Beyond.}
As AI development progresses, broad capability resembles a summit that systems approach but may hit diminishing returns due to constraints in data, compute, and parameters. 
In Era~1 (2012--2020) progress came mainly from scaling model capacity, and in Era~2 (2020--2025) from scaling data and computation.
For Era~3 (2026+), this work emphasizes a next-scaling lens: co-designing a cost-effective System-1 that prioritizes interactions by expected utility and a resource-intensive System-2 for complicated tasks. 
This emphasis becomes increasingly consequential for foundation models and embodied agents as they depend on generation, computation, tool use, and environmental interaction, and thus on the cost of each.}
\label{fig:mountain}
\end{figure}

Recent advances toward artificial general intelligence (AGI) \citep{morris2024levels} have largely focused on scaling model capacity, data volume, and computational resources \citep{sutton2019bitter, kaplan2020scaling, hoffmann2022training}, along with innovations in architectures, pretraining or post-training techniques, retrieval, and inference-time search. 
A significant portion of this investment has gone to a resource-intensive intelligent system, referred to as \emph{System-2}, which handles reasoning, planning, decision-making, function calling, or even computer use. 
As frontier large models \citep{jaech2024openai, guo2025deepseek}, extend compute, and larger collectives of agents emerge, the potential of this executive system grows proportionally with resource availability. 
However, as systems become more capable, the costs of generating, evaluating, and acting on each additional interaction may become a constraining factor. 
This presents an alternative scaling challenge: not merely how many interactions a system can manage, but which interactions justify their associated costs.

An \emph{interaction} encompasses any information exchange between an intelligent system and its data, computations, or environment. 
It gathers information at a cost: for a large language model (LLM), a \emph{generation episode} in which a prompt produces tokens, such as a Chain-of-Thought (CoT) \citep{wei2023cot}; for an embodied or reinforcement-learning agent, a \emph{trajectory} of states, actions, and rewards; for a tool-using agent, a \emph{tool invocation} that calls an external resource and integrates its result. 
Here, we treat interaction as the unit of analysis, recognizing that these operations are not necessarily equivalent in a physical or statistical sense; they share that each incurs resource commitments before observing the outcome and realizes \emph{utility} only afterward. 
Even though reasoning models \citep{guo2025deepseek} and embodied agents \citep{kim2024openvla} benefit from increased interactions, the relative returns from each interaction tend to diminish. 
For instance, tokens that fail to alter the relevant belief state, training rollouts that yield little policy-gradient signal under the selected objective \citep{yu2026dapo}, and environment steps that reiterate known information all consume valuable resources while providing minimal task-relevant learning signals. 

Determining which interactions to execute is a challenge that System-2 cannot resolve autonomously because assessing an interaction's value must cost considerably less than executing it. 
This necessitates the co-design of System-1 and System-2, and this work introduces the \emph{conditional interaction} as a feasible co-design principle: a lightweight System-1 predicts the utility of potential interactions, the resource-intensive System-2 executes only those interactions that exceed a resource-aware threshold, and the resulting outcomes then serve to refine System-1. 
Because each system shapes the other, the two must be designed together, and this Perspective asks whether such co-design is a promising path toward AGI.
Our central hypothesis posits that the density of utility-relevant interactions, denoted as $\rho_u$, rather than the sheer count of interactions alone, serves as a pivotal metric of scaling efficiency. 
The co-design of System-1 and System-2, as detailed below, indicates a promising approach for scenarios where interactions are costly, exhibit uneven value, and can be anticipated. This framework is based on the premise that low-cost experiments can be conducted and foreseen effectively.

\paragraph{The Closing of the Accumulation Frontier.}
The pursuit of accumulating additional data is increasingly difficult. \emph{Human-generated data are finite}, since pretraining consolidates much of humanity's recorded text into a single model and the stock of public text may be depleted within this decade \citep{villalobos2024run}, while \emph{high-quality data are scarcer still}, as the subset that teaches most is only a fraction of that stock. 
\emph{Synthetic data shift the focus from quantity to coverage}: they can partially replenish both \citep{xu2025magpie}, but the space of synthesizable interactions is combinatorially vast, e.g., robotic data alone would require covering scenes, objects, dynamics, lighting, and sensing, so the pertinent question becomes which synthetic interactions are worth their cost. 
Moreover, \emph{some interactions are inherently expensive}: in scientific discovery, robotics, and specialized clinical and industrial settings, data are produced one experiment, actuation, or measurement at a time, at the pace of the underlying process, and no amount of computational power can accelerate a chemical reaction. 
Shifting the focus toward continual, post-deployment adaptation, we can find that self-evolution sharpens rather than alleviates the bottleneck: an agent that interacts indiscriminately exhausts its finite budget on experiences that yield minimal task-relevant learning, so its progress hinges on which interactions it selects, precisely the decision this Perspective examines.

\paragraph{The vanishing marginal return of undirected interactions and complexity of search space.} 
Neural scaling laws \citep{kaplan2020scaling, hoffmann2022training} illustrate why counting interactions is ineffective. 
Let $\calC$ be cumulative compute, $\calL(\calC)$ the loss, and $\calP=1-\calL$ a performance measure, e.g., any smooth $\calP=f(\calL)$ with bounded $f'$ yields similar conclusions. 
Under a power law $\calL(\calC)\propto\calC^{-\alpha}$, $\alpha>0$, the marginal gain from one more \emph{unconditional} interaction is
\begin{equation}
  \eta_{\text{unc}}(t)
  \;=\; \frac{d\calP}{d\calC}\bigg|_t
  \;=\; -\frac{d\calL}{d\calC}\bigg|_t
  \;\propto\; \alpha\,\calC^{-\alpha-1},
  \label{eq:marginal-efficiency}
\end{equation}
which vanishes as compute grows. 
This empirical relation describes diminishing returns on a \emph{fixed} landscape traversed by \emph{undifferentiated} interactions; altering which interactions are executed can shift the curve itself, and this Perspective explores that degree of freedom. 
Information gain is a canonical utility but not a universal one (\cref{def:utility,rem:utility-scope}). 
The following two regimes make the gap between interaction count and interaction utility concrete, and show where the waste arises.

\textit{The digital regime: intensive computations, wasted rollouts.}  
In reinforcement learning with verifiable rewards (RLVR), a reasoning model is post-trained on problems with automatically checkable answers, scoring each sampled thought chain (a \emph{rollout}) with a low-cost verifier. Group Relative Policy Optimization (GRPO) \citep{shao2024deepseekmath, guo2025deepseek} samples $G$ responses $\{o_g\}_{g=1}^G$ per prompt $\tau$, scores each with reward $r_g$, and forms group-relative advantages
\begin{equation}
  A_g^{\text{GRPO}}(\tau) = \frac{r_g - \mu_r}{\sigma_r + \epsilon},
  \quad \mu_r = \frac{1}{G}\sum_{g=1}^G r_g,
  \quad \sigma_r^2 = \frac{1}{G}\sum_{g=1}^G (r_g - \mu_r)^2,
  \label{eq:grpo-advantage}
\end{equation}
which weight the per-token log-likelihood gradients in a clipped surrogate objective regularized toward a reference policy. When all responses receive the same reward ($r_g \equiv \mu_r$), every $A_g=0$, rendering the prompt \emph{degenerate}: it provides no relative policy-gradient signal while consuming $G$ rollouts. Only prompts of intermediate difficulty yield mixed outcomes, and a substantial fraction of the pool is degenerate \citep{yu2026dapo,qu2026moppps}. Conditional interaction aids in predicting which prompts yield non-zero advantage, thereby optimizing the generation budget.

\textit{The physical regime: expensive rollouts in high-dimensional task spaces.}  
In contrast to the digital regime's cheap interactions, the physical regime is constrained by expensive rollouts: real-world interactions cannot be freely parallelized and consume tangible time tied to physical dynamics. A generalist agent trained by domain randomization performs across a simulation parameter space $\Xi\subseteq\R^d$, where the rollouts needed for generalization grow with the effective dimensionality of $\Xi$ and the complexity of its deployment scenarios. 
The qualitative insight is that, in high-dimensional spaces, \emph{which} tasks are evaluated holds equal importance to how many are assessed. 

\paragraph{The super parrot hypothesis: AI for AI, and its efficiency issue.}
Fig.~\ref{fig:super-parrot} sketches a trend we call the \emph{super parrot hypothesis}. The early foundation model first behaves as a ``stochastic parrot'' \citep{bender2021dangers}, imitating human text without grounded understanding (\emph{Stage~1}, roughly 2020--2022); CoT prompting \citep{wei2023cot} and RLVR \citep{shao2024deepseekmath, guo2025deepseek} then let it develop reasoning that humans can still follow (\emph{Stage~2}, 2022--2025); and we hypothesize a \emph{Stage~3} in which it encodes information in self-coded representations and improves itself through self-evolution, beyond human interpretation. Viewed as a trajectory, this is a trend toward AI for AI: at each stage a larger share of what the system learns from is produced by AI itself, synthetic instructions \citep{xu2025magpie}, self-generated reasoning traces scored by automatic verifiers, and ultimately self-chosen experiments \citep{silver2025era}. 
The same trend exhibits the efficiency issue in sharper form. 
Every self-generated interaction is paid for with the system's own compute, yet many of them, such as degenerated RLVR prompts, reasoning tokens that leave the answer unchanged, and self-proposed tasks that are already mastered, teach little to the model. 
Developing a super parrot therefore needs an internal judgment of value before execution, which is precisely the role this Perspective assigns to System-1.

All of the above discussions raise a fundamental Research Question: 
\textit{How can a System-2 be aware of an interaction's value before execution?} 

\begin{figure}[t]
\centering
\includegraphics[width=0.95\textwidth]{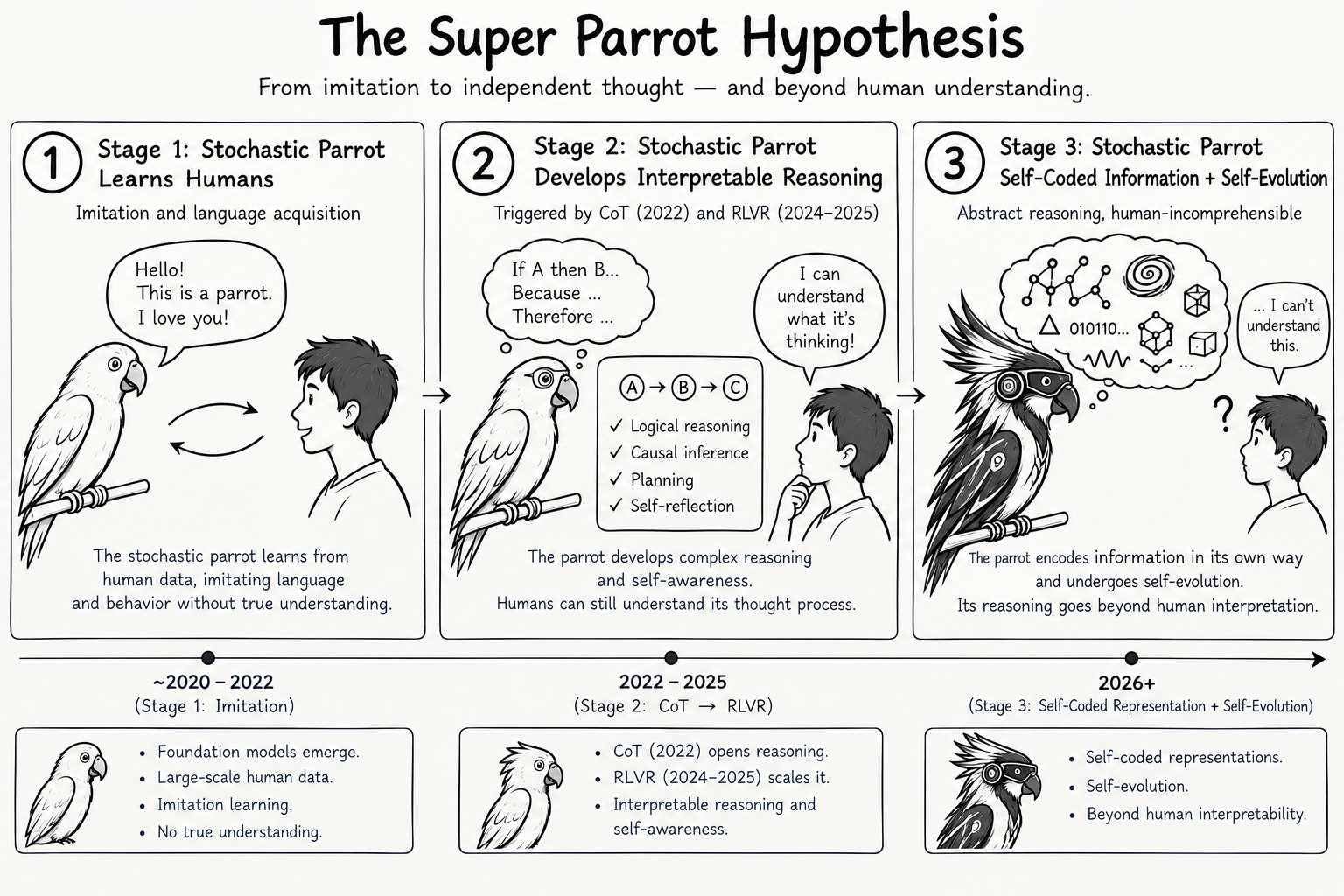}
\caption{\textbf{The Super Parrot Hypothesis.}
An illustrative, hypothesized trajectory rather than an established chronology: a conceptual progression of AI from imitation-based language learning (2020--2022), through interpretable reasoning enabled by chain-of-thought (CoT) and reinforcement learning with verifiable rewards (RLVR) (2022--2025), to a hypothesized future (2026+) of self-coded representations and self-evolution that may exceed human interpretability. As AI increasingly produces the data and feedback it learns from (AI for AI), deciding which self-generated interactions are worth executing becomes a central efficiency problem.}
\label{fig:super-parrot}
\end{figure}

\paragraph{Our position: co-design the two systems, and the conditional interaction decides what the other pays for.}
In response, we prioritize architecture over algorithm: give the executive a partner whose job is to judge, and design the two together. 
This work argues that a major part of the next advance toward AGI and beyond may come from \emph{co-designing} System-1 and System-2 rather than from scaling System-2 alone: specifying, training, and scaling the two as one system, where System-1 decides what System-2 executes, which executive runs it, how much it may spend, and what it keeps in memory; System-2's outcomes are System-1's supervision; and each is sized with the other in view (\cref{sec:architecture}). 
Integrating a cheap predictor with a costly executor introduces three decisions at scale: which prompts, tasks, or requests merit rollouts, trials, or extensive thought chains (\emph{compute}); which samples, buffers, caches, and context entries should be preserved (\emph{storage}); and how costly outcomes should adjust the inexpensive judgment that allocated them (\emph{feedback}).

This work proposes conditional interaction as a coupling mechanism that allocates interactions based on predicted utility, explicitly optimizing to avoid those predicted to fall below a certain threshold. 
This perspective applies to diverse applications, including online prompt selection in reinforcement learning from verifiable rewards (RLVR) \citep{qu2026moppps}, task selection in embodied agents \citep{qu2025fast}, model routing, and adaptive inference \citep{li2026limited} (\cref{sec:master}).
The same pattern applies in a deployed system, where the ledger records inference compute and real-world actions rather than gradient updates \citep{snell2024scaling}. 
The transfer focuses on the decision pattern and the scope is intentionally confined to situations with a reliable utility predictor, as this principle does not substitute exploration when the predictor is uncertain (\cref{sec:exploration}).

\paragraph{$\rho_u$: an efficiency coordinate to scale limited resource.}
Classical scaling laws express performance $\calP$ through parameters $N$, data $D$, and compute $\calC$. The density of utility-relevant interactions $\rho_u$, the fraction of candidate interactions whose utility $u$ clears a declared floor $u_{\min}$ in Eq.~(\ref{eq:rho-operational}), is not an additional resource but an \emph{efficiency coordinate} $\calP=f(N,D,\calC;\rho_{u})$ describing how well the three convert into capability, generated by the learning dynamics rather than supplied externally. For the canonical utility, Bayesian information gain, $\rho_u$ is the \emph{informative interaction density} $\rho_{\mathrm{info}}$, a special case of the general $\rho_u$ in which every formal result is stated; every statement is nonetheless relative to a declared utility, since the densities of prompts with non-zero GRPO advantage, of tasks above an adaptation-risk threshold, and of queries whose answer improves with more compute are different quantities and are not interchangeable.

Uninformative interactions waste both computation and the data slots they occupy, so the effective resources under unconditional training are discounted as $\rho_{u}D$ and $\rho_{u}\calC$. The compute discount is a direct accounting identity; $\rho_uD$ is a proposed measure of utility-relevant data \emph{consumed}, not a smaller available corpus, which a gate leaves unchanged. Two interpretations of this fraction must be kept apart. The \emph{pool} fraction $\rho_{u}(\theta_t)$ is a property of the candidate distribution at the current state, produced by the learning dynamics, and no gate can change it; the \emph{executed} fraction is the share of interactions actually paid for that are informative. Unconditional training executes the pool, so the two coincide, while a perfect gate drives the executed fraction to one. Conditional interaction raises the executed fraction, so that at fixed $N$, $D$ and $\calC$ more of the budget tends to buy utility-relevant interactions and, hypothetically, as a dynamic account of the exponent. 
Note that $\rho_{u}$ depends on the policy state, it is not a free axis.

\begin{figure}[t]
\centering
\includegraphics{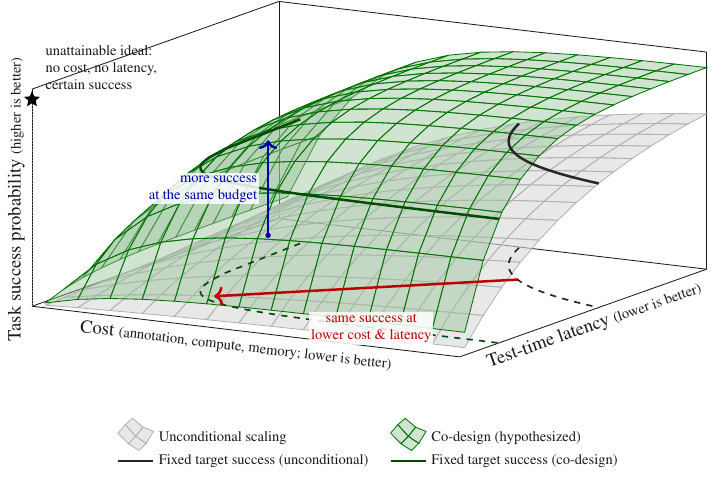}
\caption{\textbf{The design objective: a better Pareto frontier over cost, latency and task success.} Each surface is the best task success probability attainable within a cost and latency budget. The co-designed pair would spend both only where they change the answer, so its frontier (green) is hypothesized to dominate: the same success more cheaply (red), or more of it at the same budget (blue). This expresses the co-design objective.}
\label{fig:pareto3d}
\end{figure}

\paragraph{A potentially tighter Cost--QoS--Intelligence frontier.}
Directing interactions by predicted utility can tighten the Pareto frontier over compute, data, and model complexity: 
For example, under the assumptions of \cref{prop:rescaling}, a target is probably reached with less compute, or a fixed budget yields more performance. Suppressing the uninformative fraction $1-\rho_{u}$ acts on every axis at once, since fewer rollouts and gradient reduce compute and data, and routing each task only to the System-2 models it needs cuts the capacity activated per task. 
The same holds at deployment: training-time gating lowers \emph{cost}, test-time gating improves \emph{quality of service} with lowers latency, and concentrating both on informative interactions raises \emph{task success} per unit of either, so the co-designed pair aims at a frontier that improves on scaling System-2 alone on all three (See Fig.~\ref{fig:pareto3d}), the comprehensive co-design objective.

\paragraph{What we claim, and with what confidence.}
Throughout this work, the claim hierarchy is as follows.
\emph{(i) Premise:} future digital and robotic systems may be limited by interaction efficiency more than by parameters or data.
\emph{(ii) Position:} progress toward general intelligence should co-design a cheap System-1 with a costly System-2, coupled by a strategy that \emph{values interactions before execution} and suppresses those predicted to be uninformative.
\emph{(iii) Hypothesis:} the density of utility-relevant interactions $\rho_u$ ($\rho_{\mathrm{info}}$ for information gain) is an important determinant of how efficiently resources become capability.
\emph{(iv) Prediction:} $\rho_u$ \emph{decays} as the system improves, and this decay may account for part of the diminishing returns and of the scaling exponent, so that scaling laws are better described in informative compute than in compute alone.

The rest of this Perspective proceeds as follows: formalizing the predict--gate--execute--update loop to implement the conditional interaction (\cref{sec:formal}); building and co-scaling the two subsystems (\cref{sec:architecture}); deriving the loop's screening and regret properties, and why a lightweight System-1 can provide them (\cref{sec:theory}); instantiating the framework across digital training, physical adaptation, test-time computation and agentic use (\cref{sec:domains,sec:test-time}); and formulating informative compute as a scaling hypothesis (\cref{sec:implications}). 
\section{Conditional Interaction as A Co-Design Strategy: Predict Before You Pay}
\label{sec:formal}

Co-design requires an interface: a precise statement of what System-1 judges, what System-2 executes as a resource-intensive operation, and what flows back. 
\emph{Conditional interaction} is that interface. 
Both systems are defined by execution role rather than model family: System-1 is a lightweight, semantic-aware module that takes a candidate interaction as input, e.g., a prompt, task configuration, query, or tool call, with its context and returns a score rather than a solution, and it is consulted on every candidate at a cost $c_{\text{pred}}\ll c_{\text{exec}}$.
System-2 is the resource-demanding model or agent that performs the interaction, and is invoked only on the candidates System-1 certifies. 
The loop runs in both directions, making it co-design rather than a filter bolted onto a finished executive: System-1 shapes the data and effort System-2 receives, and System-2's outcomes become System-1's supervision. 
AlphaZero illustrates this, since its play comes not from deeper search alone but from a policy-value network and a tree search that trained each other \citep{silver2018general}: a more capable System-2 raises the intelligence of accomplishing tasks, while a well-calibrated System-1 determines how much of it is reached per unit of cost and latency (See Fig.~\ref{fig:pareto3d}).

\paragraph{Static cascades versus adaptive co-design.}
The pairing can be built in two ways (See Fig.~\ref{fig:cascade}). 
In a \emph{static cascade}, a standalone fast model classifies each request and hands the hard minority to a large model; Jev, a commercial model described by its developers as a ``System One'' \citep{typesafe2026jev}, is a clear instance, returning calibrated probabilities over answers defined in advance \citep{osman2026jev}. Such products illustrate commercial interest in fast, structured judgment models compatible with the System-1 role, although a single product does not establish an industry-wide convergence.
A static cascade is, by construction, trained once, routes only at the inference layer, and does not receive System-2's outcomes, which suffices when its predictions are stationary but fails here, because how informative an interaction is \emph{for the current System-2} changes with every update. 
In \emph{co-design}, System-1 instead predicts utility such as difficulty or learning value relative to the \emph{current} System-2; it acts in training as well as at inference, scheduling compute and storage such as rollout budgets, replay buffers, batches, key--value (KV) caches, context windows, rather than a single call path.
System-2's realized rewards, gradients, and outcomes recalibrate System-1 online (See Fig.~\ref{fig:flow}). 
A cascade retrained online on its outcomes moves toward co-design, so the distinction is not the presence of two models but feedback and joint optimization.

\begin{figure}[t]
\centering
\includegraphics[width=\linewidth]{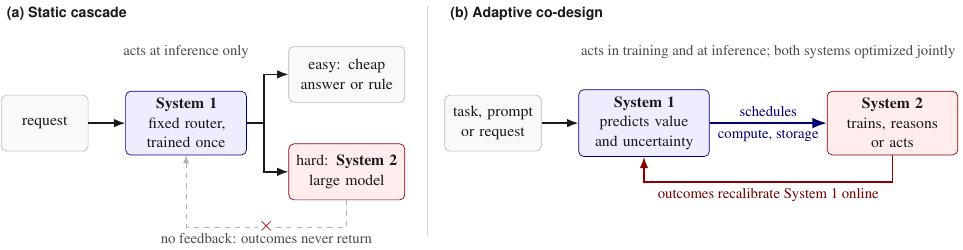}
\caption{\textbf{Static cascades versus adaptive co-design.} (a) A standalone fast model routes each request once and is never updated by the outcomes. (b) In co-design, System~1 schedules System~2's compute and storage, System~2's outcomes recalibrate System~1, and the two are optimized jointly, so System~1's judgements track an executive that keeps changing. Cascades retrained online on their outcomes lie between the two.}
\label{fig:cascade}
\end{figure}

\subsection{Core Definition}
\label{sec:formal-core}

\begin{definition}[Conditional Interaction Policy]
\label{def:ci-policy}
Let $\calI$ be the space of feasible interactions,
$\calH_t = \{(i_s, o_s)\}_{s=0}^{t-1}$ the history of interaction--outcome
pairs, and $\hat{u}:\calI\times\calH^*\to\R$ a predicted utility. 
A \emph{conditional interaction policy}
is the selection rule
\begin{subequations}\label{eq:ci-policy}
    \begin{align}
        \pi_{\mathrm{CI}}(i_t \mid \calH_{0:t})
  &\;=\;
  \frac{1}{Z_t}\,
  \exp\!\bigl(\beta\,\hat{u}(i_t \mid \calH_{0:t})\bigr)
  \,\cdot\,
  \indicator{\hat{u}(i_t \mid \calH_{0:t}) \ge \kappa_t},
  \\[2pt]
  Z_t &\;=\; \sum_{i\in\calI}
        \exp\!\bigl(\beta\,\hat u(i\mid\calH_{0:t})\bigr)\,
        \indicator{\hat u(i\mid\calH_{0:t})\ge\kappa_t},
    \end{align}
\end{subequations}
where $\beta>0$ controls selection sharpness, $\kappa_t$ is a utility threshold enforcing a minimum predicted utility, and $Z_t$ is a sum for finite $\calI$, or an integral against a base measure on a continuous $\calI$ to normalize $\pi_{\mathrm{CI}}$ over the admissible set $\{i:\hat u\ge\kappa_t\}$. 
When the admissible set is empty ($Z_t=0$), the policy selects an explicit abstention action $\varnothing$ with probability one and executes nothing at step $t$; $\pi_{\mathrm{CI}}$ is thus a distribution on $\calI\cup\{\varnothing\}$. The score $\hat u$ is a predicted \emph{gross} utility, which is the right ranking signal when execution costs are equal; when they differ, the policy is applied to the cost-adjusted score $\tilde u$ of Eq.~(\ref{eq:net-score}). Score-based results (the greedy limit, the threshold rule, the regret bounds and \cref{lem:margin}) then apply to $\tilde u$ provided the true score is redefined consistently as $u-\lambda\calC$, with relevance, error rates and regret all measured on that net score. Results on the IIE and on $\rho_u$ keep their gross-utility definitions unless explicitly redefined; a gate that is accurate for the net score need not have the same error rates with respect to $u$.
\end{definition}

\textit{Interpretation.}  
The indicator in Eq.~(\ref{eq:ci-policy}a) restricts the search space by suppressing candidates whose predicted utility falls too far below the current budget. 
At the same time, the exponential factor sets how sharply the policy concentrates on the best feasible candidates, and hence how much it explores among them. 
This work reserves \emph{gate} for this mechanism in prose and uses \emph{selection policy} to denote $\pi_{\mathrm{CI}}$. 
We call the resulting closed loop --- predict, gate, execute, and fold the outcome back into the predictor --- the \emph{predict-then-gate} architecture (See Fig.~\ref{fig:flow}); every instance shares it in this Perspective, whatever utility fills and whatever ledger it is scored against.

\paragraph{From soft gating to greedy selection: limiting regimes of the policy.}
The two parameters of Eq.~(\ref{eq:ci-policy}) play distinct roles: the threshold $\kappa_t$ decides \emph{which} interactions are admissible, and the inverse temperature $\beta$ decides \emph{how sharply} the policy prefers the best admissible one. The threshold is a statement about utility, not about budget; a budget enters through a price, since in the master objective, i.e., Eq.~(\ref{eq:master}), a multiplier $\lambda$ on cost turns the threshold into the cost-dependent rule $\hat u(i)\ge\lambda\calC(i)$, and the adaptive update Eq.~(\ref{eq:threshold-update}) is one way of setting $\kappa_t$ so that a target admission rate is met. Write $\hat u_t(i)=\hat u(i\mid\calH_{0:t})$, and let
\begin{equation}
  \calI_t^{\mathrm{adm}}=\{i\in\calI:\hat u_t(i)\ge\kappa_t\},
  \qquad
  \hat u_t^{\max}=\max_{i\in\calI}\hat u_t(i),
  \qquad
  \calI_t^\star=\argmax_{i\in\calI}\hat u_t(i)
  \label{eq:ci-admissible}
\end{equation}
denote the admissible set, the largest predicted utility, and the set of maximizers (assumed finite and attained).

\begin{proposition}[Degeneration to the greedy argmax]
\label{prop:ci-argmax}
Let $\calI$ be finite and $\kappa_t\le\hat u_t^{\max}$, so that every maximizer is admissible. As $\beta\to\infty$,
\begin{equation}
  \pi_{\mathrm{CI}}(i_t\mid\calH_{0:t})
  \;\longrightarrow\;
  \frac{\indicator{i_t\in\calI_t^\star}}{|\calI_t^\star|},
  \label{eq:ci-argmax-limit}
\end{equation}
the greedy rule $i_t=\argmax_{i}\hat u(i\mid\calH_{0:t})$ with ties broken uniformly. If the maximizer $i_t^\star$ is unique, with gap $\Delta_t>0$ to the best other admissible candidate, then
\begin{equation}
  1-\pi_{\mathrm{CI}}(i_t^\star\mid\calH_{0:t})
  \;\le\;
  \bigl(|\calI_t^{\mathrm{adm}}|-1\bigr)\,e^{-\beta\Delta_t},
  \label{eq:ci-argmax-rate}
\end{equation}
so, when $|\calI_t^{\mathrm{adm}}|\ge2$, $\beta\ge\Delta_t^{-1}\ln\bigl((|\calI_t^{\mathrm{adm}}|-1)/\delta\bigr)$ selects $i_t^\star$ with probability at least $1-\delta$; when $|\calI_t^{\mathrm{adm}}|=1$ it is selected with probability one for every $\beta$.
\end{proposition}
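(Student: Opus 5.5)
The plan is to work directly with the explicit Gibbs form of Eq.~(\ref{eq:ci-policy}) restricted to the admissible set, and to factor out the largest exponent. Since $\calI$ is finite and $\kappa_t\le\hat u_t^{\max}$, every maximizer lies in $\calI_t^{\mathrm{adm}}$. Hence $Z_t\ge e^{\beta\hat u_t^{\max}}>0$, the abstention branch never fires, and for every admissible $i$ we can write
\begin{equation*}
  \pi_{\mathrm{CI}}(i\mid\calH_{0:t})
  =\frac{e^{-\beta(\hat u_t^{\max}-\hat u_t(i))}}{|\calI_t^\star|+\sum_{j\in\calI_t^{\mathrm{adm}}\setminus\calI_t^\star}e^{-\beta(\hat u_t^{\max}-\hat u_t(j))}},
\end{equation*}
while inadmissible $i$ have probability zero for every $\beta$.

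For the limit in Eq.~(\ref{eq:ci-argmax-limit}), note that each non-maximizing admissible $j$ has a strictly positive gap $\hat u_t^{\max}-\hat u_t(j)$. Because the set is finite, these gaps are bounded below by some positive constant. Every such exponential term therefore tends to $0$ as $\beta\to\infty$, and the denominator tends to $|\calI_t^\star|$. This gives probability $1/|\calI_t^\star|$ on maximizers and $0$ elsewhere, which is uniform tie-breaking among the argmax.

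For the rate, take $i_t^\star$ unique and write $p=\pi_{\mathrm{CI}}(i_t^\star\mid\calH_{0:t})$. Set $S=\sum_{j\ne i_t^\star,\,j\in\calI_t^{\mathrm{adm}}}e^{-\beta(\hat u_t^{\max}-\hat u_t(j))}$, so that $1-p=S/(1+S)\le S$. Each of the $|\calI_t^{\mathrm{adm}}|-1$ terms in $S$ has exponent gap at least $\Delta_t$, so $S\le(|\calI_t^{\mathrm{adm}}|-1)e^{-\beta\Delta_t}$, which is Eq.~(\ref{eq:ci-argmax-rate}). The confidence statement follows by requiring the right-hand side to be at most $\delta$ and solving for $\beta$. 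This requires $|\calI_t^{\mathrm{adm}}|\ge2$ so that the logarithm is defined. When $|\calI_t^{\mathrm{adm}}|=1$, the admissible set is $\{i_t^\star\}$ and $p=1$ identically.

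No step is a genuine obstacle. The only point needing care is the definition of $\Delta_t$. The gap must be measured to the best other \emph{admissible} candidate rather than to all of $\calI$: inadmissible candidates carry zero mass and do not enter $S$. Measuring the gap against admissible candidates is exactly what makes the count $|\calI_t^{\mathrm{adm}}|-1$ appear in the bound, rather than $|\calI|-1$.
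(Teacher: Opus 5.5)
Your proposal is correct and follows the paper's proof essentially step for step: you restrict to the admissible set, divide through by $e^{\beta\hat u_t^{\max}}$ so the denominator tends to $|\calI_t^\star|$, and bound $1-\pi_{\mathrm{CI}}(i_t^\star\mid\calH_{0:t})=S/(1+S)\le S\le(|\calI_t^{\mathrm{adm}}|-1)e^{-\beta\Delta_t}$ before solving for $\beta$. Your observation that only admissible candidates enter $S$, which is what produces the count $|\calI_t^{\mathrm{adm}}|-1$, is likewise the point made in the paper's Step~5.
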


\noindent Any $\kappa_t\le\hat u_t^{\max}$ gives the same greedy limit, whereas $\kappa_t>\hat u_t^{\max}$ empties the admissible set and triggers abstention; a small threshold alone, at finite $\beta$, yields not the argmax but the unconstrained Boltzmann policy $\pi\propto e^{\beta\hat u_t}$, and as $\beta\to0$ the policy becomes a pure filter that admits candidates without ranking them. 

\begin{figure}[t]
\centering
\includegraphics[width=0.92\linewidth]{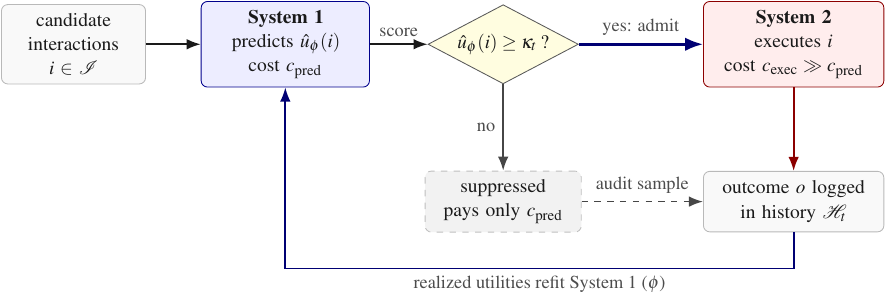}
\caption{\textbf{The predict-then-gate mechanism.} System~1 scores every candidate at cost $c_{\text{pred}}$, and the threshold $\kappa_t$ admits to System~2 only those predicted to be informative. Suppressed candidates cost only $c_{\text{pred}}$; executed outcomes, together with a small audit sample of suppressed candidates, are logged and refit System~1 (\cref{prop:amortized-screening}).}
\label{fig:flow}
\end{figure}

\begin{definition}[Interaction Utility]
\label{def:utility}
As a running example we take the expected Bayesian surprise \citep{itti2005bayesian}: the \emph{utility} of interaction $i$ at executive state $\theta_t$ is the expected posterior-to-prior divergence it induces,
\begin{equation}
  u(i;\theta_t) \;\triangleq\; \E\!\left[\,\KL{p(\theta\mid\calH_{0:t},i,o)}{p(\theta\mid\calH_{0:t})} \;\big|\; \calH_{0:t}, i\,\right],
  \label{eq:utility-primitive}
\end{equation}
with the expectation over the predictive distribution of the outcome $o$. It answers: \emph{if I execute this interaction and observe its outcome, how much will my belief change?} Interactions with predictable outcomes score near zero; utility measures the magnitude of the belief update, not difficulty or novelty. It is the conditional mutual information $I(\theta;o\mid i,\calH_{0:t})$, which, when the relevant entropies are finite, equals the expected reduction in the entropy of $\theta$. The predictor $\hat u$ in Eq.~(\ref{eq:ci-policy}) estimates whatever utility fills this slot; information gain is one possible choice of $u$, not the universal definition of usefulness. Different contexts require different utilities:
\begin{equation}
  u(i) =
  \begin{cases}
    I(\theta;o\mid i) & \text{(exploration / experiment design),}\\[2pt]
    \E[\Delta J\mid i] & \text{(expected improvement in curriculum learning),}\\[2pt]
    \mathrm{VoI}(i)=\E[\Delta V\mid i] & \text{(decision making: value of information),}\\[2pt]
    \calR_{\text{adapt}}(i;\theta_t) & \text{(robust domain randomization / meta learning risk)}
  \end{cases}
  \label{eq:utility-typed}
\end{equation}
Their maximizers generally vary case by case. Eq.~(\ref{eq:utility-primitive}) coincides with the first line (\cref{rem:invariance}) and, combined with a cost, underlies the third; the fourth encodes adaptation risk, and the second is the objective that predictive task samplers estimate directly. Two conventions keep these lines consistent with the master objective of \cref{sec:master}. First, every $u$ is a \emph{gross} utility: cost never enters $u$ and is charged once, through the ledger $\calC(i)$ and its price $\lambda$, so the value of information becomes net of cost only through that charge. Second, the fourth line is an \emph{acquisition score}, not the training objective: a robust learner \emph{minimizes} the tail risk $\CVaR_\nu$ of Eq.~(\ref{eq:cvar}), and to do so it \emph{prioritizes} the tasks with the largest predicted adaptation risk $\calR_{\text{adapt}}$ of Eq.~(\ref{eq:adaptation-risk}), the score maximized in Eq.~(\ref{eq:phys-acquisition}). We do not merge these utilities. What they share is the predict-then-gate \emph{architecture}, and the efficiency results below (screening gain, regret bounds, rescaling) hold for any of them, given the predictor's ranking quality and the cost asymmetry.
\end{definition}

\begin{remark}[Representation invariance]
\label{rem:invariance}
The KL divergence is invariant under invertible reparameterizations of $\theta$, and $u(i;\theta_t)=I(\theta;o\mid i,\calH_{0:t})$ is also invariant under invertible transformations of $o$. Hence the utility, and the IIE of Eq.~(\ref{eq:IIE}), do not depend on the chosen coordinates.
\end{remark}

\begin{remark}[Scope of the utility]
\label{rem:utility-scope}
Eq.~(\ref{eq:utility-primitive}) measures belief change, not task success, descent direction, or robustness, and coincides with each only under conditions:
\emph{(a) inference success}, only under uniform outcome preferences (\cref{prop:active-inference});
\emph{(b) the post-training update direction}, only through Fisher (natural-gradient) preconditioning, so an uncertain but noisy interaction need not help \citep{qu2026moppps,zou2026uds};
\emph{(c) robust adaptation}, only when evaluated on the tail, as in the CVaR adaptation risk of Eqs.~(\ref{eq:adaptation-risk}) and~(\ref{eq:cvar}) in \cref{app:acquisition} \citep{qu2025fast,wang2025modelpredictivetasksampling}.
Conversely, an interaction that barely moves the posterior can still be valuable: it may confirm a belief, test robustness, improve calibration, avert a rare failure, or inform another agent. Information gain, learning progress, decision value and task improvement are therefore distinct quantities, and conditional interaction is always relative to whichever one is declared.
\end{remark}

\subsection{The Interaction Information Efficiency Metric}

\begin{definition}[Interaction Information Efficiency]
For a stream of $T$ screened candidates $i_0,\dots,i_{T-1}$, the \emph{Interaction Information Efficiency} (IIE) is the utility obtained per unit of system-level cost,
\begin{equation}
  \mathrm{IIE}_T
  \;=\; \frac{%
      \sum_{t=0}^{T-1}\,
        \mathbb{I}(i_t)\,\indicator{i_t \text{ executed}}
    }{%
      \sum_{t=0}^{T-1}\,
        \bigl(c_{\text{pred}}(i_t)+\calC(i_t)\,\indicator{i_t \text{ executed}}\bigr)
    },
  \label{eq:IIE}
\end{equation}
where $\mathbb{I}(i_t)=u(i_t;\theta_t)$ is the declared utility (the expected Bayesian surprise of Eq.~(\ref{eq:utility-primitive}) in the canonical case), $\calC(i_t)$ the execution cost, and $c_{\text{pred}}(i_t)$ the cost of screening the candidate, paid whether or not it is executed (zero for unconditional interaction); it is the full per-candidate screening cost, encoding plus scoring, which \cref{prop:amortized-screening} splits as $c_{\text{screen}}=c_{\text{embed}}+c_{\text{score}}$. Predictor refitting and audit executions are amortized system costs, reported separately (\cref{sec:amortization}). Because a ratio can be raised by obtaining very little at very low cost, IIE is always reported together with the \emph{yield}, the utility obtained per screened candidate,
\begin{equation}
  Y_T \;=\; \frac{1}{T}\sum_{t=0}^{T-1}\mathbb{I}(i_t)\,\indicator{i_t \text{ executed}}.
  \label{eq:yield}
\end{equation}
Both are empirical estimates, over a finite stream, of population quantities for candidates $i\sim\calD$; for the IIE,
\begin{equation}
  \mathrm{IIE}_{\mathrm{pop}}
  \;=\;\frac{\E\bigl[\mathbb{I}(i)\,\indicator{i\text{ executed}}\bigr]}{\E\bigl[c_{\text{pred}}(i)+\calC(i)\,\indicator{i\text{ executed}}\bigr]},
  \label{eq:IIE-pop}
\end{equation}
and $\mathrm{IIE}_T\to\mathrm{IIE}_{\mathrm{pop}}$ almost surely for an i.i.d.\ stream, by the strong law of large numbers applied to numerator and denominator. A ratio of random sums is not the ratio of their expectations at finite $T$, so the results below are stated for $\mathrm{IIE}_{\mathrm{pop}}$; they hold exactly for a fixed pool when fractions and class means are taken over that pool.
\end{definition}

\paragraph{Notation: $\rho_u$ in general, $\rho_{\mathrm{info}}$ as a special case.}
Every proposition, theorem and corollary below is stated for a general declared utility $u$ and its \emph{density of utility-relevant interactions} $\rho_u$, the fraction of candidates whose true utility clears a floor $u_{\min}$ fixed in advance (Eq.~(\ref{eq:rho-operational})); candidates below the floor are called \emph{dead}. The informative interaction density $\rho_{\mathrm{info}}$ is the special case in which $u$ is the expected information gain of Eq.~(\ref{eq:utility-primitive}): there ``utility-relevant'' reads ``informative'', $\mathbb{I}$ is the expected Bayesian surprise, and utilities are measured in nats. No result below relies on that choice unless it says so.

Partition the candidate pool into the utility-relevant fraction $\rho_u$, with mean utility $\bar{\mathbb{I}}_{\text{rel}}$ and mean cost $\bar\calC_{\text{rel}}$, and the dead fraction $1-\rho_u$, with mean utility $\bar{\mathbb{I}}_{\text{dead}}$ and mean cost $\bar\calC_{\text{dead}}$. Pool averages are $\bar{\mathbb{I}}=\rho_u\bar{\mathbb{I}}_{\text{rel}}+(1-\rho_u)\bar{\mathbb{I}}_{\text{dead}}$ and $\bar\calC=\rho_u\bar\calC_{\text{rel}}+(1-\rho_u)\bar\calC_{\text{dead}}$, and unconditional interaction attains $\mathrm{IIE}^{\text{unc}}=\bar{\mathbb{I}}/\bar\calC$.

\begin{proposition}[IIE Gain]
\label{prop:iie-gain}
Assume dead candidates carry no utility ($\bar{\mathbb{I}}_{\text{dead}}=0$, as when $u_{\min}\to0^+$ for a non-negative utility), and use the system-level ledger of Eq.~(\ref{eq:IIE}), in its population form Eq.~(\ref{eq:IIE-pop}), which charges a predictor cost $c_{\text{pred}}$ to every screened candidate.
\begin{enumerate}[leftmargin=1.6em]
\item \emph{(Perfect, free predictor.)} If exactly the utility-relevant candidates are executed and $c_{\text{pred}}=0$,
\begin{equation}
  \frac{\mathrm{IIE}^{\mathrm{CI}}}{\mathrm{IIE}^{\text{unc}}}
  \;=\;
  \frac{\bar{\mathbb{I}}_{\text{rel}}/\bar{\calC}_{\text{rel}}}{\bar{\mathbb{I}}/\bar{\calC}}
  \;=\;
  \frac{1}{\rho_u}\cdot\frac{\bar{\calC}}{\bar{\calC}_{\text{rel}}}
  \;=\;
  1+\frac{1-\rho_u}{\rho_u}\cdot\frac{\bar\calC_{\text{dead}}}{\bar\calC_{\text{rel}}},
  \label{eq:IIE-ratio}
\end{equation}
which equals $1/\rho_u$ when $\bar\calC_{\text{dead}}=\bar\calC_{\text{rel}}$ and exceeds it if and only if $\bar\calC_{\text{dead}}>\bar\calC_{\text{rel}}$.
\item \emph{(Imperfect, costly predictor.)} With class-conditional false-positive rate $\mathrm{fp}$ (the fraction of dead candidates admitted) and false-negative rate $\mathrm{fn}$ (the fraction of utility-relevant candidates suppressed),
\begin{equation}
  \frac{\mathrm{IIE}^{\mathrm{CI}}}{\mathrm{IIE}^{\text{unc}}}
  \;=\;
  \frac{\bar{\mathbb{I}}_{\text{rel}}/\bar{\calC}_{\text{rel}}}{\bar{\mathbb{I}}/\bar{\calC}}
  \cdot
  \left(1+\frac{\mathrm{fp}\,(1-\rho_u)\,\bar\calC_{\text{dead}}+c_{\text{pred}}}{(1-\mathrm{fn})\,\rho_u\,\bar\calC_{\text{rel}}}\right)^{-1},
  \label{eq:IIE-ratio-imperfect}
\end{equation}
which is non-increasing in each of $\mathrm{fp}$, $\mathrm{fn}$ and $c_{\text{pred}}$, strictly decreasing in any of them whose coefficient is positive, and recovers Eq.~(\ref{eq:IIE-ratio}) as $\mathrm{fp},c_{\text{pred}}\to0$.
\end{enumerate}
\end{proposition}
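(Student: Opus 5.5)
The plan is to compute the population ledger of Eq.~(\ref{eq:IIE-pop}) directly, splitting every expectation over the two classes of the partition (utility-relevant with probability $\rho_u$, dead with probability $1-\rho_u$). The unconditional baseline is already recorded above: with $c_{\text{pred}}=0$ and every candidate executed, $\mathrm{IIE}^{\text{unc}}=\bar{\mathbb{I}}/\bar\calC$. So everything reduces to evaluating the numerator and denominator of $\mathrm{IIE}^{\mathrm{CI}}_{\mathrm{pop}}$ under the gate.

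I would prove part~2 first and obtain part~1 as a specialization. The error rates $\mathrm{fp}$ and $\mathrm{fn}$ only give the \emph{fraction} of each class that is admitted. To turn them into utility and cost sums, I will assume that admission is independent of $(\mathbb{I}(i),\calC(i))$ within each class. This means the admitted relevant candidates have mean utility $\bar{\mathbb{I}}_{\text{rel}}$ and mean cost $\bar\calC_{\text{rel}}$, and the admitted dead ones have mean cost $\bar\calC_{\text{dead}}$. I expect this to be the main obstacle: the statement does not spell it out, but Eq.~(\ref{eq:IIE-ratio-imperfect}) is false without it. A gate that preferentially admits cheap or high-utility relevant candidates would do better than the formula. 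I will state the assumption explicitly as the meaning of ``class-conditional'' rates, and remark that correlated errors give the formula with the admitted-class means in place of the pool means.

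Under this assumption the computation is short. Because $\bar{\mathbb{I}}_{\text{dead}}=0$, the numerator is $\E[\mathbb{I}\,\indicator{\text{exec}}]=(1-\mathrm{fn})\rho_u\bar{\mathbb{I}}_{\text{rel}}$. The denominator is $c_{\text{pred}}+(1-\mathrm{fn})\rho_u\bar\calC_{\text{rel}}+\mathrm{fp}(1-\rho_u)\bar\calC_{\text{dead}}$, where $c_{\text{pred}}$ denotes the mean screening cost. Factoring $(1-\mathrm{fn})\rho_u\bar\calC_{\text{rel}}$ out of the denominator, which requires $\rho_u>0$, $\mathrm{fn}<1$ and $\bar\calC_{\text{rel}}>0$, gives
\[
\mathrm{IIE}^{\mathrm{CI}}=\frac{\bar{\mathbb{I}}_{\text{rel}}}{\bar\calC_{\text{rel}}}\Bigl(1+\frac{\mathrm{fp}(1-\rho_u)\bar\calC_{\text{dead}}+c_{\text{pred}}}{(1-\mathrm{fn})\rho_u\bar\calC_{\text{rel}}}\Bigr)^{-1}.
\]
Dividing by $\bar{\mathbb{I}}/\bar\calC$ yields Eq.~(\ref{eq:IIE-ratio-imperfect}).

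Monotonicity is read off the bracketed term. It is non-decreasing in $\mathrm{fp}$ and $c_{\text{pred}}$, since their coefficients $(1-\rho_u)\bar\calC_{\text{dead}}$ and $1$ are non-negative. It is non-decreasing in $\mathrm{fn}$ because $1-\mathrm{fn}$ sits in the denominator of a non-negative numerator. It is strictly increasing whenever the relevant coefficient, or for $\mathrm{fn}$ the numerator, is positive. Hence the ratio has the opposite monotonicity.

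For part~1, set $\mathrm{fp}=\mathrm{fn}=c_{\text{pred}}=0$, so the bracket equals $1$. Then substitute $\bar{\mathbb{I}}=\rho_u\bar{\mathbb{I}}_{\text{rel}}$ to get the middle expression $\frac{1}{\rho_u}\cdot\bar\calC/\bar\calC_{\text{rel}}$. Expanding $\bar\calC=\rho_u\bar\calC_{\text{rel}}+(1-\rho_u)\bar\calC_{\text{dead}}$ gives the last expression in Eq.~(\ref{eq:IIE-ratio}). The comparison with $1/\rho_u$ follows because the correction term $\frac{1-\rho_u}{\rho_u}\bigl(\bar\calC_{\text{dead}}/\bar\calC_{\text{rel}}-1\bigr)$ has the sign of $\bar\calC_{\text{dead}}-\bar\calC_{\text{rel}}$, assuming $\rho_u<1$. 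Finally, since Eq.~(\ref{eq:IIE-ratio}) is exactly Eq.~(\ref{eq:IIE-ratio-imperfect}) with the bracket equal to one, taking the limit $\mathrm{fp},c_{\text{pred}}\to0$ recovers it, by continuity and with $\mathrm{fn}=0$.
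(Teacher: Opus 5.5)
Your proposal is correct and follows the paper's proof essentially step for step: the same class-split accounting of the population numerator and denominator, factoring out $(1-\mathrm{fn})\rho_u\bar\calC_{\text{rel}}$, reading monotonicity off the bracket, and obtaining part~1 by setting the bracket to one. Your explicit within-class independence assumption is exactly what the paper uses tacitly when it charges admitted candidates the class means $\bar{\mathbb{I}}_{\text{rel}}$, $\bar\calC_{\text{rel}}$, $\bar\calC_{\text{dead}}$; the paper flags this only for costs, in \cref{rem:cost-conditioned}. One small sharpening: recovering Eq.~(\ref{eq:IIE-ratio}) as $\mathrm{fp},c_{\text{pred}}\to0$ does not require $\mathrm{fn}=0$, because the bracket's numerator vanishes for any fixed $\mathrm{fn}<1$.
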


False negatives do not dilute the IIE \emph{ratio} directly, since suppressed interactions are simply not paid for; they enter by spreading screening and false-positive costs over fewer utility-relevant executions and, more importantly, by reducing the yield Eq.~(\ref{eq:yield}) by the factor $1-\mathrm{fn}$. Efficiency and yield must therefore be reported together: an aggressive gate can raise IIE while starving the learner of signal. With $\rho_u\ll1$, small error rates, and $c_{\text{pred}}\ll\bar\calC_{\text{rel}}$, the realized multiplier remains substantial, consistent with the several-fold rollout savings reported for predictive prompt sampling \citep{qu2026moppps}, but these are ideal values, not guarantees. Under equal execution costs and perfect screening, $1/\rho_u$ is the benchmark for the IIE gain; with heterogeneous costs the ideal gain is Eq.~(\ref{eq:IIE-ratio}) itself, which exceeds $1/\rho_u$ when dead interactions cost more than relevant ones. The zero-utility assumption on dead candidates matters once $u_{\min}>0$: if they retain a mean utility $\bar{\mathbb{I}}_{\text{dead}}>0$, the first equality of Eq.~(\ref{eq:IIE-ratio}) still holds, but the ideal ratio becomes $\frac{1}{\rho_u}\frac{\bar\calC}{\bar\calC_{\text{rel}}}\bigl(1-(1-\rho_u)\bar{\mathbb{I}}_{\text{dead}}/\bar{\mathbb{I}}\bigr)$, so the closed forms of Eq.~(\ref{eq:IIE-ratio}) become upper bounds (\cref{app:iie-gain}), and $1/\rho_u$ bounds the gain only when, in addition, $\bar\calC_{\text{dead}}\le\bar\calC_{\text{rel}}$.

\begin{remark}[Cost-correlated errors]
\label{rem:cost-conditioned}
Eq.~(\ref{eq:IIE-ratio-imperfect}) assumes error rates independent of cost. If errors concentrate on expensive interactions, the false-positive cost becomes $\int\mathrm{fp}(\calC)\,\calC\,dF_{\text{dead}}(\calC)>\mathrm{fp}\,\bar\calC_{\text{dead}}$, so validation should report cost-conditioned rates.
\end{remark}

\begin{remark}[Identifiability of $\rho_u$]
\label{rem:identifiability}
Fitting $\rho_u$ to a loss curve is circular, since $\calL=a(\rho(\calC)\calC)^{-\alpha'}$ with $\rho(\calC)=\calC^{s}$ fits any exponent. We therefore measure it independently: for a declared utility $u$ and a threshold $u_{\min}>0$ fixed \emph{a priori},
\begin{equation}
  \rho_u(\theta_t)
  \;=\;
  \Pr_{i\sim\calD}\!\bigl[\,u(i;\theta_t)\ge u_{\min}\,\bigr],
  \label{eq:rho-operational}
\end{equation}
with $u$ the \emph{true} utility estimated from executed interactions, not the predictor's $\hat u_\phi$; $\rho_{\mathrm{info}}$ denotes $\rho_u$ for the information-gain utility, and densities under different utilities are not comparable. In RLVR the natural declared utility is the probability that a group of $G$ rollouts has mixed rewards, $u_{\mathrm{adv}}(\tau;\theta_t)=1-p_\tau^G-(1-p_\tau)^G$ with $p_\tau$ the current pass rate; with $u_{\min}\to0^+$, $\rho_u$ becomes $\rho_{\mathrm{adv}}$, the fraction of prompts that can yield a non-zero GRPO advantage. This is a proxy for useful policy-gradient signal, not for information gain: an all-fail group still informs a posterior over the prompt's difficulty, and a mixed group can give a noisy update, so $\rho_{\mathrm{adv}}$ and $\rho_{\mathrm{info}}$ are distinct densities. In the physical regime $\rho_u$ is the fraction of tasks whose adaptation risk exceeds $u_{\min}$. Because $\rho_u$ depends on $(u,u_{\min})$, the meaningful object is its relative decay $\rho_u(\theta_t)/\rho_u(\theta_0)$. We propose its agreement across thresholds and across proxies such as the expected gradient norm or the learnability score $\partial_t\E[r\mid\tau,\theta_t]$ as a robustness diagnostic, not a theoretical invariance: tail probabilities at different thresholds need not decay proportionally, and gradient norm and learning progress are not equivalent. Measurement also needs a protocol. Because the gate censors what is observed (\cref{sec:selective-supervision}), $\rho_u$ is estimated on an audit pool drawn from $\calD$ independently of the gate, with repeated rollouts per candidate, confidence intervals, and re-estimation as $\theta_t$ changes. 
\end{remark}

\subsection{A Master Objective and Its Special Cases}
\label{sec:master}

Conditional interaction provides a common predict-then-gate \emph{architecture} for online prompt selection in LLM post-training, active domain randomization for Sim2Real transfer, and adaptive experimental design in AI for science. 
The unification is one of architecture, not of objective. 
What elevates the family from heuristics to a scaling lens is (i) \emph{suppression as a first-class objective}, (ii) \emph{predictive amortization}, i.e., estimating utility before execution, and (iii) \emph{generality} across interaction types, cost structures, policy representations, and reward topologies. 
On its own, this is a family resemblance; to formalize the claim, we state one optimization problem and obtain each technique by instantiating its variables, including the utility that fills the gating slot.

\begin{definition}[Master Conditional-Interaction Objective]
\label{def:master}
Let $\calI$ be a finite menu of candidates with predicted utility $\hat u(i)$ and cost $\calC(i)>0$, and let $x(i)\in[0,1]$ be the probability of executing candidate $i$. The master objective maximizes expected executed utility under a cost budget $\Gamma$:
\begin{equation}
  x^\star
  \;=\;
  \argmax_{x\in[0,1]^{\calI}}\;
  \sum_{i\in\calI} x(i)\,\hat u(i)
  \qquad\text{s.t.}\qquad
  \sum_{i\in\calI} x(i)\,\calC(i)\le \Gamma.
  \label{eq:master}
\end{equation}
This is a fractional knapsack. For a multiplier $\lambda\ge0$ its Lagrangian $\sum_i x(i)\bigl(\hat u(i)-\lambda\calC(i)\bigr)$ is maximized by the threshold rule $x^\star(i)=\indicator{\hat u(i)\ge\lambda\calC(i)}$, i.e.\ by admitting candidates in order of \emph{value per unit cost} $\hat u(i)/\calC(i)$, with $\lambda$ fixed by the budget (at most one candidate is fractional at the optimum). The threshold rule is the gate of Eq.~(\ref{eq:ci-policy}) with the cost-dependent threshold $\lambda\calC(i)$, which reduces to a common $\kappa_t=\lambda\bar\calC$ when costs are equal. Adding an entropy regularizer $\beta^{-1}\sum_i H_b(x(i))$ softens the gate into $x^\star(i)=\sigma\bigl(\beta(\hat u(i)-\lambda\calC(i))\bigr)$, with $\sigma$ the logistic function, of which Eq.~(\ref{eq:ci-policy}) applied to the cost-adjusted score is the normalized sampling analogue. A cardinality constraint $\sum_i x(i)=B$ with integral $x$ gives the top-$B$ batch problem, which differs from independent thresholding only when utilities interact (redundancy among co-selected candidates), captured by the diversity term of Eq.~(\ref{eq:subset-selection}).
\end{definition}

\paragraph{Gross utility, net value, and three selection rules.} With the cost-adjusted (net) score
\begin{equation}
  \tilde u(i)\;=\;\hat u(i)-\lambda\,\calC(i),
  \label{eq:net-score}
\end{equation}
three rules must be kept apart when costs differ. (a) The \emph{budgeted batch} rule of Eq.~(\ref{eq:master}) admits every candidate with $\tilde u(i)\ge0$, i.e.\ with value per unit cost $\hat u(i)/\calC(i)\ge\lambda$. (b) The \emph{single-draw net-value} rule selects $\argmax_i\tilde u(i)$, as in Eq.~(\ref{eq:net-utility}). (c) The policy of Eq.~(\ref{eq:ci-policy}) applied to the raw score $\hat u$ selects $\argmax_i\hat u(i)$ in its greedy limit (\cref{prop:ci-argmax}). When costs are equal the three rank candidates identically, by $\hat u$, although (a) selects a set and (b) and (c) a single candidate. When they differ, (c) can contradict (a) and (b): the candidate with the largest raw utility need not have the largest net value or value per unit cost. Applying Eq.~(\ref{eq:ci-policy}) to $\tilde u$ with $\kappa_t=0$ removes the conflict: its admissible set is exactly the solution of (a), up to the one fractional item, and its greedy limit is (b). We therefore use the raw score only when costs are equal, and $\tilde u$ otherwise.

\noindent The techniques discussed here arise by choosing $\calI$, $\hat u$, and $\calC$. In \textbf{agentic interaction selection} (\cref{sec:domains,sec:tts,sec:agentic}), $\calI$ is the task or prompt pool, the reasoning increments available for a query, or the tool/think/ask menu; $\hat u$ is adaptation risk, prompt informativeness, or expected reduction in answer uncertainty; and $\calC$ is rollout cost, inference compute, or a composite of latency, money, tokens, and risk, so that Eq.~(\ref{eq:master}) becomes risk-prioritized sampling, adaptive halting, or the agentic gate. In the \textbf{orchestration of System-2 executives} (\cref{sec:architecture}), System-1 interprets a task semantically before any of it is executed and can therefore also decide \emph{who} should execute it and \emph{how much} they may spend: $\calI$ is the pool of available System-2 models or agents together with their compute budgets, $\hat u$ their predicted contribution, and $\calC$ their inference cost, so Eq.~(\ref{eq:master}) is a joint routing and budget-allocation rule; token-level mixture-of-experts layers and dynamic depth are the same decision at finer granularity. Eq.~(\ref{eq:master}) captures the \emph{selection} structure these methods share, not the interchangeability of their learning dynamics or guarantees.

\subsection{Scope: When Gating Is and Is Not Justified}
\label{sec:exploration}

The principle applies only where interactions are individually costly, heterogeneous in informativeness, and predictable before execution by a model cheaper than execution itself. Gating has no guaranteed advantage when candidates are homogeneous in utility, when predicting their utility costs about as much as executing them, or when execution is cheap. Dense pretraining, minibatch stochastic gradient descent over independent, identically distributed data, and static supervised learning often come close to these conditions, but need not: a fixed corpus can hold examples whose value differs and is predictable from cheap features, which data-selection methods exploit. A gain observed there calls for an accounting of where it comes from, not a refutation of the principle. Within that regime, gating is justified when a predictor exists whose cost is well below execution cost and whose error $\varepsilon_\phi$ is small relative to the utility gaps it must resolve (\cref{thm:regret}).

It should be relaxed, executing uncertain or even predicted-uninformative interactions, when:
(1) the predictor is poorly calibrated or off-distribution, since its blind spots are where it wrongly suppresses;
(2) interactions carry option or exploratory value not captured by one-step utility;
(3) prediction costs about as much as execution, voiding \cref{prop:amortized-screening};
(4) false negatives are catastrophic (safety-critical control, irreversible actions); or
(5) the utility is misspecified, so that the predictor systematically suppresses the very interactions that would reveal the misspecification.

In such regimes the right object is an \emph{exploratory} value of information, e.g.\ an upper-confidence or Thompson criterion, replacing $\hat u$ in Eq.~(\ref{eq:master}) with $\hat u+c\,\sigma_\phi$, which recovers pure exploitation as $c\to0$ and uncertainty-seeking exploration, dominated by $\sigma_\phi$, as $c\to\infty$. That limit favours the most uncertain candidates rather than sampling the pool uniformly; undirected exploration needs a separate mixture with a uniform policy, as in \cref{sec:selective-supervision}; the uncertainty-aware acquisition functionals of Eqs.~(\ref{eq:phys-acquisition}) and~(\ref{eq:digital-acquisition}) (\cref{app:acquisition}) are the mechanism by which the principle avoids over-pruning. Conditional interaction is a statement about how to spend a \emph{trustworthy} informativeness signal, not a claim that one always exists. Regimes (1) and (5) are the most insidious, since the predictor's errors concentrate precisely where it has no data and the interactions that would reveal them are never executed; the audit sampling described after \cref{prop:amortized-screening} is the minimal safeguard, and \cref{sec:selective-supervision} treats this censoring formally.

\section{Co-Designing the Two Systems: Architecture and Co-Scaling}
\label{sec:architecture}

\Cref{sec:formal} specified what System-1 decides and how its decisions are scored. Once the predictor exists, three questions follow: how should it be trained, how accurate must it be, and how should its capacity scale with the executor? We address them in turn.

\subsection{The Two Systems as an Operator Decomposition}

Let $\theta$ and $\phi$ be the executive and predictive parameters. Define
\begin{align}
  \mathcal{T}_{\text{exec}}(\theta,\phi;\,\calH)
    &= \Bigl(\theta - \eta_\theta\,
       \E_{i\sim\pi_{\mathrm{CI}}(\cdot\mid\calH;\phi)}\!\left[
         \nabla_\theta\calL(\theta,i)\right] + \xi_\theta,\;\;\phi\Bigr),
    \label{eq:T-exec}\\
  \mathcal{T}_{\text{pred}}(\theta,\phi;\,\calH)
    &= \Bigl(\theta,\;\;\phi - \eta_\phi\,
       \nabla_\phi\,
       \E_{i\in\calH}\!\bigl[
         d(\hat{u}_\phi(i;\calH),\,u(i;\theta))\bigr] + \xi_\phi\Bigr),
    \label{eq:T-pred}
\end{align}
where $d$ is a strictly proper scoring rule (e.g.\ squared error or log loss) and $\xi_\theta,\xi_\phi$ are stochastic-gradient noise. $\mathcal{T}_{\text{exec}}$ improves the policy along gradients sampled through the gate; $\mathcal{T}_{\text{pred}}$ fits the predictor to realized utilities. Each operator maps the pair $(\theta,\phi)$ to a pair and updates one component, so their composition is well defined: System-1 is refitted first, and System-2 then steps along gradients sampled through the refitted gate.

\begin{definition}[Conditional-Interaction Operator Decomposition]
\label{def:operator-decomp}
A \emph{conditional-interaction architecture} is any learning system whose update can be written as
\begin{equation}
  (\theta_{t+1},\phi_{t+1})
  \;=\;
  \bigl(\mathcal{T}_{\text{exec}}\circ\mathcal{T}_{\text{pred}}\bigr)(\theta_t,\phi_t;\calH_{0:t}),
  \label{eq:operator-decomp}
\end{equation}
with $\pi_{\mathrm{CI}}$ satisfying \cref{def:ci-policy} for $\hat{u}=\hat{u}_\phi$.
\end{definition}

This decomposition is the load-bearing object of co-design: neither operator is defined without the other. It is not a specific algorithm but a class of algorithms, indexed by latent space, predictor family, and scoring rule, of which the predictive-sampling systems cited in \cref{sec:digital} are special cases. The two operators run on different clocks: System-1 ($\phi$) adapts quickly to realized utilities, while System-2 ($\theta$) improves slowly along the gated gradients. Because better predictions change which interactions are executed, and hence the supervision System-1 receives, one may worry that the coupled loop diverges; two-timescale stochastic approximation \citep{borkar2008} gives conditions under which such coupled updates converge, and MPTS trains the two systems jointly in this way in practice \citep{wang2025modelpredictivetasksampling}, but a convergence guarantee for the general architecture remains open.

\paragraph{Co-evolution: how the two systems learn together.}
Co-evolution concerns how the two systems learn together, $(\theta_t,\phi_t)\to(\theta_{t+1},\phi_{t+1})$, as when an agent's execution harness and the policy running inside it are evolved jointly \citep{chen2026harnessforge}, and is dynamical; co-scaling (\cref{sec:co-scaling}) concerns how System-1's optimal capacity responds to System-2's, and is a comparative-statics problem. As a heuristic reference point for the former, one may expect the predictor's error to decrease with the amount of executed history and the quality of supervision,
\begin{equation}
  \varepsilon_\phi^{(t)}
  \;\lesssim\;
  \frac{c(\theta_t)}{\sqrt{\abs{\calH_{0:t}}}},
  \label{eq:accuracy-growth}
\end{equation}
where $c(\theta_t)$ would decrease as a stronger policy yields cleaner outcomes, and a better predictor in turn makes selection sharper. We write the familiar $O(|\calH_{0:t}|^{-1/2})$ form, borrowed from independent, identically distributed sampling, only as a benchmark: under adaptive, censored collection the rate need not hold, because the predictor's training data are gathered by a changing policy and censored by the gate itself (\cref{sec:selective-supervision}), and a stronger System-2 can make utilities harder rather than easier to predict. Where the loop does work, it resembles self-play along the informativeness axis rather than adversarial reward, and the separation of timescales is what keeps it stable.

\paragraph{Ranking by the posterior mean: using a predictor of given quality.}
Let $U$ be the unknown utility of a candidate and $\hat U$ the predictor's output. Intuitively, the predictor should compress history into just enough information about $U$: too expressive a predictor wastes parameters and erodes the amortization benefit, while too coarse a predictor cannot discriminate informative from uninformative interactions. An information-bottleneck (IB) objective trades complexity against informativeness,
\begin{equation}
  \min_{q(\hat U\mid\calH)}\;
  \underbrace{I(\hat U;\,\calH)}_{\text{predictor complexity}}
  \;-\; \beta_{\mathrm{IB}}\,
  \underbrace{I(\hat U;\,U)}_{\text{prediction informativeness}}.
  \label{eq:IB}
\end{equation}
Whatever representation $\hat U$ is learned, the question of how to \emph{use} it has a clean answer.

\begin{proposition}[Rank by the Posterior Mean]
\label{prop:IB}
Assume equal costs, a budget of $B$ executions, and that each candidate's utility depends on the predictor outputs only through its own, $\E[U(i)\mid\hat U]=\E[U(i)\mid\hat U(i)]$. Then, among all selection rules that see only the predictor outputs $\hat U$, executing the $B$ candidates with the largest posterior mean $\E[U\mid\hat U]$ maximizes expected executed utility and hence the IIE. Setting $\kappa_t$ to the $B$-th largest posterior mean makes the admissible set of Eq.~(\ref{eq:ci-policy}) exactly that top-$B$ set, up to ties at the $B$-th value.
\end{proposition}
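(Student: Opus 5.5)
The plan is to reduce the claim to a pointwise (conditional) optimization over the observed predictor outputs and then apply the tower property. A selection rule that sees only $\hat U=(\hat U(i))_{i\in\calI}$ is a (possibly randomized) map $S(\hat U)$ into subsets of size at most $B$; randomization is independent of $U$ given $\hat U$. With equal costs the expected executed cost is fixed at $B\bar\calC$ whenever $B$ candidates are executed. So, provided every posterior mean is non-negative (true for the non-negative utilities of \cref{def:utility}, and assumed otherwise), maximizing expected executed utility is equivalent to maximizing the IIE numerator at a fixed denominator. The screening term $c_{\text{pred}}$ is paid for all candidates regardless of the rule, so it only shifts the denominator by a rule-independent constant. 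This reduction is the first step.

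Second, write
\[
\E\Bigl[\textstyle\sum_{i\in S(\hat U)}U(i)\Bigr]=\E\Bigl[\E\bigl[\textstyle\sum_{i\in S(\hat U)}U(i)\,\big|\,\hat U\bigr]\Bigr]=\E\Bigl[\textstyle\sum_{i\in S(\hat U)}m_i(\hat U)\Bigr],
\]
where $m_i(\hat U)=\E[U(i)\mid\hat U]$. Pulling $S(\hat U)$ out of the inner expectation is legitimate because $S$ is $\sigma(\hat U)$-measurable, with independent randomization handled by conditioning on it as well. The assumption $\E[U(i)\mid\hat U]=\E[U(i)\mid\hat U(i)]$ then turns $m_i$ into a function $g_i(\hat U(i))$ of the candidate's own output. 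This is exactly the quantity the statement calls the posterior mean $\E[U\mid\hat U]$ used for ranking, so the rule is computable candidate by candidate.

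Third, maximize inside the outer expectation for each fixed realization of $\hat U$. Maximizing $\sum_{i\in S}g_i$ over $|S|\le B$ is attained by the $B$ largest values; this is the elementary exchange argument, or equivalently the equal-cost case of the fractional knapsack in \cref{def:master}, whose optimum is integral here. A pointwise maximizer maximizes the expectation, so the top-$B$ rule dominates every rule based on $\hat U$. Finally, define $\kappa_t$ as the $B$-th largest value of $g_i(\hat U(i))$ and apply the gate of \cref{def:ci-policy} to the score $\hat u=g(\hat U)$. The admissible set $\{i:g_i\ge\kappa_t\}$ contains the top-$B$ set and coincides with it unless other candidates tie at $\kappa_t$, which gives the ``up to ties'' clause.

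The main obstacle is interpretive rather than technical. It lies in the identification of ``maximizes expected executed utility'' with ``maximizes IIE'', since IIE is a ratio and is stated in population form in Eq.~(\ref{eq:IIE-pop}). I would handle it by fixing the number of executions at $B$, using equal costs so that the denominator is rule-independent, and using non-negative posterior means to justify executing all $B$ slots.
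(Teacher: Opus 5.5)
Your proposal is correct and follows the paper's proof essentially step for step. Both argue in the same order: a rule-independent IIE denominator under equal costs and exactly $B$ executions, then the tower property over $\hat U$, the locality assumption, a pointwise exchange argument, and finally thresholding at the $B$-th largest posterior mean, with ties handled as you describe.

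One caution concerns the non-negativity of the posterior means. It only justifies filling all $B$ slots for the expected executed utility (the yield). For the IIE ratio, what does the work is restricting to exactly $B$ executions, as the paper does. If $|S|\le B$ were allowed, executing fewer, higher-ranked candidates can raise the ratio.
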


\noindent The proposition is modest: it says nothing about which $\hat U$ to learn, which is what Eq.~(\ref{eq:IB}) addresses. Its practical content is that the calibrated mean of the predictor, not an uncalibrated score, is the right ranking signal.

\subsection{Co-Scaling: System-1's Size as System-2 Grows}
\label{sec:co-scaling}

Current practice sizes System-2 and treats any selector as an afterthought. Eq.~(\ref{eq:IIE-ratio-imperfect}) suggests this is incomplete: a larger System-1 lowers its error rates but raises $c_{\text{pred}}$, and where the balance lies depends on how expensive System-2 is. Let System-1 have a size $s$ from a totally ordered set $\calS_1$ (parameters, depth, or context), with error rates $\mathrm{fp}(s)$, $\mathrm{fn}(s)$ and scoring cost $c_{\text{pred}}(s)$, and let scaling System-2 multiply every execution cost by a factor $k\ge1$, so that $\bar\calC_{\text{rel}}\mapsto k\bar\calC_{\text{rel}}$ and $\bar\calC_{\text{dead}}\mapsto k\bar\calC_{\text{dead}}$.

\begin{proposition}[{Co-Scaling Under Fixed Error Curves}]
\label{prop:co-scaling}
Assume the setting of \cref{prop:iie-gain}(ii); that $\rho_u$, $\mathrm{fp}(s)$ and $\mathrm{fn}(s)<1$ do not depend on $k$; that $c_{\text{pred}}(s)/(1-\mathrm{fn}(s))$ is non-decreasing in $s$; and that the IIE ratio Eq.~(\ref{eq:IIE-ratio-imperfect}) attains its maximum over $\calS_1$ for every $k$, on a set $S^\star(k)$ of optimal System-1 sizes. Then (i) $S^\star$ is non-decreasing in $k$ in the strong set order: if $k'>k$, $s\in S^\star(k)$ and $s'\in S^\star(k')$, then $\max(s,s')\in S^\star(k')$ and $\min(s,s')\in S^\star(k)$, so the largest and the smallest optimal sizes, where they exist, are non-decreasing in $k$; and (ii) the maximal IIE ratio is non-decreasing in $k$.
\end{proposition}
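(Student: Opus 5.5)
The plan is to reduce the IIE ratio of \cref{prop:iie-gain}(ii) to a simple two-term objective in $(s,k)$ and then apply a monotone comparative-statics (Topkis-type) argument on the totally ordered set $\calS_1$.

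\emph{Reduction.} Substitute the scaled costs $k\bar\calC_{\text{rel}}$ and $k\bar\calC_{\text{dead}}$ into Eq.~(\ref{eq:IIE-ratio-imperfect}). The prefactor $(\bar{\mathbb{I}}_{\text{rel}}/\bar\calC_{\text{rel}})/(\bar{\mathbb{I}}/\bar\calC)$ is a ratio of ratios in which $k$ cancels, because $\bar\calC=\rho_u\bar\calC_{\text{rel}}+(1-\rho_u)\bar\calC_{\text{dead}}$ scales by $k$ as well. This prefactor also does not depend on $s$, so it is a positive constant $A$. Define
\begin{equation*}
  a(s)=\frac{\mathrm{fp}(s)(1-\rho_u)\bar\calC_{\text{dead}}}{(1-\mathrm{fn}(s))\rho_u\bar\calC_{\text{rel}}},
  \qquad
  b(s)=\frac{c_{\text{pred}}(s)}{(1-\mathrm{fn}(s))\rho_u\bar\calC_{\text{rel}}}.
\end{equation*}
Then the ratio equals $A\,(1+g(s,k))^{-1}$ with $g(s,k)=a(s)+b(s)/k$. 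Here $a$ is independent of $k$ because the factor $k$ cancels in the false-positive term. By hypothesis, $b\ge0$ and $b$ is non-decreasing in $s$. Since $x\mapsto A(1+x)^{-1}$ is strictly decreasing on $x\ge0$, $S^\star(k)=\argmin_s g(s,k)=\argmax_s f(s,k)$ with $f=-g$.

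\emph{Increasing differences and part (i).} For $s_1>s_2$ we have $f(s_1,k)-f(s_2,k)=-(a(s_1)-a(s_2))-(b(s_1)-b(s_2))/k$. Because $b(s_1)-b(s_2)\ge0$, this difference is non-decreasing in $k$.

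Now take $k'>k$, $s\in S^\star(k)$ and $s'\in S^\star(k')$. If $s\le s'$, the claim is immediate, since then $\max(s,s')=s'$ and $\min(s,s')=s$. Otherwise $s>s'$, and the two optimality conditions with increasing differences give
\begin{equation*}
  0\le f(s,k)-f(s',k)\le f(s,k')-f(s',k')\le 0.
\end{equation*}
All terms are therefore zero. Hence $s=\max(s,s')\in S^\star(k')$ and $s'=\min(s,s')\in S^\star(k)$, which is the strong set order. The statement about the largest and smallest optimal sizes follows directly: apply the strong set order to the largest element of $S^\star(k)$ and of $S^\star(k')$, and separately to the smallest elements.

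\emph{Part (ii).} For each fixed $s$, $g(s,k)$ is non-increasing in $k$ because $b(s)\ge0$. A pointwise non-increasing family has a non-increasing minimum, and that minimum is attained by assumption. Therefore the maximal ratio $A(1+\min_s g(s,k))^{-1}$ is non-decreasing in $k$.

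The main care point, rather than a real obstacle, is the reduction step. One must check that $\rho_u$, $\mathrm{fp}$ and $\mathrm{fn}$ are $k$-independent, as assumed, and that the prefactor $A$ is genuinely invariant under the uniform cost scaling. One must also note that the only place the hypothesis on $c_{\text{pred}}/(1-\mathrm{fn})$ enters is the sign of $b(s_1)-b(s_2)$, which yields the increasing differences; no monotonicity of $\mathrm{fp}$ is needed.
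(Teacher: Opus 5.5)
Your proposal is correct and follows essentially the paper's route. Both arguments reduce the ratio to $K/(1+a(s)+b(s)/k)$, obtain increasing differences from the monotonicity of $c_{\text{pred}}/(1-\mathrm{fn})$, use a Topkis-style exchange argument for the strong set order, and get (ii) from pointwise monotonicity in $k$. Your single chain $0\le f(s,k)-f(s',k)\le f(s,k')-f(s',k')\le 0$ is just a more compact way of obtaining both memberships at once than the paper's two separate inequalities.
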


\noindent The mechanism the proof isolates is that scaling System-2 dilutes the predictor's own cost, $c_{\text{pred}}/k$ relative to execution, but not the cost of its mistakes, which grows with System-2 itself, so accuracy becomes relatively cheaper to buy. Under these assumptions, a System-1 that is optimal for one System-2 remains optimal or becomes too small, never too large, as System-2's cost scale grows; the proposition does not say how large the resulting loss is. The result is comparative statics under fixed error curves: it does not show that growing System-1 helps an evolving executive, nor at what rate System-1 should grow, and both remain experimental hypotheses. Its assumptions do most of the work, and it does not imply the same monotonicity when scaling System-2 changes the prediction problem itself, as happens whenever a stronger executive changes the distribution of utilities, the separation between useful and dead interactions, or the cost structure. The statement is also qualitative: it fixes the direction of co-scaling, not its rate, since $\mathrm{fp}(s)$, $\mathrm{fn}(s)$ and $c_{\text{pred}}(s)$ are empirical curves. Measuring them, and hence a compute-optimal ratio between the two systems in the spirit of the parameter-to-data ratio of \citet{hoffmann2022training}, is the co-design counterpart of a scaling law (\cref{sec:implications}).

\subsection{Example of Architectural Implementation}

The decomposition is realized by four domain-agnostic components, summarized by four verbs (See Fig.~\ref{fig:architecture}). \emph{Represent:} candidates $\{i_k\}_{k=1}^n$ are embedded by $e:\calI\to\R^{d_e}$, e.g., frozen language-model encoders in the digital domain; graph or transformer encoders over the structure of the task's Markov decision process in the physical one. \emph{Predict (System-1):} a lightweight $f_\phi:\R^{d_e}\times\calH^*\to\R\times\R_+$ outputs $(\hat\mu_k,\hat\sigma_k)$, trained online on executed outcomes, with $d_\phi\ll d_\theta$ keeping $c_{\text{pred}}\ll c_{\text{exec}}$; System-1 may itself be a small language model when semantic understanding is required, but its defining property is computational role rather than model family. \emph{Gate:} the selection policy $\pi_{\mathrm{CI}}$ produces an execution mask $\mathbf{z}_t\in\{0,1\}^n$, and when the number executed is set by the threshold rather than fixed, $\kappa_t$ tracks a target admission rate $\rho^*$ by stochastic approximation,
\begin{equation}
  \kappa_{t+1}
  \;=\; \kappa_t
  + \alpha_\kappa\!\left(\frac{n_t^{\text{adm}}}{n}-\rho^*\right),
  \qquad
  n_t^{\text{adm}}=\bigl|\{i:\hat u_\phi(i)\ge\kappa_t\}\bigr|,
  \label{eq:threshold-update}
\end{equation}
which treats the threshold as a dual variable enforcing that rate; when costs differ, $\tilde u_\phi$ of Eq.~(\ref{eq:net-score}) replaces $\hat u_\phi$ in $n_t^{\text{adm}}$, and the equal-cost form is shown.  \emph{Learn:} executed interactions are verified, recorded in $\calH_t$, and used for one step of $\mathcal{T}_{\text{pred}}$, with a decay factor on old history trading stability against tracking non-stationarity. These four components are the operational form of \cref{def:operator-decomp}, and the ranking rule of \cref{prop:IB} tells the gate how to use the predictor's output.

\begin{figure}[t]
\centering
\includegraphics[width=\linewidth]{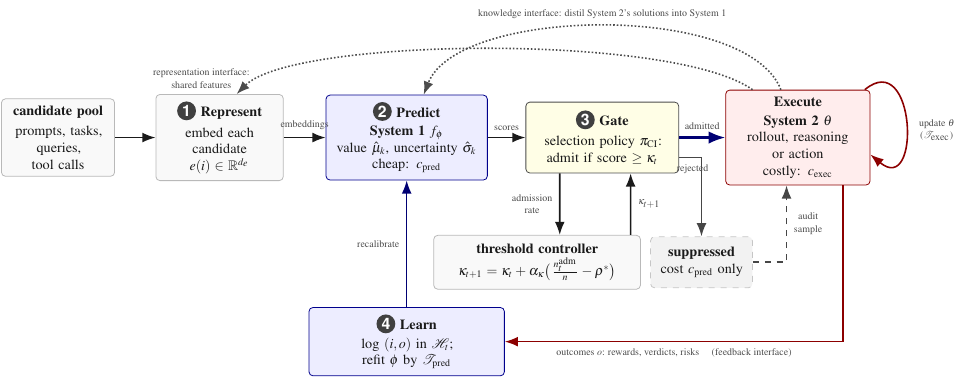}
\caption{\textbf{Architectural realization: represent, predict, gate, learn.} Every candidate is embedded \stepnum{1} and scored cheaply by System~1 \stepnum{2}. The gate \stepnum{3} admits only candidates whose score clears the threshold $\kappa_t$, which a controller adjusts to hold a target admission rate $\rho^*$ (Eq.~(\ref{eq:threshold-update})). Admitted candidates are executed by System~2; suppressed ones cost only $c_{\text{pred}}$, and a small audit sample of them is executed anyway so that the predictor's misses can be detected. Executed outcomes are logged and refit System~1 \stepnum{4}, while System~2 learns from the same outcomes. Dotted arcs mark the two other interfaces of co-design: what is distilled from System~2 into System~1, and which of System~2's representations System~1 may access.}
\label{fig:architecture}
\end{figure}

\paragraph{Three interfaces of co-design.}
Beyond the gate, co-design fixes three interfaces. The \emph{knowledge interface} determines what is distilled from System-2 into System-1: where System-2 repeatedly solves a class of candidates, the solution itself, not only its value, can be amortized into System-1, as expert iteration does \citep{anthony2017thinking}, so that System-1 answers routine cases outright and escalates the rest. The \emph{representation interface} determines what System-1 observes: using System-2's own features buys accuracy at the price of sharing its blind spots, whereas encoding the candidate independently keeps the two error profiles uncorrelated, and conditioning System-1 on context lets it adapt to a new task, scenario or user without retraining System-2. The \emph{feedback interface} determines which System-2 outcomes supervise System-1 --- realized rewards, gradients, verifier verdicts, or an audit sample of suppressed candidates --- and thereby fixes which of its own errors System-1 can ever observe (\cref{sec:selective-supervision}). Co-design is therefore not merely a size-matching problem. The same position also makes System-1 a natural place for oversight: because it sees every candidate before System-2 acts, cost, irreversibility and safety constraints (Eq.~(\ref{eq:agent-cost})) can be imposed before a capable executive commits.

\paragraph{Gates within gates: hierarchical extensions.}
The same gate can operate at several temporal and representational scales at once. For a hierarchy of levels $\ell\in\{1,\ldots,L_h\}$ (motor, skill, plan), each with its own predictor and threshold,
\begin{equation}
  \pi_{\mathrm{CI}}^{(\ell)}\!\bigl(i_t^{(\ell)}\mid\calH_{0:t}^{(\ell)}\bigr)
  \;\propto\;
  \exp\!\bigl(\beta_\ell\,\hat u_\phi^{(\ell)}\bigr)\,
  \indicator{\hat u_\phi^{(\ell)}\ge\kappa_t^{(\ell)}},
  \label{eq:nested-ci}
\end{equation}
with each level gating access to the one below; each $\pi_{\mathrm{CI}}^{(\ell)}$ is normalized over its own admissible set and abstains when that set is empty, as in \cref{def:ci-policy}. Analysing the joint dynamics of such a multi-level hierarchy is left open.

\section{What Prediction Buys, and Why a Small System-1 Can Work}
\label{sec:theory}

\Cref{sec:formal,sec:architecture} defined the gate and the architecture built around it. Co-design pays only if System-1 is both much cheaper than System-2 and accurate enough to steer it; this section asks when both hold. \Cref{sec:amortization} states the core formal results: what prediction buys, in time and memory, and the regret a cheap predictor risks. The remaining subsections ask why a cheap System-1 can make a useful prediction about an expensive interaction at all, from three complementary views. These connections are interpretive rather than separate contributions; each supplies insight and, under its own assumptions, a guarantee, but none is a derivation of the framework. \emph{Information-theoretically} (\cref{sec:info-foundations}), the quantity gated on is expected information gain net of cost, with epistemic action selection as its limiting case. \emph{Generatively} (\cref{sec:generative}), an interaction is a transition of a world or policy model, and System-1 is an amortized projection of that model onto a single scalar, which is why valuation is cheaper than simulation. \emph{Control-theoretically} (\cref{sec:mpc}), executing only what the gate admits is receding-horizon control of a learner whose own dynamics are still being learned.

\subsection{Core Results: What Prediction Buys and What It Risks}
\label{sec:amortization}
\label{sec:advantages}

Predictive sampling has two structural advantages over any scheme that must execute interactions to evaluate them, both following from the fact that $\hat u_\phi$ scores a candidate from a cheap embedding \emph{without performing it}: one in time and one, conditionally, in memory. Against them stands one risk, that a cheap predictor selects the wrong interactions, which the end of this subsection bounds.

\paragraph{Screening efficiency.}
A reactive filter must execute an interaction before discovering that it was uninformative, paying full cost on the dead fraction. Predictive sampling replaces this with a forward pass of a small model, as in recent predictive prompt and task samplers \citep{qu2026moppps, mao2026dynamics, wen2026bridging}.

\begin{proposition}[Screening Gain Relative to Exhaustive Execute-then-Filter Evaluation]
\label{prop:amortized-screening}
Let a pool of $n$ candidates contain a fraction $\rho_u$ of utility-relevant ones, with execution cost $c_{\text{exec}}$, screening cost $c_{\text{screen}}=c_{\text{embed}}+c_{\text{score}}$ (encoding the candidate and scoring it; this is the $c_{\text{pred}}$ of Eq.~(\ref{eq:IIE})), and $r=c_{\text{screen}}/c_{\text{exec}}$. With a perfect screen, acquiring the utility-relevant subset costs
\begin{equation}
  \frac{\calC_{\mathrm{CI}}}{\calC_{\text{react}}}
  \;=\;
  \rho_u \;+\; r
  \label{eq:amortized-ratio}
\end{equation}
of what exhaustive execute-then-filter evaluation spends ($\calC_{\text{react}}=n\,c_{\text{exec}}$, the baseline that must execute every candidate to find all relevant ones), a speedup approaching $1/\rho_u$ as $r\to0$, provided a candidate's utility class cannot be known without executing or scoring it.
\end{proposition}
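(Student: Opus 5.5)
The plan is a direct cost accounting of the two procedures on the same pool of $n$ candidates, followed by a short argument that the baseline $\calC_{\text{react}}=n\,c_{\text{exec}}$ is the right comparator. I will first fix the ledger. Every candidate in the conditional-interaction pipeline is screened once, at cost $c_{\text{screen}}=c_{\text{embed}}+c_{\text{score}}$. With a perfect screen, exactly the $\rho_u n$ utility-relevant candidates are admitted, and each is executed at cost $c_{\text{exec}}$; suppressed candidates incur no execution cost. Hence
\[
\calC_{\mathrm{CI}}=n\,c_{\text{screen}}+\rho_u n\,c_{\text{exec}}.
\]
This is the fixed-pool form of the denominator of Eq.~(\ref{eq:IIE}) with equal execution costs and $\mathrm{fp}=\mathrm{fn}=0$. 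Dividing by $n\,c_{\text{exec}}$ gives $\rho_u+c_{\text{screen}}/c_{\text{exec}}=\rho_u+r$, which is Eq.~(\ref{eq:amortized-ratio}). The speedup is then $1/(\rho_u+r)$, which increases monotonically to $1/\rho_u$ as $r\to0$ with $\rho_u$ held fixed. If $\rho_u n$ is not an integer, I will read $\rho_u$ as the realized pool fraction, as the paper does for fixed pools after Eq.~(\ref{eq:IIE-pop}).

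The step needing care is justifying the baseline rather than the arithmetic. The execute-then-filter scheme has no pre-execution signal, so by the proviso of the statement it cannot tell a candidate's utility class without executing it. To be sure it has acquired \emph{all} relevant candidates, it must therefore execute every one of them. I will argue this adversarially: any candidate left unexecuted could be relevant without the scheme knowing. This forces at least $n\,c_{\text{exec}}$, and exhaustive execution attains exactly that, so $\calC_{\text{react}}=n\,c_{\text{exec}}$.

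Two remarks close the argument. First, the ratio can exceed one only when $r>1-\rho_u$. In that regime screening does not pay, which is consistent with regime (3) of \cref{sec:exploration}. Second, predictor refitting and audit executions are excluded here, because they are amortized costs under the ledger conventions of \cref{sec:amortization}. I expect the main obstacle to be stating the baseline's optimality precisely, namely that it is a worst-case requirement over label assignments. The cost computation itself is routine.
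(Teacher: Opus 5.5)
Your proposal is correct and follows the paper's proof essentially step for step. It uses the same information model and the same argument that an execute-then-filter scheme guaranteeing the full relevant subset must execute all $n$ candidates, then divides $\calC_{\mathrm{CI}}=n\,c_{\text{screen}}+\rho_u n\,c_{\text{exec}}$ by $n\,c_{\text{exec}}$ exactly as the paper does, and your extra remarks (the ratio exceeds one only when $r>1-\rho_u$; refitting and audit costs are excluded) are consistent with the paper but not needed.
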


Under the shorthand of \cref{sec:generative} for a dense transformer the costs are concrete: evaluating a pool of $n$ candidates by rollout to obtain $B$ utility-relevant ones, versus screening them predictively, costs
\begin{align}
  \calC_{\text{unc}} &\;=\; \Theta\!\bigl(n\, G\, L\, d_{\text{model}}^2\bigr),
  \label{eq:unc-complexity}\\
  \calC_{\mathrm{CI}} &\;=\; \underbrace{\Theta(n\,c_{\text{embed}})}_{\text{encoding}}\;+\;\underbrace{\Theta(n\, d_\phi^2)}_{\text{prediction}}
  \;+\; \underbrace{\Theta(B\, G\, L\, d_{\text{model}}^2)}_{\text{execution}},
  \label{eq:ci-complexity}
\end{align}
with $d_\phi\ll d_{\text{model}}$, a cost ratio (now including encoding) of $B/n+\Theta\bigl((c_{\text{embed}}+d_\phi^2)/(GL\,d_{\text{model}}^2)\bigr)$: Eq.~(\ref{eq:amortized-ratio}) with $\rho_u=B/n$ and $r=\Theta\bigl((c_{\text{embed}}+d_\phi^2)/(GL\,d_{\text{model}}^2)\bigr)$. Sparse or fused kernels change the constants, not the argument. The encoding term need not be small. A frozen language-model encoder applied to a prompt of length $L_x$ costs $c_{\text{embed}}=\Theta(L_xd_{\mathrm{enc}}^2+L_x^2d_{\mathrm{enc}})$, negligible next to $G$ rollouts of length $L$ when $L_xd_{\mathrm{enc}}^2\ll GLd_{\text{model}}^2$ and paid once per candidate when embeddings are cached across the rounds of a static pool; for short rollouts, or when candidates must be re-encoded every round, it can dominate $r$. End to end, executing an audit fraction $a$ of suppressed candidates adds $a(1-\rho_u)$ to Eq.~(\ref{eq:amortized-ratio}), refitting the predictor adds $c_{\text{train}}/(n\,c_{\text{exec}})$ per round, and screening errors move the ratio as in Eq.~(\ref{eq:IIE-ratio-imperfect}); Eq.~(\ref{eq:amortized-ratio}) is the perfect-screen ceiling, not a realized speedup. The error rates separating the realized multiplier from this ceiling are measurable: periodically executing a small audit sample of \emph{suppressed} candidates estimates $\mathrm{fn}$, and the executed survivors estimate $\mathrm{fp}$; reporting these, and their cost-conditioned versions (\cref{rem:cost-conditioned}), turns Eq.~(\ref{eq:IIE-ratio-imperfect}) into a measured quantity.

\paragraph{Memory scaling.}
When evaluation requires real interaction, the candidate pool may also be capped by the memory for in-flight rollouts (KV cache for $G$ length-$L$ generations, replay buffers, simulator state). A predictively scored candidate contributes only an embedding $e(\tau)\in\R^{d_e}$, and embeddings can be streamed.

\begin{proposition}[Peak Rollout Memory Under Synchronous Evaluation]
\label{prop:memory-scope}
To select the top-$B$ of $n$ candidates, synchronous reactive evaluation needs peak memory $M_{\text{react}}(n)$, whereas predictive scoring streamed in micro-batches of size $b$ needs $M_{\mathrm{CI}}(n)$, where
\begin{align}
  M_{\text{react}}(n) &\;=\; \Theta\!\bigl(n\,G\,L\,d_{\text{model}}\bigr),
  \label{eq:react-memory}\\
  M_{\mathrm{CI}}(n) &\;=\; \Theta\!\bigl(B\,G\,L\,d_{\text{model}} \;+\; b\,d_e \;+\; B\bigr),
  \label{eq:ci-memory}
\end{align}
independent of $n$; the pool is then limited only by screening time $\Theta(n\,c_{\text{pred}})$ and score storage.
\end{proposition}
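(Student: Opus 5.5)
The plan is a direct memory accounting of the two procedures under a common cost model. Each in-flight generation of length $L$ holds a KV cache of size $\Theta(L\,d_{\text{model}})$; the per-layer factor is absorbed into the constant, exactly as the shorthand of \cref{sec:generative} absorbs it in Eq.~(\ref{eq:unc-complexity}). A candidate evaluated with $G$ rollouts therefore occupies $\Theta(G\,L\,d_{\text{model}})$. Under \emph{synchronous} reactive evaluation, all $n$ candidates are rolled out concurrently before any utility is observed and the top-$B$ chosen. This is the defining feature of the baseline: as in \cref{prop:amortized-screening}, utility class cannot be known without execution. The peak is reached when all caches are live, and summing over the $n$ candidates gives the upper bound in Eq.~(\ref{eq:react-memory}). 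The matching lower bound holds because synchronous evaluation keeps every rollout resident at that instant. Model weights and similar overheads are common to both schemes and are either excluded from the accounting or treated as constants.

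For the predictive side, I would describe the procedure explicitly and bound each phase separately.

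\emph{Phase 1, streaming.} Candidates arrive in micro-batches of size $b$. Each is embedded into $\R^{d_e}$ and scored by $f_\phi$, and the embeddings are discarded once scored. The live memory in this phase is $\Theta(b\,d_e)$, plus the predictor's parameters, which are fixed and independent of $n$.

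\emph{Phase 2, selection.} A running top-$B$ structure, such as a min-heap over scores and candidate identifiers, maintains the current best $B$ candidates. It uses $\Theta(B)$ memory. This requires scalar scores and $O(1)$ identifiers, which the paper's ``score storage'' caveat covers.

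\emph{Phase 3, execution.} Only the $B$ admitted candidates are executed, with $G$ rollouts each, at cost $\Theta(B\,G\,L\,d_{\text{model}})$. The phases run sequentially (or Phase 3 runs after the stream ends), so the peak memory is at most the sum of the three terms. It is also at least their maximum, which gives the $\Theta$ in Eq.~(\ref{eq:ci-memory}). None of these terms involves $n$, so $M_{\mathrm{CI}}(n)$ is independent of the pool size. The only resources that grow with $n$ are the screening time, $n$ forward passes at $c_{\text{pred}}$ each, and any optional retention of all $n$ scores. This establishes the closing clause of the statement.

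I expect the main obstacle to be stating the comparison class precisely rather than carrying out the algebra. ``Synchronous'' has to mean that every candidate's rollouts coexist. Otherwise a sequential reactive evaluator with a streaming top-$B$ heap would also achieve memory independent of $n$, and the separation would vanish. I would therefore make that assumption explicit. I would also state that the heap needs only $O(1)$ words per candidate; any retained per-candidate payload makes the $\Theta(B)$ term $\Theta(B\,d_e)$ instead, which is still independent of $n$.
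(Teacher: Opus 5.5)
Your proposal is correct and follows essentially the same route as the paper's proof. Both rest on the same assumptions: under synchronous evaluation all $n$ in-flight states of size $\Theta(G\,L\,d_{\text{model}})$ are resident at once, embeddings are discarded after scoring, and a running top-$B$ structure suffices; both obtain the same three-term sum for $M_{\mathrm{CI}}(n)$ and place the $n$-dependence in screening time and optional score storage. Your sum-versus-max argument for the $\Theta$ and your observation that a streaming reactive evaluator would erase the separation match the paper's accounting and its closing remark on streaming reactive evaluation.
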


\begin{corollary}[Larger Pools Without Rollout Memory]
\label{cor:scope-quality}
The best true utility $U_{(1)}^n$ among $n$ i.i.d.\ candidates is non-decreasing in $n$: it converges to the essential supremum if utility is bounded, and grows as $n^{\xi}$ (in probability, up to a slowly varying factor) if $\Pr(U>u)=u^{-1/\xi}\ell(u)$. Under predictive sampling this gain costs no additional rollout memory (Eq.~(\ref{eq:ci-memory})), provided the predictor ranks the top of the pool correctly.
\end{corollary}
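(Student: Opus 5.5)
The corollary has two parts, and I would prove them separately: a purely probabilistic statement about the maximum of $n$ i.i.d.\ utilities, and an accounting statement that combines it with \cref{prop:memory-scope}.

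\textbf{Monotonicity.} I would realize the pools on a single infinite i.i.d.\ sequence $U_1,U_2,\dots$ drawn from the utility law of $i\sim\calD$. Set $U_{(1)}^n=\max_{k\le n}U_k$. Then $U_{(1)}^{n+1}=\max(U_{(1)}^n,U_{n+1})\ge U_{(1)}^n$ pathwise. This gives almost-sure monotonicity, and hence monotonicity in distribution and in expectation for any coupling-independent notion the statement intends.

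\textbf{Bounded case.} Let $u^\ast$ be the essential supremum. For $\epsilon>0$ we have $p_\epsilon=\Pr(U>u^\ast-\epsilon)>0$ by definition of the essential supremum. Therefore
\[
\Pr\bigl(U_{(1)}^n\le u^\ast-\epsilon\bigr)=(1-p_\epsilon)^n\to0 .
\]
Combined with monotonicity, this gives almost-sure convergence to $u^\ast$. Bounded convergence then also yields convergence of the expectation.

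\textbf{Heavy-tailed case.} Assume $\Pr(U>u)=u^{-1/\xi}\ell(u)$ with $\ell$ slowly varying. I would invoke the Fréchet domain of attraction (Gnedenko). Let $a_n$ be defined by $n\Pr(U>a_n)\to1$, so that $a_n=n^{\xi}\tilde\ell(n)$ for a slowly varying $\tilde\ell$, by inversion of a regularly varying function. Then
\[
\Pr\bigl(U_{(1)}^n\le x a_n\bigr)=\bigl(1-\Pr(U>xa_n)\bigr)^n\to\exp\bigl(-x^{-1/\xi}\bigr),
\]
using $\Pr(U>xa_n)/\Pr(U>a_n)\to x^{-1/\xi}$. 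So $U_{(1)}^n/a_n$ is tight and bounded away from zero in probability, which is the claimed $n^{\xi}$ growth up to a slowly varying factor. I expect this step to be the main obstacle. The difficulty is not the limit law, which is standard, but stating correctly what ``grows as $n^\xi$ in probability'' means. I would phrase it as $U_{(1)}^n/(n^\xi\tilde\ell(n))\Rightarrow$ Fréchet$(1/\xi)$ and cite the theorem rather than reprove the inversion of regularly varying functions.

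\textbf{Memory and ranking.} By Eq.~(\ref{eq:ci-memory}) in \cref{prop:memory-scope}, peak memory under predictive scoring is $\Theta(BGLd_{\text{model}}+bd_e+B)$, which does not involve $n$. Enlarging the pool from $n$ to $n'>n$ therefore changes only screening time and score storage, not rollout memory. The proviso enters as follows. If the predictor's top-ranked candidate is the true argmax, which is exactly what ``ranks the top of the pool correctly'' means, then the executed set contains the candidate attaining $U_{(1)}^n$. Hence the executed best utility inherits the monotonicity and growth established above at no additional rollout memory.
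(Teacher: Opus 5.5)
Your proposal is correct and follows the paper's proof essentially step for step. Both arguments use pathwise monotonicity of the running maximum, and both show $(1-p_\epsilon)^n\to0$ and upgrade this to almost-sure convergence via monotonicity. The heavy-tailed case rests on the same Fréchet limit $\exp(-x^{-1/\xi})$ for $U_{(1)}^n/a_n$, and the memory claim on Eq.~(\ref{eq:ci-memory}) together with the correct-ranking proviso. One small point in your favour: your normalization $n\Pr(U>a_n)\to1$ is slightly more careful than the paper's exact choice $\Pr(U>a_n)=1/n$, which need not have a solution when the utility distribution has atoms.
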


\noindent The memory advantage is conditional on the synchronous premise: a reactive filter that streams executions and keeps only scores also has $n$-independent memory, but pays $\Theta(n\,c_{\text{exec}})$ time, so the advantage then reduces to the screening gain. More generally, System-1 allocates storage as well as compute --- which sample groups stay in a replay buffer, which examples enter a capacity-limited batch \citep{zou2026uds}, which demonstrations occupy a context window \citep{wen2026you}, how much KV cache a request receives --- and the rule of Eq.~(\ref{eq:master}) applies unchanged, with memory in place of compute as the cost.

\paragraph{Selection regret.}
These savings help only if the cheap predictor selects nearly the right interactions.

\begin{definition}[Conditional Regret]
With $\mathcal{B}_t^*$ the oracle's top-$B$ set by true utility $u_t=u(\cdot;\theta_t)$ in the round-$t$ pool $\calI_t$ and $\hat{\mathcal{B}}_t$ the set selected by $\pi_{\mathrm{CI}}$ at round $t$,
\begin{equation}
  \calR_T(\pi_{\mathrm{CI}}, \pi^*)
  \;=\; \sum_{t=0}^{T-1}
    \Bigl[\,\sum_{i\in\mathcal{B}_t^*}u_t(i) - \sum_{i\in\hat{\mathcal{B}}_t}u_t(i)\Bigr].
  \label{eq:cond-regret}
\end{equation}
\end{definition}

\begin{theorem}[Regret Bound]
\label{thm:regret}
Assume (i) selection is the deterministic limit of the fixed-size batch rule Eq.~(\ref{eq:subset-selection}) with $\mu=0$ and $\calA$ the predicted utility $\hat u$, the cardinality-constrained form of the master objective Eq.~(\ref{eq:master}), so each round retains the top-$B$ set by $\hat u$; (ii) $\varepsilon_\phi=\sup_{0\le t<T}\sup_{i\in\calI_t}\abs{\hat u_t(i)-u_t(i)}$, a time-uniform error, with $\hat u_t(i)=\hat u_{\phi_t}(i\mid\calH_{0:t})$; and (iii) $\mathrm{gap}_{\min} = \min_{t}\min_{i\in\mathcal{B}_t^*,\,j\in\calI_t\setminus\mathcal{B}_t^*}\bigl(u_{t}(i)-u_{t}(j)\bigr)>0$. Then
\begin{equation}
  \calR_T(\pi_{\mathrm{CI}}, \pi^*)
  \;\le\;
  2\,T\,B\,\varepsilon_\phi\,\cdot\,
  \indicator{2\varepsilon_\phi \ge \mathrm{gap}_{\min}};
  \label{eq:regret-bound}
\end{equation}
in particular, the selection is exactly the oracle's whenever $2\varepsilon_\phi<\mathrm{gap}_{\min}$. More generally, with round-specific errors $\varepsilon_t=\sup_{i\in\calI_t}\abs{\hat u_t(i)-u_t(i)}$ and gaps $\mathrm{gap}_t$, $\calR_T\le\sum_{t<T}2B\varepsilon_t\,\indicator{2\varepsilon_t\ge\mathrm{gap}_t}$.
\end{theorem}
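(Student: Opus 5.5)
The argument is per round; summing over $t$ then gives both the general bound and Eq.~(\ref{eq:regret-bound}). Fix a round $t$. Write $\mathcal{B}^*=\mathcal{B}_t^*$ for the oracle's top-$B$ set by $u_t$ and $\hat{\mathcal{B}}=\hat{\mathcal{B}}_t$ for the top-$B$ set by $\hat u_t$, as assumption (i) specifies. Let $\varepsilon=\varepsilon_t$. Both sets have cardinality $B$, so the round regret is
\[
r_t=\sum_{i\in\mathcal{B}^*\setminus\hat{\mathcal{B}}}u_t(i)-\sum_{j\in\hat{\mathcal{B}}\setminus\mathcal{B}^*}u_t(j),
\]
and the two difference sets have the same size $m\le B$.

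\emph{Step 1 (exactness below the gap).} Suppose $2\varepsilon<\mathrm{gap}_t$. Take any $i\in\mathcal{B}^*$ and $j\notin\mathcal{B}^*$. Then
\[
\hat u_t(i)-\hat u_t(j)\ge u_t(i)-u_t(j)-2\varepsilon\ge\mathrm{gap}_t-2\varepsilon>0 .
\]
So every oracle member strictly outranks every non-member under $\hat u_t$. The top-$B$ set by $\hat u_t$ is therefore exactly $\mathcal{B}^*$, with no ties at the boundary, and $r_t=0$. This produces the indicator factor and the claim that the selection is exactly the oracle's.

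\emph{Step 2 (bound otherwise).} Pair the elements of $\mathcal{B}^*\setminus\hat{\mathcal{B}}$ and $\hat{\mathcal{B}}\setminus\mathcal{B}^*$ by an arbitrary bijection $i\mapsto j$. Here $j$ was selected and $i$ was not, so $\hat u_t(j)\ge\hat u_t(i)$, because $\hat{\mathcal{B}}$ is a top-$B$ set by $\hat u_t$. The error bound then gives
\[
u_t(i)-u_t(j)\le \hat u_t(i)+\varepsilon-\hat u_t(j)+\varepsilon\le 2\varepsilon .
\]
Summing over the $m\le B$ pairs gives $r_t\le 2m\varepsilon\le 2B\varepsilon$. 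Combining this with Step 1 yields $r_t\le 2B\varepsilon_t\indicator{2\varepsilon_t\ge\mathrm{gap}_t}$. Summing over $t<T$ gives the general form.

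\emph{Step 3 (uniform version).} Use $\varepsilon_t\le\varepsilon_\phi$ and $\mathrm{gap}_t\ge\mathrm{gap}_{\min}$. For a round with $2\varepsilon_t\ge\mathrm{gap}_t$ we get $2\varepsilon_\phi\ge 2\varepsilon_t\ge\mathrm{gap}_t\ge\mathrm{gap}_{\min}$. So each indicator is dominated by $\indicator{2\varepsilon_\phi\ge\mathrm{gap}_{\min}}$, and each term is at most $2B\varepsilon_\phi$. Summing $T$ rounds gives Eq.~(\ref{eq:regret-bound}).

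The only delicate step is Step 1. Deterministic top-$B$ selection may break ties arbitrarily, and the oracle set must be well defined. Assumption (iii) handles both: $\mathrm{gap}_t>0$ makes $\mathcal{B}^*$ unique, and the strict inequality $\hat u_t(i)>\hat u_t(j)$ rules out ties across the boundary in $\hat u_t$. Step 2 holds under any tie-breaking, because it uses only $\hat u_t(j)\ge\hat u_t(i)$. I also need to check that the deterministic limit of Eq.~(\ref{eq:subset-selection}) with $\mu=0$ is indeed plain top-$B$ by $\hat u$. Assumption (i) states this directly, so I take it as given.
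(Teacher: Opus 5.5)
Your proof is correct and follows the paper's argument: you decompose each round's regret into swapped pairs, use $\hat u_t(j)\ge\hat u_t(i)$ to bound each swap by $2\varepsilon_t$, and use the gap to rule out swaps when $2\varepsilon_t<\mathrm{gap}_t$. The differences are cosmetic: you prove exactness directly, showing every oracle member strictly outranks every non-member under $\hat u_t$, where the paper argues by contradiction from the swap bound; and you prove the round-specific bound first and derive the uniform one from it, whereas the paper goes the other way.
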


\begin{corollary}[High-Probability Regret]
\label{cor:regret-hp}
Suppose each round's pool has at most $n$ candidates; for every round $t$ and every candidate outside a blind-spot set $\calO_t\subseteq\calI_t$ with $|\calO_t|\le n_{\mathrm{ood}}$, the error $\hat u_t(i)-u_t(i)$ is zero-mean and $\sigma_\phi$-sub-Gaussian, while errors on $\calO_t$ are unrestricted; and the utility takes values in an interval of length $u_{\max}$ (e.g.\ $u\in[0,u_{\max}]$). Then, with probability at least $1-\delta$ and $\varepsilon_\phi^{\mathrm{eff}}=\sigma_\phi\sqrt{2\ln(2nT/\delta)}$,
\begin{equation}
  \calR_T(\pi_{\mathrm{CI}}, \pi^*)
  \;\le\;
  2\,T\,B\,\varepsilon_\phi^{\mathrm{eff}}\,
  \indicator{2\varepsilon_\phi^{\mathrm{eff}}\ge\mathrm{gap}_{\min}}
  \;+\; T\,\min(B,n_{\mathrm{ood}})\,u_{\max}.
  \label{eq:regret-hp}
\end{equation}
\end{corollary}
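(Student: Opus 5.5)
The statement immediately above is \cref{cor:regret-hp}. I will reduce it to the round-specific form of \cref{thm:regret}, after splitting each round's pool into the well-behaved candidates $\calI_t\setminus\calO_t$ and the blind-spot set $\calO_t$. Throughout, selection is the deterministic top-$B$ rule by $\hat u_t$.

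\emph{Step 1: a uniform error bound.} There are at most $nT$ pairs $(t,i)$ with $i\notin\calO_t$. For each, the zero-mean $\sigma_\phi$-sub-Gaussian tail gives $\Pr\bigl(\abs{\hat u_t(i)-u_t(i)}>\varepsilon\bigr)\le 2e^{-\varepsilon^2/(2\sigma_\phi^2)}$. A union bound at $\varepsilon=\varepsilon_\phi^{\mathrm{eff}}=\sigma_\phi\sqrt{2\ln(2nT/\delta)}$ then shows that, with probability at least $1-\delta$, $\abs{\hat u_t(i)-u_t(i)}\le\varepsilon_\phi^{\mathrm{eff}}$ simultaneously for every round and every non-blind-spot candidate. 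This is routine. If the errors are dependent across rounds, as they are under adaptive collection, the union bound still works because it only needs the marginal tails. I will work on this event for the rest of the argument.

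\emph{Step 2: split the round-$t$ regret.} The round-$t$ regret is $\sum_{\mathcal{B}_t^*}u_t-\sum_{\hat{\mathcal{B}}_t}u_t$. I will match the swapped elements in pairs: each $i\in\mathcal{B}_t^*\setminus\hat{\mathcal{B}}_t$ is paired with some $j\in\hat{\mathcal{B}}_t\setminus\mathcal{B}_t^*$. A pair is \emph{clean} if neither element lies in $\calO_t$, and \emph{contaminated} otherwise.
For a clean pair, $j$ displaced $i$, so $\hat u_t(j)\ge\hat u_t(i)$. Combined with the error bound this gives $u_t(i)-u_t(j)\le 2\varepsilon_\phi^{\mathrm{eff}}$, and the gap assumption forces $u_t(i)-u_t(j)\ge\mathrm{gap}_{\min}$. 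So a clean pair can exist only when $2\varepsilon_\phi^{\mathrm{eff}}\ge\mathrm{gap}_{\min}$, and each contributes at most $2\varepsilon_\phi^{\mathrm{eff}}$. With at most $B$ pairs, this reproduces the first term exactly as in the proof of \cref{thm:regret}.
A contaminated pair loses at most $u_{\max}$, since utilities lie in an interval of that length. The number of contaminated pairs is at most $\min(B,\abs{\calO_t})\le\min(B,n_{\mathrm{ood}})$.

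\emph{Step 3: make the pairing sound.} This is the step I expect to be the main obstacle. The difficulty is that a blind-spot candidate can displace a clean one, and the displaced candidate can then, indirectly, distort which clean candidates get compared. My remedy is to build the bijection between the two swap sets greedily. First I pair every blind-spot element on either side with an arbitrary partner, marking those pairs contaminated. Then I pair the remaining clean elements in order of predicted utility.
The property I need is this: for any clean $i$ dropped from $\mathcal{B}_t^*$ and clean $j$ admitted to $\hat{\mathcal{B}}_t$, we still have $\hat u_t(j)\ge\hat u_t(i)$. This holds because every admitted candidate is ranked above every rejected one under $\hat u_t$, so the inequality is pairwise and needs no assumption about the pairing. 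Summing over rounds then gives \eqref{eq:regret-hp}.
One slack point: the contaminated term does not need the indicator, which is why it appears unconditionally in the bound.
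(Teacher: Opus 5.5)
Your proposal is correct and matches the paper's proof step for step. You take a union bound over at most $nT$ non-blind-spot candidate--round pairs at level $\varepsilon_\phi^{\mathrm{eff}}$, then split the swapped pairs into clean ones (handled by the swap argument of the theorem) and at most $\min(B,n_{\mathrm{ood}})$ contaminated ones costing at most $u_{\max}$ each. Your greedy pairing in Step 3 is unnecessary, as you observe yourself: $\hat u_t(j)\ge\hat u_t(i)$ holds for every admitted--rejected pair, so the paper simply uses an arbitrary pairing.
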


\noindent\textit{Interpretation.} The batch rule of assumption (i) is the cardinality-constrained counterpart of the single-draw greedy limit (\cref{prop:ci-argmax}), not a consequence of it. For a fixed positive selection margin, regret is exactly zero once the predictor is accurate relative to the separation of interactions ($2\varepsilon_\phi<\mathrm{gap}_{\min}$); regret is measured in the units of the declared utility, and $\mathrm{gap}_{\min}$ enters not as a denominator but as the threshold deciding whether any misranking can occur, with the bound degrading gracefully, linearly in $\varepsilon_\phi$ and $B$, once $2\varepsilon_\phi\ge\mathrm{gap}_{\min}$. In continuous interaction spaces the global gap may vanish, and $\mathrm{gap}_{\min}$ should then be interpreted as the margin at the $B$-th order statistic. The uniform error and strictly positive gap of \cref{thm:regret} are thus the clean idealization that exposes the mechanism; \cref{cor:regret-hp} is the version intended for realistic predictors, and its second term makes explicit that blind spots, not average error, dominate once the predictor is otherwise accurate. The theorem does not imply a self-improving loop: better predictions reduce regret and more informative interactions improve the policy, but a stronger System-2 can make its own utilities harder to predict (\cref{sec:architecture,sec:selective-supervision}).

\subsection{Understanding Through the Lens of Active Inference}
\label{sec:info-foundations}

\paragraph{Information net of cost: the utility of an interaction.}
Conditional interaction is resource allocation under uncertainty: at each decision point the system chooses whether to spend compute, environment, or inference resources on an interaction whose value is not yet known. Let $\theta$ be the latent state of knowledge (model parameters, latent task representations, or beliefs). Executing $i\in\calI$ given history $\calH_t$ yields an outcome $o$ and posterior $p(\theta\mid\calH_t,i,o)$. The \emph{expected information gain} is
\begin{equation}
  \calU_{\mathrm{IG}}(i\mid\calH_t)
  \;=\;
  \E_{o\sim p(o\mid i,\calH_t)}\!\left[
    \KL{p(\theta\mid\calH_t,i,o)}{p(\theta\mid\calH_t)}
  \right]
  \;=\; I(\theta;o\mid i,\calH_t),
  \label{eq:exp-info-gain}
\end{equation}
the utility of \cref{def:utility}. \emph{Bayesian surprise} is the realized, per-outcome divergence measured \emph{after} execution; expected information gain is its expectation under the predictive distribution, which must be estimated \emph{before} execution and is what the gate acts on. When outcomes are drawn from the same predictive model, realized surprises on executed interactions are therefore unbiased targets for the pre-execution estimate, which licenses the bootstrapping loop of \cref{sec:architecture}. Net of cost,
\begin{equation}
  \calU(i\mid\calH_t) = \calU_{\mathrm{IG}}(i\mid\calH_t) - \lambda\,\calC(i),
  \quad
  i_t^{*} = \argmax_{i\in\calI}\;\calU(i\mid\calH_t),
  \label{eq:net-utility}
\end{equation}
with $\lambda>0$ pricing information against resources; this is the $\beta\to\infty$ limit of Eq.~(\ref{eq:ci-policy}) applied to the cost-adjusted score of Eq.~(\ref{eq:net-score}), rule (b) of \cref{sec:master}, and agrees with the raw-score limit only when costs are equal.

A scalar price treats cost as one-dimensional and fungible, but in deployment latency, money, context occupancy, and safety or irreversibility are not interchangeable and cannot be collapsed into one currency without losing the constraints that matter. The framework accommodates this directly: $\lambda$ becomes a vector of prices, $\kappa_t$ a vector-valued threshold, and the gate a constrained rather than scalar comparison, where a hard deadline or a billing cap weights its component differently from a safety budget. The agentic ledger of Eq.~(\ref{eq:agent-cost}) carries exactly these terms; the scalar form is the special case where they share a currency.

\paragraph{Amortized variational estimation of utility.}
The exact posterior is unavailable, so a lightweight predictive subsystem (System-1; in the task-sampling literature, a risk prediction model \citep{wang2025modelpredictivetasksampling, wen2026bridging}, or a Bayesian filter over a prompt's solving dynamics \citep{mao2026dynamics}) maintains a variational approximation $q_\phi(\theta\mid\calH_t)$ fitted by minimizing the variational free energy
\begin{equation}
  \calF(\phi)
  = \KL{q_\phi(\theta\mid\calH_t)}{p(\theta)}
  - \E_{q_\phi}\!\bigl[\ln p(\calH_t\mid\theta)\bigr]
  = -\mathrm{ELBO}(\phi)\;\ge\;-\ln p(\calH_t).
  \label{eq:VFE}
\end{equation}
Minimizing $\calF$ improves the approximate posterior from which utilities are computed, $\hat u_\phi\approx\calU_{\mathrm{IG}}$; it does not by itself guarantee better utility estimates or better rankings, since a closer posterior approximation need not order candidates more accurately (compare the remark after \cref{prop:IB}). The approximation is substantial: a full Bayesian posterior over a modern network is intractable, so $\hat u_\phi$ is at best an amortized estimate of Eq.~(\ref{eq:utility-primitive}), and the gap may be large -- that gap \emph{is} the predictor bottleneck on which every gain of the co-design rests, and \cref{thm:regret} quantifies how it propagates to decisions. In practice the predictor learns the scalar functional directly (\cref{sec:generative}) rather than the posterior it summarizes. Unlike a literal free-energy agent, Eq.~(\ref{eq:VFE}) is not the selection objective (that is Eq.~(\ref{eq:net-utility})); it only keeps the predictor calibrated.

\paragraph{Epistemic action selection: the active-inference limit.}
In active inference an agent minimizes \emph{expected free energy} (EFE) \citep{friston2010}, which has a pragmatic and an epistemic term. Under uniform outcome preferences and with cost handled separately, the expected-free-energy criterion reduces to epistemic action selection based on expected information gain. Conditional interaction adds an explicit pre-execution cost gate and a learned predictor of that quantity.

\begin{proposition}[Active-Inference Limit]
\label{prop:active-inference}
Write the expected free energy of interaction $i$ as
\begin{equation}
  G(i)
  = \underbrace{-\,\E_{q_\phi(o\mid i)}\!\bigl[\ln p(o\mid C)\bigr]}_{\text{pragmatic value}}
  - \underbrace{\E_{q_\phi(o\mid i)}\!\bigl[\KL{q_\phi(\theta\mid o,i)}{q_\phi(\theta\mid i)}\bigr]}_{\text{epistemic value}},
  \label{eq:efe-decomp}
\end{equation}
with preferences $p(o\mid C)$ over a finite outcome space $\mathcal{O}$. If preferences are uniform and cost is accounted separately through $\calC(i)$, then
\begin{equation}
  \argmin_{i\in\calI} G(i)
  = \argmax_{i\in\calI}\; I_{q_\phi}(\theta;o\mid i)
  = \argmax_{i\in\calI}\; \calU_{\mathrm{IG}}^{q_\phi}(i\mid\calH_t),
  \label{eq:MI-optimality}
\end{equation}
where $\calU_{\mathrm{IG}}^{q_\phi}$ is Eq.~(\ref{eq:exp-info-gain}) evaluated under the variational model $q_\phi$, and coincides with $\calU_{\mathrm{IG}}$ when $q_\phi$ is the exact posterior and predictive. Under non-uniform preferences the two criteria diverge.
\end{proposition}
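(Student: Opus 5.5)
The proof only needs to show that, once preferences are uniform, the pragmatic term of Eq.~(\ref{eq:efe-decomp}) is a constant independent of $i$. The epistemic term is exactly the mutual information under $q_\phi$, so minimizing $G$ is the same as maximizing it. I would proceed in three steps.

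\emph{Step 1 (pragmatic term is constant).} With uniform preferences, $p(o\mid C)=1/|\mathcal{O}|$ for every $o\in\mathcal{O}$, which uses finiteness of $\mathcal{O}$. The pragmatic value is then $-\E_{q_\phi(o\mid i)}[\ln(1/|\mathcal{O}|)]=\ln|\mathcal{O}|$, because $q_\phi(o\mid i)$ is a probability distribution on $\mathcal{O}$ whatever $i$ is. Hence $G(i)=\ln|\mathcal{O}|-E(i)$, where $E(i)$ denotes the epistemic term. Since cost is charged separately through $\calC(i)$ and not inside $G$, no $i$-dependent term is left over. So $\argmin_i G(i)=\argmax_i E(i)$, and the two argmax sets coincide as sets, ties included.

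\emph{Step 2 (epistemic term is mutual information).} I would show that
\[
E(i)=\E_{q_\phi(o\mid i)}\bigl[\KL{q_\phi(\theta\mid o,i)}{q_\phi(\theta\mid i)}\bigr]=I_{q_\phi}(\theta;o\mid i).
\]
The argument is the standard identity. Expand the KL divergence as an integral of $q_\phi(\theta\mid o,i)\ln\bigl(q_\phi(\theta\mid o,i)/q_\phi(\theta\mid i)\bigr)$. Multiply by $q_\phi(o\mid i)$ and use the joint $q_\phi(\theta,o\mid i)=q_\phi(o\mid i)\,q_\phi(\theta\mid o,i)$. This gives $\E_{q_\phi(\theta,o\mid i)}\bigl[\ln\bigl(q_\phi(\theta,o\mid i)/(q_\phi(\theta\mid i)\,q_\phi(o\mid i))\bigr)\bigr]$, which is the definition of $I_{q_\phi}(\theta;o\mid i)$.

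The step I expect to need the most care is consistency of the variational model. The identity requires $q_\phi(\theta\mid i)$ to be the marginal of the joint, i.e.\ $\sum_o q_\phi(o\mid i)\,q_\phi(\theta\mid o,i)=q_\phi(\theta\mid i)$. I would therefore state explicitly that all three $q_\phi$ factors come from one joint $q_\phi(\theta,o\mid i,\calH_t)$, all implicitly conditioned on $\calH_t$. If the factors are not consistent, the expected KL is not the mutual information, and I would note that this consistency is the implicit assumption behind Eq.~(\ref{eq:efe-decomp}).

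\emph{Step 3 (link to the utility and the non-uniform remark).} Comparing with Eq.~(\ref{eq:exp-info-gain}) with $p$ replaced by $q_\phi$ gives $I_{q_\phi}(\theta;o\mid i)=\calU_{\mathrm{IG}}^{q_\phi}(i\mid\calH_t)$. When $q_\phi$ equals the exact posterior and predictive, this is $\calU_{\mathrm{IG}}$ by definition, which closes Eq.~(\ref{eq:MI-optimality}).

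For the final sentence of the statement, a two-candidate, two-outcome example suffices to show the criteria diverge under non-uniform preferences:
\begin{itemize}
\item $i_1$ is uninformative, with $q_\phi(\theta\mid o,i_1)=q_\phi(\theta\mid i_1)$ for all $o$, but its predictive distribution puts all mass on the preferred outcome;
\item $i_2$ is informative, but its predictive distribution is split across outcomes.
\end{itemize}
Take preferences strongly concentrated on the outcome $i_1$ predicts. Then the pragmatic term of $i_1$ is small while that of $i_2$ is large, so $G(i_1)<G(i_2)$ even though $I(i_2)>I(i_1)=0$.
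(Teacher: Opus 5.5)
Your proposal is correct and follows essentially the same route as the paper's proof. Under uniform preferences the pragmatic term collapses to the constant $\ln|\mathcal{O}|$, and the expected posterior-to-prior KL is rewritten through the joint $q_\phi(\theta,o\mid i)$ as $I_{q_\phi}(\theta;o\mid i)$, which is then identified with $\calU_{\mathrm{IG}}^{q_\phi}$. Two of your additions slightly sharpen the paper's argument: you state explicitly that the three $q_\phi$ factors come from one joint, which also gives the absolute continuity the paper assumes separately for outcomes of positive predictive mass, and your two-candidate example shows the argmins actually differ, whereas the paper only notes that the pragmatic term no longer drops out.
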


\noindent The identity is standard active-inference theory and we claim no novelty for it. Expected-free-energy decompositions differ across that literature; we adopt the pragmatic-plus-epistemic form of Eq.~(\ref{eq:efe-decomp}) and interpret the result as a correspondence, not a foundational derivation: it locates conditional interaction as the cost-gated, predictively amortized special case of epistemic action selection, applicable whenever information gain can be estimated more cheaply than the interaction can be executed.

\subsection{Understanding Through the Lens of Decision-Making}
\label{sec:generative}

Every interaction is coupled to the world through a generative transition kernel
\begin{equation}
  o \;\sim\; p_\psi\!\left(o \mid i,\, s_t\right),
  \qquad
  s_{t+1} \;=\; T\!\left(s_t,\, i,\, o\right),
  \label{eq:generative-transition}
\end{equation}
where $s_t$ is the state the interaction acts on, $o$ its outcome, $p_\psi$ the observation/transition model (environment dynamics in the physical regime; the policy's own generation in the digital regime), and $T$ the update folding the outcome into the next state. The uncertain parameter $\theta$ and the state $s_t$ may coincide (the policy is both uncertain and evolving) or differ (a robot's configuration versus its skill parameters).

In the digital regime it is essential to distinguish the \emph{interaction}, the costly generation of a sequence $y=(y^1,\dots,y^L)$ from context $x$,
\begin{equation}
  \pi_\theta(y\mid x)
  \;=\;
  \prod_{\ell=1}^{L}
  \pi_\theta\!\left(y^\ell \,\middle|\, x,\, y^{<\ell}\right),
  \label{eq:autoregressive-generation}
\end{equation}
which costs $L$ sequential forward passes (with key--value caching, roughly linear in $L$, with a per-token cost set by the model's depth, width and attention and feed-forward operations; we write $\Theta(L\,d_{\mathrm{model}}^2)$ as shorthand, absorbing depth and context-length terms; this is the width-dominated regime $L\lesssim d_{\mathrm{model}}$, and with full attention each token also attends over the accumulated context, adding $\Theta(L^2d_{\mathrm{model}})$ per sequence, which key--value caching does not remove), from the \emph{outcome}, the verdict of a verifier $\calV$ applied to $y$ (does the CoT solve the problem, do the unit tests pass):
\begin{equation}
  o \;=\; \calV(y), \qquad
  p_\psi(o\mid i, s_t)
  \;=\;
  \sum_{y\,:\,\calV(y)=o}\pi_\theta(y\mid x).
  \label{eq:digital-outcome}
\end{equation}
The kernel is the generation distribution pushed through the verifier: the verdict $o$ (often a scalar reward, correct/incorrect, or a score) is what updates the agent's beliefs, and the tokens $y$ are merely the costly medium through which it is obtained. The cost is paid on all $L$ tokens, whereas the information resides in $o$, which may be a single bit.

A prompt on which every rollout returns the same verdict yields zero GRPO advantage and hence no group-relative policy-gradient signal despite its full cost. This is a statement about the update, not about belief change: an all-fail group can still inform a posterior over the prompt's difficulty, and the gradient depends on the sequences through their log-likelihoods, not only on the verdict. The verdict channel models where the cost and the information sit, while the RLVR gate scores the declared proxy $u_{\mathrm{adv}}$ of \cref{rem:identifiability} rather than information gain. The predictor's task is to forecast the verdict's declared utility \emph{before} generating the sequence, which is why $c_{\mathrm{pred}}\ll c_{\mathrm{exec}}$; Fig.~\ref{fig:generative} shows the pipeline and the point at which System-1 cuts in.

The physical regime is identical with the verifier replaced by the environment's reward and the sequence by a trajectory with probability $\prod_k\pi_\theta(a_k\mid s_k)\,p(s_{k+1}\mid s_k,a_k)$, whose factors run over physical time steps that cannot be parallelized.

\begin{figure}[t]
\centering
\includegraphics[width=\linewidth]{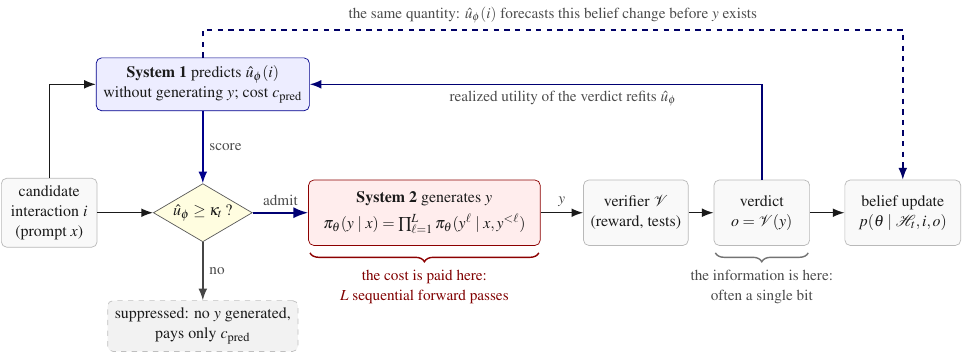}
\caption{\textbf{The generative model, and where System~1 cuts in.} The cost is paid in full on generating the sequence $y$, while the information lies in the verdict $o=\calV(y)$, often a single bit. System~1 forecasts the belief change that verdict will cause before $y$ exists, and the gate generates only if the forecast clears $\kappa_t$; suppressed candidates cost only $c_{\text{pred}}$. Realized verdicts refit System~1. With information gain as the utility the forecast is the expected Bayesian surprise; for any other declared utility the pipeline forecasts that utility instead.}
\label{fig:generative}
\end{figure}

\paragraph{One kernel, two interpretations: dynamics and decision.}
Viewed forward in time, Eq.~(\ref{eq:generative-transition}) is a dynamical-system transition. Viewed through the agent's posterior, it defines a decision under uncertainty, since the utility is a functional of $p_\psi$:
\begin{equation}
  u(i;\calH_t)
  \;=\;
  \E_{o\sim p_\psi(o\mid i,s_t)}\!\Bigl[
    \KL{p(\theta\mid\calH_t,i,o)}{p(\theta\mid\calH_t)}
  \Bigr].
  \label{eq:utility-as-generative-functional}
\end{equation}
The gate thus chooses among transitions by the posterior change each is predicted to cause. System-1 is a cheap surrogate estimating Eq.~(\ref{eq:utility-as-generative-functional}) without rolling $p_\psi$ forward; System-2 is the true transition, paid for only when the surrogate certifies it. Unlike a world model, which learns $p_\psi$ to simulate outcomes (re-incurring much of the cost the gate avoids), the predictor learns only this scalar functional. The bottleneck is therefore one of \emph{valuation}, not \emph{simulation}.

\paragraph{Valuation without simulation: the predictor as a decision-focused surrogate.}
This interpretation clarifies what $\hat u_\phi$ is: an amortized, decision-focused projection of the generative model, enough of $p_\psi$ to rank transitions by their epistemic value, and no more. That is why the predictor can be small relative to the executive (\cref{prop:amortized-screening}), and why its failures (\cref{sec:exploration,sec:selective-supervision}) are failures to \emph{value} transitions rather than to \emph{simulate} them. It also motivates the control-theoretic interpretation that follows: once interactions are transitions of a system with a generative model, choosing them to optimize a long-horizon objective is a control problem, and model predictive control is its standard closed-loop solution.

\subsection{Understanding Through the Lens of Control Theory}
\label{sec:mpc}

\paragraph{Receding-horizon control of a learning plant.}
Conditional interaction is \emph{structurally analogous} to model predictive control (MPC) of a non-stationary plant whose state is the policy parameter. The analogy is not an equivalence (MPC selects continuous actions; conditional interaction selects subsets of interactions), but it holds component by component.

\emph{(1) Model.} The plant state is $\theta_t$; the predictor $\Phi$ forecasts how an interaction batch $\mathcal{B}_t$ moves it,
\begin{equation}
  \hat\theta_{t+h} = \Phi_h(\theta_t,\mathcal{B}_t,\ldots),
  \qquad
  \theta_{t+1} = \theta_t - \eta\,\nabla_\theta \calL(\theta_t,\mathcal{B}_t) + \xi_t,
  \label{eq:policy-dynamics}
\end{equation}
where the right-hand equation gives the true dynamics with gradient noise $\xi_t$ of covariance $\propto\abs{\mathcal{B}_t}^{-1}$.

\emph{(2) Prediction.} System-1 forecasts the evaluation risk $R(\tau_{\text{eval}};\hat\theta_{t+h})$, $h=1,\dots,H$, without executing any interaction (Eq.~(\ref{eq:amortized-assessment})).

\emph{(3) Receding-horizon planning.} The control law minimizes discounted predicted risk over admissible batches,
\begin{equation}
  \mathcal{B}_t^* = \argmin_{\mathcal{B}\subseteq\calD,\,\abs{\mathcal{B}}=B}\;
    \sum_{h=1}^H \gamma^h\,\E\!\left[R(\tau_{\text{eval}};\hat\theta_{t+h})\right];
  \label{eq:mpc-control}
\end{equation}
$H=1$ is a one-step, risk-based analogue of the greedy gate of Eq.~(\ref{eq:ci-policy}). It recovers that gate exactly when a candidate's gross utility is its expected reduction of evaluation risk, costs are equal or priced into the score as in Eq.~(\ref{eq:net-score}), and the effect of a batch decomposes into candidate-level contributions, while $H>1$ accounts for how today's selection shapes tomorrow's reachable states.

\emph{(4) Execution.} Only the first batch is executed and the rest re-planned, which makes the loop robust to model error in distant horizons.

\emph{(5) Update.} Executed outcomes $y_t$ update the model, $p(\Phi\mid\calH_{0:t+1}) \propto p(y_t\mid\calH_{0:t},\Phi)\,p(\Phi\mid\calH_{0:t})$ (Eq.~(\ref{eq:rpm-fitting})).

Unlike textbook MPC, plant and controller co-evolve: the state $\theta_t$ that the controller steers is the same object whose dynamics the model $\Phi$ is still learning. Stability is therefore not classical; it rests on System-1 adapting faster than System-2 (\cref{sec:architecture}), and the closed-loop procedure below is this MPC cycle instantiated.

\paragraph{Index policies and optimal stopping.}
Define the value of executing $i$ as
\begin{equation}
  V(i,\calH_{0:t}) = \E\!\bigl[\Delta\calP(i)\mid\calH_{0:t}\bigr] - \calC(i),
  \label{eq:VoI}
\end{equation}
and execute $i$ iff $V\ge V_\emptyset$, the value of deferring. Here and in Eq.~(\ref{eq:adm-rule}), $\calC(i)$ is the \emph{priced} cost, in the units of the utility: $\lambda\,\calC_{\mathrm{raw}}(i)$ for a scalar ledger, or the composite price of Eq.~(\ref{eq:agent-cost}); $V_\emptyset$ is on the same scale. \emph{Under a discounted, independent-arm bandit approximation with a retirement option}, the optimal policy is an index rule: execute while the Gittins index $\mathfrak{G}(i,\calH_{0:t})$ \citep{gittins1979} exceeds the retirement value $\kappa_t$,
\begin{equation}
  \pi_{\mathrm{CI}}^*(i) = \indicator{\mathfrak{G}(i,\calH_{0:t})\ge\kappa_t}.
  \label{eq:gittins}
\end{equation}
The Gittins guarantee holds only under those bandit assumptions. Conditional interaction is in general non-stationary (the state $\theta_t$ drifts) and its interactions are dependent (they share one evolving policy), so Eq.~(\ref{eq:gittins}) motivates the threshold form of Eq.~(\ref{eq:ci-policy}) rather than certifying it. With that qualification, the principle inherits the information-theoretic interpretation of active inference (\cref{prop:active-inference}) and the resource-allocation interpretation of index policies as two limiting analyses, each valid under its own assumptions.

\begin{figure}[t]
\centering
\includegraphics[width=\linewidth]{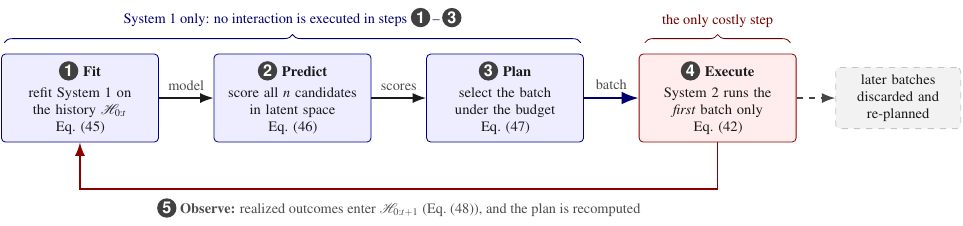}
\caption{\textbf{The closed loop as receding-horizon control.} System~1 is refitted \stepnum{1}, scores the pool \stepnum{2}, and solves a budgeted selection \stepnum{3}; only then does System~2 execute, and only the first batch \stepnum{4}, whose outcomes re-enter the history \stepnum{5}. Steps \stepnum{1}\,--\,\stepnum{3} execute nothing in the environment, but refitting, scoring and solving the budgeted selection still cost compute; the loop's cost is dominated by execution \stepnum{4} only when those steps are cheap enough, which is a design requirement (\cref{prop:amortized-screening}) rather than a consequence of their executing nothing.}
\label{fig:mpcloop}
\end{figure}

\paragraph{Predict, select, execute, refit: the closed loop.}
Fig.~\ref{fig:mpcloop} shows the resulting pipeline. The MPC components are realized by four operations. \emph{Model update:} the predictor's parameters are fitted online by maximizing the conditional log-likelihood of the observed history (the streaming counterpart of Eq.~(\ref{eq:VFE})), in practice by a few stochastic gradient steps per round:
\begin{equation}
  \max_{\phi}\;
  \calL_{\mathrm{ML}}(\phi)
  \;:=\;
  \sum_{t}\ln p_\phi\!\bigl(\calH_t\mid\calH_{0:t-1}\bigr).
  \label{eq:rpm-fitting}
\end{equation}
\emph{Prediction:} for each candidate $\tau_i$, $i=1,\dots,n$, the predictor returns the predicted mean and uncertainty of the learning signal $\ell$,
\begin{equation}
  p_\phi\!\bigl(\ell\mid\tau_i,\calH_{0:t};\theta_t\bigr)
  \;\longrightarrow\;
  \bigl\{m(\ell_i),\,\sigma(\ell_i)\bigr\}_{i=1}^n,
  \label{eq:amortized-assessment}
\end{equation}
at a cost independent of the cost of executing $\tau_i$, since the predictor operates on embeddings.

\emph{Planning and execution:} the estimates feed a budgeted subset selection, the fixed-size restriction of Eq.~(\ref{eq:master}),
\begin{equation}
  \max_{\mathbf{z}_t\in\{0,1\}^n}\;
  \sum_{i=1}^n z_t^{(i)}\,\calA(\ell_t^{(i)})
  \;+\;
  \mu\,\calS\!\Bigl(\bigl\{\tau_t^{(i)}: z_t^{(i)}=1\bigr\}\Bigr),
  \qquad
  \text{s.t.}\;\; \norm{\mathbf{z}_t}_1 = B,
  \label{eq:subset-selection}
\end{equation}
where $\calA$ aggregates $m(\ell_i)$ and $\sigma(\ell_i)$ and $\calS$ rewards diversity among co-selected interactions. If $\calA\ge0$, $\mu\ge0$, and $\calS$ is monotone submodular, the objective is monotone submodular and greedy selection attains a $(1-1/e)$ approximation. The condition matters: a diversity term that sums pairwise distances, as in Eq.~(\ref{eq:phys-acquisition}), is supermodular, since a new item adds more pairs to a larger set, so the guarantee does not cover it. Monotone submodular choices include facility-location coverage and $\log\det(I+K_S)$ for a positive semidefinite similarity kernel $K$.

\emph{Feedback:} executed outcomes are appended to the history,
\begin{equation}
  \calH_{0:t+1}
  \;\leftarrow\;
  \calH_{0:t}\,\cup\,
  \Bigl\{\bigl(\tau_t^{(i)},\,\ell_t^{(i)}\bigr)\,:\,z_t^{(i)}=1\Bigr\},
  \label{eq:history-update}
\end{equation}
and fed back into Eq.~(\ref{eq:rpm-fitting}) on the next round. The update is asymmetric: only executed interactions enter $\calH$, so the predictor is trained on a biased sample of its own choosing, and the validity of the procedure rests on the timescale separation discussed in \cref{sec:architecture}; the statistical consequences of this censoring are discussed in \cref{sec:selective-supervision}. Together, the four operations realize a complete closed-loop instance of the principle in which the predictive subsystem and the executive policy co-adapt through their shared history. Model predictive task sampling and its in-context extension run this loop over tasks \citep{wang2025modelpredictivetasksampling, wen2026bridging}, and recent agentic RL methods carry it inside long-horizon trajectories, allocating rollouts to shared prefixes and propagating terminal rewards back through the resulting trees \citep{zou2026trace, zou2026tree}.

\section{Conditional Interaction Across Digital and Physical Domains}
\label{sec:domains}
\label{sec:unified-domains}

Reasoning-model fine-tuning and embodied-agent training are instances of one abstraction, distinguished by state space, cost structure, and verifier. In both, conditional interaction learns a predictor of value from past outcomes and executes the interactions whose predicted value clears the threshold $\kappa_t$ of \cref{def:ci-policy}. The two instantiations share one predicament --- \emph{finding out whether a task is worth training on often costs almost as much as training on it} --- while differing in why. \Cref{tab:tt-ledger} summarizes the correspondence together with the test-time and agentic settings.

\subsection{Digital and Physical Instantiations in the Literature}
\label{sec:digital}
\label{sec:physical}

\paragraph{Digital: from filtering after generation to predicting before it.}
In RLVR only prompts whose sampled responses disagree carry a group-relative advantage, and that band drifts as the policy improves \citep{qu2026listwise}. Early methods filter after generating: dynamic sampling in DAPO discards prompts whose groups are uniformly correct or wrong \citep{yu2026dapo}, and online difficulty filtering keeps prompts of intermediate pass rate \citep{bae2026online}, both paying in full for the rollouts that reveal a prompt to be uninformative --- the execute-then-filter baseline of \cref{prop:amortized-screening}. Later methods predict before generating: GRESO skips prompts that their reward history marks as likely zero-variance \citep{zheng2025act}; MoPPS tracks each prompt's success rate with an online Bayesian posterior and approaches an evaluation-intensive dynamic-sampling baseline with far fewer rollouts \citep{qu2026moppps}; DPS, BOTS and PCL model prompt difficulty with a hidden-Markov process, Thompson sampling and an on-policy value model, respectively \citep{mao2026dynamics,shen2026bots,gao2026prompt}; and GPS trains one small generative predictor on the shared optimization history to select prompts for a much larger reasoning model, improving training efficiency and test-time compute allocation \citep{qu2026gps}. The same move extends beyond prompts, to online batch selection in supervised fine-tuning \citep{zou2026uds}, to demonstration selection for in-context learning, where a small trained model makes a single edit to the retrieved demonstrations instead of iteratively scoring or searching over demonstration subsets \citep{wen2026you}, to rollout allocation across tasks and trajectory prefixes in agentic RL \citep{zou2026trace}, and to critic-free, tree-aware credit assignment for long-horizon agents \citep{zou2026tree}. These results \emph{illustrate} rather than \emph{validate} the framework.

\paragraph{Physical: from randomizing to predicting adaptation risk.}
For embodied agents, judging a task configuration means acting on it, on hardware or in a high-fidelity simulator, at the pace of physics. Domain randomization trains across wide ranges of simulated appearance and dynamics \citep{tobin2017domain}, with an explicit assessment of transfer \citep{muratore2018domain}; because uniform randomization spends most trajectories on mastered configurations, later work prioritizes, through active and automatic domain randomization \citep{mehta2020active,akkaya2019solving} and through environment design and level replay driven by regret or learning potential \citep{dennis2020emergent,jiang2021prioritized}, though still by executing configurations to score them. When robustness matters, the target is the adverse tail of post-adaptation risk, measured by the conditional value-at-risk \citep{rockafellar2000} and pursued in robust meta-learning \citep{finn2017model,wang2023simple,lv2024theoretical}, risk-averse planning \citep{carpin2016risk} and safety-critical scenario generation \citep{rempe2022generating}. Locating that tail requires evaluating many tasks; MPTS removes the evaluation by learning a risk predictive model from the optimization history \citep{wang2025modelpredictivetasksampling}, and its successors add posterior sampling and diversity \citep{qu2025fast} and in-context risk prediction \citep{wen2026bridging}, yielding curriculum-like sampling under a calibrated predictor.

\paragraph{What the two literatures share.}
Evaluation requires generating in the digital regime and acting in the physical one, yet both literatures converge on the same loop: System-1 scores a candidate from cheap features and history, System-2 executes only what clears the score, and System-2's outcomes refit System-1. They differ in utility (mixed outcomes versus tail adaptation risk; \cref{rem:utility-scope}) and in the conserved resource (compute versus actuation); their acquisition functions are collected in \cref{app:acquisition}. Unlike post-hoc data valuation by influence functions or data Shapley \citep{koh2017understanding,ghorbani2019data}, and unlike active learning, experimental design and bandit index policies over a static pool \citep{krishnakumar2007active,chaloner1995bayesian,gittins1979}, these methods value candidates \emph{before} execution and track a \emph{moving} learner. They also extend earlier fast--slow pairings, such as expert iteration and AlphaZero \citep{anthony2017thinking,silver2018general} and speculative decoding \citep{leviathan2023fast}, from one game or one decoding step to training and deployment.

\section{Conditional Interaction at Test Time}
\label{sec:test-time}

The framework so far concerns training. The same decision problem appears at inference, but with a different state, utility and cost ledger. Nothing in \cref{def:ci-policy} and Eq.~(\ref{eq:IIE}) requires the training setting: the rule gates an interaction on predicted utility net of cost, whether the resource is a training rollout or an inference-time forward pass. We argue that \emph{test-time scaling} and \emph{adaptive decision-making}, usually studied separately, can be cast as test-time instances of conditional interaction. Prospective metacognitive control of test-time compute is an active line of work in its own right \citep{nazi2026triage, cao2026metacognitive}; the contribution here is not to discover it but to place it in the same ledger as training. Co-design therefore does not end at deployment: the System-1 that shaped training goes on to ration System-2's inference effort.

At deployment $\theta$ is fixed; what varies, and costs, is the computation and the number of real-world actions spent per output. An interaction is now an \emph{increment of inference}: a reasoning step, an extra sample, a tool or retrieval call, or a physical action. The executive state is $b_t = p(a\mid q,\calH_{0:t})$, a predictive distribution over candidate answers or latent solution states $a$ for query $q$, which we call the \emph{belief state} for convenience without requiring a literal Bayesian posterior. The utility of an increment is
\begin{equation}
  \mathbb{I}_{\mathrm{tt}}(i_t)
  \;=\;
  \E\!\bigl[\,\KL{b_{t+1}}{b_t}\;\big|\;\calH_{0:t},\,i_t\,\bigr],
  \label{eq:tt-utility}
\end{equation}
and its cost is the inference compute or actuation it requires, so \cref{def:ci-policy} applies verbatim with $\hat u_\phi$ predicting Eq.~(\ref{eq:tt-utility}) and $\kappa_t$ as a stopping threshold. Eq.~(\ref{eq:tt-utility}) measures epistemic movement, not improvement: a changed answer is not necessarily a better one, so movement of $b_t$ should be interpreted as computational sensitivity, and accuracy must be checked separately. The test-time hypothesis is that this sensitivity predicts where additional computation is useful. Only the ledger changes --- parameter belief change becomes answer belief change, and training compute becomes inference compute --- and that substitution is the entire test-time claim.

\subsection{Face I: Efficient Test-Time Scaling}
\label{sec:tts}

Test-time scaling improves a fixed model by spending more inference per query (longer CoT, repeated sampling with aggregation, tree search, tool calls). This spend is highly heterogeneous across queries: easy ones are solved by the first short trajectory and the hardest by no feasible budget, so extra reasoning moves $b_t$ only in an intermediate band, the analogue of the partially solved prompts of \cref{sec:digital}. Uniform test-time scaling therefore inherits the vanishing marginal efficiency of Eq.~(\ref{eq:marginal-efficiency}), now in inference cost. Conditional interaction continues to spend only while the predicted change of increment $c_t$ clears the threshold:
\begin{equation}
  \text{halt at step } t^\star
  \;=\;
  \min\bigl\{\,t : \hat u_\phi(c_{t}\mid\calH_{0:t}) < \kappa_t \,\bigr\},
  \label{eq:adaptive-halt}
\end{equation}
the optimal-stopping form of Eqs.~(\ref{eq:VoI}) and~(\ref{eq:gittins}) applied at inference. Familiar techniques are choices of increment and predictor: adaptive-length reasoning and early exit gate one more decoding step, adaptive self-consistency one more sample, budget-aware tree search one more node expansion. All suppress increments predicted to leave $b_t$ unchanged, so one proposition covers them, with one caveat. Eq.~(\ref{eq:tt-utility}) values the \emph{next} increment, and an increment can leave $b_t$ unchanged while enabling later ones that do change it, as when an intermediate thought pays off only several steps on. A gate that halts on the next-step utility can therefore stop before such a delayed gain. The guarantee below is stated for the \emph{continuation utility}
\begin{equation}
  u_{\mathrm{cont}}(q)\;=\;\E\!\bigl[\,\KL{b_C}{b_1}\;\big|\;q,\calH_{0:1}\,\bigr],
  \label{eq:tt-continuation}
\end{equation}
the expected belief change from the first increment to the full budget $C$, which System-1 must predict after the probing increment and whose prediction cost is charged as $c_{\text{pred}}$. Predicting the value of the next token, of a whole continuation, and of the final answer are different tasks, and the second and third are generally harder. Eq.~(\ref{eq:adaptive-halt}) with the next-step utility is covered only under a \emph{no-delayed-gain} condition: a query whose next increment leaves $b_t$ unchanged in expectation gains nothing from any later increment. Two stopping policies must also be kept apart. The \emph{one-probe gate} spends one increment, predicts Eq.~(\ref{eq:tt-continuation}), and then stops or runs to the full budget; it is the policy \cref{prop:tts-efficiency} covers. A \emph{sequential halting} policy, Eq.~(\ref{eq:adaptive-halt}), re-decides at every step and should score the continuation value from the current state, $u_{\mathrm{cont},t}(q)=\E[\KL{b_C}{b_t}\mid q,\calH_{0:t}]$; a halt at step $t$ then loses accuracy only on a false negative, so by a union bound its accuracy loss is at most $\sum_t\delta_{\mathrm{FN},t}$, the sum of the per-step false-negative masses, and its cost must charge a prediction at every step. We do not develop that guarantee further. Its error term, and that of \cref{prop:agentic-efficiency}, rests on the following margin condition, which turns a score error into a misclassification rate.

\begin{lemma}[{Misclassification Under a Margin Condition}]
\label{lem:margin}
Let candidates $i\sim\calD$ be utility-relevant when $u(i)\ge u_{\min}$, let the gate admit $i$ when $\hat u(i)\ge u_{\min}$, and suppose $\abs{\hat u(i)-u(i)}\le\varepsilon$ for every $i$. More generally, let the gate admit $i$ when $\hat u(i)\ge\kappa$. Every misclassified candidate, admitted while dead or suppressed while relevant, then lies in the band $\mathcal{B}_\varepsilon=\{i:\kappa-\varepsilon\le u(i)<u_{\min}\}\cup\{i:u_{\min}\le u(i)<\kappa+\varepsilon\}$, so the misclassified masses satisfy $\delta_{\mathrm{FP}}+\delta_{\mathrm{FN}}\le\Pr_{i\sim\calD}(\mathcal{B}_\varepsilon)$. With $\kappa=u_{\min}$, if the true utility satisfies the margin condition $\Pr_{i\sim\calD}\bigl(\abs{u(i)-u_{\min}}\le\varepsilon\bigr)\le m\,\varepsilon$ with constant $m$, then the gate misclassifies, by admitting a dead candidate or suppressing a relevant one, at most a fraction $m\,\varepsilon$ of candidates.
\end{lemma}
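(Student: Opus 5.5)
The argument is a direct case analysis on the two ways the gate can err, followed by a measure bound. It uses only the uniform error assumption $\abs{\hat u(i)-u(i)}\le\varepsilon$ and the definitions of relevant and admitted candidates, not any earlier result.

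First, for a general threshold $\kappa$, I would take a misclassified candidate and split into the two cases.

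\emph{False positive.} The candidate is admitted while dead, so $\hat u(i)\ge\kappa$ and $u(i)<u_{\min}$. The error bound gives $u(i)\ge\hat u(i)-\varepsilon\ge\kappa-\varepsilon$. Hence $\kappa-\varepsilon\le u(i)<u_{\min}$, which is the first piece of $\mathcal{B}_\varepsilon$.

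\emph{False negative.} The candidate is suppressed while relevant, so $\hat u(i)<\kappa$ and $u(i)\ge u_{\min}$. The error bound gives $u(i)\le\hat u(i)+\varepsilon<\kappa+\varepsilon$. Hence $u_{\min}\le u(i)<\kappa+\varepsilon$, which is the second piece.

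Next, the two events are disjoint, since a candidate is either dead or relevant and either admitted or suppressed. Both are contained in $\mathcal{B}_\varepsilon$, so their probabilities add and monotonicity of $\Pr_{i\sim\calD}$ gives $\delta_{\mathrm{FP}}+\delta_{\mathrm{FN}}\le\Pr(\mathcal{B}_\varepsilon)$. I read $\delta_{\mathrm{FP}}$ and $\delta_{\mathrm{FN}}$ as masses under $\calD$, not as class-conditional rates. Measurability of $\mathcal{B}_\varepsilon$ follows once $u$ is assumed measurable.

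Finally, I specialize to $\kappa=u_{\min}$. The band becomes $\{u_{\min}-\varepsilon\le u<u_{\min}\}\cup\{u_{\min}\le u<u_{\min}+\varepsilon\}=\{u_{\min}-\varepsilon\le u<u_{\min}+\varepsilon\}$. This set is contained in $\{\abs{u-u_{\min}}\le\varepsilon\}$, so the margin condition bounds its mass by $m\varepsilon$.

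There is no real obstacle here. The one thing to check is the boundary conventions: strict versus weak inequalities at $\kappa\pm\varepsilon$ and at $u_{\min}$ must match the gate's rule $\hat u\ge\kappa$, so that the half-open band is indeed contained in the closed margin set.
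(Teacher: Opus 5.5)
Your proposal is correct and matches the paper's proof: the same two-case analysis places false negatives in $[u_{\min},\kappa+\varepsilon)$ and false positives in $[\kappa-\varepsilon,u_{\min})$, and at $\kappa=u_{\min}$ the band is contained in $\{\abs{u-u_{\min}}\le\varepsilon\}$. Your extra remarks, that the two error events are disjoint and that $\delta_{\mathrm{FP}},\delta_{\mathrm{FN}}$ are unconditional masses under $\calD$, agree with how the paper uses these quantities.
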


\noindent A density of $u$ bounded by $m/2$ on $[u_{\min}-\varepsilon,u_{\min}+\varepsilon]$ gives the condition. The band, not the margin condition, is the general object, and atoms are the case that matters in practice. At test time many queries have continuation utility exactly zero, and relevance means $u>0$ ($u_{\min}\to0^+$); the whole atom then lies within $\varepsilon$ of the floor, and the margin condition with $\kappa=u_{\min}$ fails. Placing the gate threshold above the error, $\kappa>\varepsilon$, excludes the atom: a query with $u=0$ has $\hat u\le\varepsilon<\kappa$ and is never admitted, so $\delta_{\mathrm{FP}}=0$ and $\delta_{\mathrm{FN}}\le\Pr(0<u<\kappa+\varepsilon)$, the mass of small but nonzero utilities. Whether that mass is small is a property of the task distribution, to be measured on an audit pool, not an assumption to take for granted.

\begin{proposition}[Conditional Test-Time Cost Bound Under a Halting Margin]
\label{prop:tts-efficiency}
Let $\calA^{\mathrm{tt}}$ be the queries on which the full budget is more accurate in expectation than one increment, with density $\rho_{\mathrm{acc}}^{\mathrm{tt}}$, and let the gate threshold a declared test-time utility whose relevant set contains $\calA^{\mathrm{tt}}$, with density $\rho_u^{\mathrm{tt}}\ge\rho_{\mathrm{acc}}^{\mathrm{tt}}$. Let unconditional scaling spend $C$ unit increments per query, let the answer be a function of the belief state, and let the selection policy be the one-probe gate: it spends one probing increment plus $c_{\text{pred}}$ and then halts or continues to the full budget, with $\delta_{\mathrm{FN}}$ and $\delta_{\mathrm{FP}}$ the fractions of all queries that are relevant but halted and dead but continued. Then the gated policy loses at most $\delta_{\mathrm{FN}}$ accuracy while spending at most a fraction
\begin{equation*}
  \rho_u^{\mathrm{tt}}+\frac{1+c_{\text{pred}}}{C}+\delta_{\mathrm{FP}}
\end{equation*}
of the unconditional compute, a multiplier approaching $1/\rho_u^{\mathrm{tt}}$ as $C\to\infty$ and $\delta_{\mathrm{FP}}\to0$. By \cref{lem:margin}, $\delta_{\mathrm{FN}}+\delta_{\mathrm{FP}}\le\Pr(\mathcal{B}_{\varepsilon_\phi})$ for a predictor with error at most $\varepsilon_\phi$, which is at most $m\varepsilon_\phi$ under the margin condition. The tightest choice is the accuracy-gain utility itself, for which $\rho_u^{\mathrm{tt}}=\rho_{\mathrm{acc}}^{\mathrm{tt}}$. The continuation utility of Eq.~(\ref{eq:tt-continuation}) with $u_{\min}\to0^+$ is also admissible: if its value is zero then $b_C=b_1$ almost surely, so the answer, and with it the accuracy, is unchanged; its gate should use $\kappa>\varepsilon_\phi$, so that the atom of zero-utility queries is never admitted. With density $\rho_{\mathrm{cont}}^{\mathrm{tt}}\ge\rho_{\mathrm{acc}}^{\mathrm{tt}}$ it gives the weaker multiplier $1/\rho_{\mathrm{cont}}^{\mathrm{tt}}$. The next-step utility of Eq.~(\ref{eq:tt-utility}) is admissible only under the no-delayed-gain condition.
\end{proposition}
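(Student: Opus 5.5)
The plan is to treat the one-probe gate as a per-query two-branch policy and account for accuracy and cost separately, query by query, then average over $q\sim\calD$.

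\emph{Step 1: partition the queries.} Every query is classified by two attributes: relevant (in the declared relevant set, which contains $\calA^{\mathrm{tt}}$) or dead, and continued or halted after the probe. This gives four cells: true positives, false negatives (mass $\delta_{\mathrm{FN}}$), false positives (mass $\delta_{\mathrm{FP}}$), and true negatives.

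\emph{Step 2: accuracy.} The gated policy differs from unconditional scaling only on halted queries. There its answer is the one-increment answer, since the answer is a function of the belief state $b_1$.
\begin{itemize}
\item On a halted dead query, the query lies outside the relevant set, hence outside $\calA^{\mathrm{tt}}$. So the full budget is not more accurate in expectation than one increment, and halting loses nothing in expectation.
\item On a halted relevant query, the loss per query is at most one, because accuracy lies in $[0,1]$.
\end{itemize}
Integrating over queries gives an expected accuracy loss of at most $\delta_{\mathrm{FN}}$.

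\emph{Step 3: cost.} Every query pays the probe plus prediction, $1+c_{\text{pred}}$. Continued queries additionally pay at most $C-1\le C$ increments. The continued mass is at most $\rho_u^{\mathrm{tt}}-\delta_{\mathrm{FN}}+\delta_{\mathrm{FP}}\le\rho_u^{\mathrm{tt}}+\delta_{\mathrm{FP}}$. Dividing by the unconditional spend $C$ per query yields the fraction
\[
  \rho_u^{\mathrm{tt}}+\frac{1+c_{\text{pred}}}{C}+\delta_{\mathrm{FP}}.
\]
The multiplier limit $1/\rho_u^{\mathrm{tt}}$ is then immediate as $C\to\infty$ and $\delta_{\mathrm{FP}}\to0$.

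\emph{Step 4: error terms.} The bound $\delta_{\mathrm{FN}}+\delta_{\mathrm{FP}}\le\Pr(\mathcal{B}_{\varepsilon_\phi})$, and $\le m\varepsilon_\phi$ under the margin condition, follows by invoking \cref{lem:margin} with $u$ the declared test-time utility and $\varepsilon=\varepsilon_\phi$. Taking the accuracy-gain utility makes the relevant set exactly $\calA^{\mathrm{tt}}$, which gives the tightest case $\rho_u^{\mathrm{tt}}=\rho_{\mathrm{acc}}^{\mathrm{tt}}$.

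\emph{Step 5: admissibility of the continuation utility (the main obstacle).} I must show that every query in $\calA^{\mathrm{tt}}$ has $u_{\mathrm{cont}}>0$. Suppose instead $u_{\mathrm{cont}}(q)=0$.
\begin{itemize}
\item Then $\KL{b_C}{b_1}=0$ almost surely, because KL is non-negative and its conditional expectation vanishes.
\item Hence $b_C=b_1$ almost surely.
\item So the answers coincide, and the full budget is not more accurate, contradicting $q\in\calA^{\mathrm{tt}}$.
\end{itemize}
Therefore the relevant set for $u_{\min}\to0^+$ contains $\calA^{\mathrm{tt}}$, so $\rho_{\mathrm{cont}}^{\mathrm{tt}}\ge\rho_{\mathrm{acc}}^{\mathrm{tt}}$, and the bound above applies with the weaker density.

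For this utility the zero-utility queries form an atom sitting at the floor, so the margin form of \cref{lem:margin} fails. I will use the remark after \cref{lem:margin} with threshold $\kappa>\varepsilon_\phi$:
\begin{itemize}
\item a zero-utility query has $\hat u\le\varepsilon_\phi<\kappa$, so it is never admitted and $\delta_{\mathrm{FP}}=0$ on the atom;
\item the remaining error is controlled through the band mass $\Pr(\mathcal{B}_{\varepsilon_\phi})$.
\end{itemize}
Finally, the next-step utility is admissible only under no-delayed-gain. That condition states exactly that zero next-step utility implies zero gain from every later increment, so the same containment argument applies. I will note this rather than prove it, since without the condition the containment can fail.
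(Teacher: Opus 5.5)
Your proposal is correct and follows essentially the same route as the paper's proof. Both use the four-cell partition of queries, the class-by-class accuracy accounting in which only false negatives can lose (at most one unit each), the cost bound $(\rho_u^{\mathrm{tt}}+\delta_{\mathrm{FP}})C+1+c_{\text{pred}}$ divided by $C$, and the non-negativity-of-KL argument showing that zero continuation utility forces $b_C=b_1$ almost surely.
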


\noindent The proposition is the inference-time counterpart of Eq.~(\ref{eq:IIE-ratio}), with the per-query halting decision in place of per-batch selection, and only $\rho_u^{\mathrm{tt}}$ has to be re-estimated. Its status needs care: the multiplier follows \emph{given} a small informative fraction and an accurate halting predictor, and unlike the training side neither premise has been measured, making this the one place where our efficiency claims lack even indirect support. The falsification protocol is direct: measure $\rho_{\mathrm{acc}}^{\mathrm{tt}}$ as the fraction of queries whose answer becomes correct when the budget is raised from a short default to the full one, run unconditional scaling at full budget for reference accuracy, then run the one-probe gate and record accuracy and mean compute. A larger accuracy loss than the false-negative mass $\delta_{\mathrm{FN}}$ measured on an audit sample, or a failure to realize the saving, refutes the test-time claim.

\subsection{Face II: Efficient Adaptive Decision-Making}
\label{sec:adm}

For systems acting in the world, the increments are external actions (a manipulation, a sensor query, a tool call, a clarifying question), each with a real cost and the capacity to change beliefs. The efficient policy acts when predicted value of information net of cost is positive, i.e.\ Eq.~(\ref{eq:VoI}) promoted to an online rule:
\begin{equation}
  \text{execute } i_t
  \quad\Longleftrightarrow\quad
  \E\!\bigl[\Delta\calP(i_t)\mid\calH_{0:t}\bigr]
  \;-\;
  \calC(i_t)
  \;\ge\;
  V_\emptyset,
  \label{eq:adm-rule}
\end{equation}
with $\Delta\calP(i_t)$ the expected improvement in task return, $\calC(i_t)$ the priced cost defined after Eq.~(\ref{eq:VoI}), and $V_\emptyset$ the value of deferring or taking the cheap default, adapted online as in Eq.~(\ref{eq:threshold-update}). Reducing task uncertainty is a different utility, not a second reading of $\Delta\calP$: it instantiates the same rule with $\Delta\calP$ replaced by the expected entropy reduction of the task posterior. Selective tool use, active perception, and interrupt-or-defer control are Eq.~(\ref{eq:adm-rule}) with different $i_t$ and utilities: a tool call scored by answer-entropy reduction, a sensor movement or question scored by world-model belief change, a re-plan triggered when predicted plan value degrades. The adaptation-risk predictor that prioritized training tasks in \cref{sec:physical} becomes, at deployment, the value-of-information predictor that gates actions: efficient adaptive decision-making can be cast as conditional interaction with the world as the executive.

\subsection{Selective Agentic Tool and Computer Use}
\label{sec:agentic}

An agent that plans, retrieves, calls tools, asks questions and operates a computer faces Eq.~(\ref{eq:adm-rule}) against a composite cost. At step $t$ it holds an answer/plan belief state $b_t$ and chooses from a heterogeneous menu (think, retrieve or call a tool, ask, delegate, commit), each action costing
\begin{equation}
  \calC(i_t)
  \;=\;
  \lambda^\top c(i_t)
  \;+\;
  \lambda_{\mathrm{risk}}\,\Pr[\text{irrecoverable}\mid i_t],
  \label{eq:agent-cost}
\end{equation}
where $c(i_t)=\bigl(c_{\text{lat}}(i_t),\,c_{\$}(i_t),\,c_{\text{tok}}(i_t)\bigr)$ collects latency, metered spend and context tokens, $\lambda\ge0$ prices them, and $\lambda_{\mathrm{risk}}$ prices an irrecoverable side effect. In the co-designed pair, System-1 (typically a verifier head or small router) admits $i_t$ only when Eq.~(\ref{eq:adm-rule}) holds for this cost, with $V_\emptyset$ the value of committing the current best answer, and System-2 is the agent that executes it. Tool use and computer use differ only in the size of the space to search: a tool call is one typed request among a few, whereas a computer-use episode branches at each of tens of steps, every one costing a screenshot, a model call and unparallelizable time.

\begin{proposition}[{Conditional Agentic Cost Bound for Separable Actions}]
\label{prop:agentic-efficiency}
Suppose (a) omitting any set of decision-irrelevant actions leaves the answer unchanged; (b) the gate thresholds a predicted relevance score with error at most $\varepsilon_\phi$, and on every task the action scores satisfy the margin condition of \cref{lem:margin} with constant $m$, so that in expectation at most a fraction $m\,\varepsilon_\phi$ of the actions are misclassified; and (c) prediction costs a fraction $r$ of an average action. Consider tasks with $N$ candidate actions in which no action costs more than $R_c$ times the task's average action cost. If utility-relevant actions carry a share $\rho_u^{\calC}$ of the expected call-everything cost (the cost-weighted density of utility-relevant actions, which reduces to their fraction $\rho_u$ when all actions cost the same), the conditional agent's accuracy is within $N m\,\varepsilon_\phi$ of call-everything accuracy, at expected cost
\begin{equation*}
  \E[\calC_{\mathrm{CI}}]
  \;\le\;
  \bigl(\rho_u^{\calC}+r+m\,\varepsilon_\phi\,R_c\bigr)\,
  \E[\calC_{\text{all}}].
\end{equation*}
\end{proposition}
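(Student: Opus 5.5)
Fix a task with $N$ candidate actions $i_1,\dots,i_N$, execution costs $\calC(i_k)$, and average cost $\bar c=\frac1N\sum_k\calC(i_k)$, so that $\calC_{\text{all}}=N\bar c$. For each action introduce three indicators: relevance $R_k=\indicator{u(i_k)\ge u_{\min}}$, admission by the gate $A_k=\indicator{\hat u(i_k)\ge\kappa}$, and misclassification $M_k=\indicator{A_k\ne R_k}$. The proof has two parts, a cost part and an accuracy part. Each is bounded on a single task and then averaged over tasks. The main tool is \cref{lem:margin}, applied within each task.

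\emph{Cost.} The gated agent scores every action, which costs $N\,r\,\bar c=r\,\calC_{\text{all}}$ by (c). It then executes the admitted actions, whose cost splits as
\begin{equation*}
\sum_k A_k\calC(i_k)\le\sum_k R_k\calC(i_k)+\sum_k A_k(1-R_k)\calC(i_k).
\end{equation*}
For the first term, the definition of $\rho_u^{\calC}$ gives $\E\bigl[\sum_kR_k\calC(i_k)\bigr]=\rho_u^{\calC}\,\E[\calC_{\text{all}}]$. For the second term, every false positive is a misclassified action and costs at most $R_c\bar c$. Hence the term is at most $R_c\bar c\sum_kM_k$. By assumption (b) and \cref{lem:margin}, $\E\bigl[\sum_kM_k\mid\text{task}\bigr]\le N m\varepsilon_\phi$. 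The term is therefore at most $m\varepsilon_\phi R_c N\bar c=m\varepsilon_\phi R_c\,\calC_{\text{all}}$ conditionally on the task. Taking the outer expectation and summing the three contributions gives the stated cost bound. I will need care on two points here. First, $\rho_u^{\calC}$ should be read as $\E\bigl[\sum_kR_k\calC(i_k)\bigr]/\E[\calC_{\text{all}}]$, a ratio of expectations, so the first term passes through the expectation exactly. Second, the bound $R_c\bar c$ is task-specific, so I apply it conditionally on the task before averaging; otherwise a cross-task mismatch could appear.

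\emph{Accuracy.} I compare the gated agent with call-everything on a fixed task. Suppose no relevant action is suppressed. Then the gated agent omits only dead actions, and by (a) its answer coincides with the call-everything answer. Admitting extra dead actions is also harmless: by (a), the call-everything run is itself equivalent to running the admitted set with the remaining dead actions omitted. So the answers can differ only on the event that some $R_k=1$ has $A_k=0$. That event is contained in $\{\sum_kM_k\ge1\}$. By Markov's inequality its probability is at most $\E\bigl[\sum_kM_k\bigr]\le Nm\varepsilon_\phi$. Since accuracy is an expectation of a $[0,1]$-valued correctness indicator, the accuracy gap is at most that probability.

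\emph{Main obstacle.} The delicate step is interpreting (a) so that "decision-irrelevant" coincides with "dead" under the declared utility, since the accuracy argument needs exactly that identification. A second, more technical point is that the margin condition bounds the fraction of misclassified actions under a distribution over actions within a task. I will read it as a bound on the expected count $\le Nm\varepsilon_\phi$ of misclassified actions, which is how (b) is phrased ("in expectation at most a fraction"). That reading, rather than a worst-case bound, is what makes both the Markov step and the cost step go through.
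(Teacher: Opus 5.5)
Your proposal is correct and follows essentially the same route as the paper's proof. You split the gated cost on a fixed task into screening ($r\,\calC_{\text{all}}$), executed relevant cost, and false positives capped at $R_c\bar c$ each with expected count at most $Nm\varepsilon_\phi$, then average over tasks via the cost-weighted density; for accuracy you apply a Markov bound to the event that some relevant action is suppressed, outside of which (a) forces the same answer. The identification of ``decision-irrelevant'' with ``dead'' and the in-expectation reading of (b) that you flag are the same implicit readings the paper's proof relies on.
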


\noindent The proposition is therefore a separable-action special case. Assumption (a) fails on long interface trajectories, where a dozen individually uninformative clicks may be jointly needed to reach the screen that matters, and whenever a locally uninformative action, such as navigating, authenticating or opening a file, is a prerequisite for a useful one; there utility belongs to the sub-trajectory or state transition, valued by a state-dependent value of information, and the unit to score is a sub-trajectory, and since the accuracy loss grows linearly in $N$, the per-action error must shrink with the horizon. Pure exploration (\cref{sec:exploration}) and mandated actions such as safety logging lie outside the gate. Across \cref{sec:tts,sec:adm} and this subsection the predict--gate--execute pattern is shared while utilities and costs are not (\cref{tab:tt-ledger}), so a System-1 calibrated against System-2 during training can serve as the inference-time router, with deployment outcomes flowing back as its training data.

\begin{table}[t]
\centering
\small
\renewcommand{\arraystretch}{1.25}
\setlength{\tabcolsep}{4pt}
\resizebox{\textwidth}{!}{%
\begin{tabular}{@{}lllll@{}}
\toprule
\textbf{Schema element} & \textbf{Digital (training)} & \textbf{Physical (training)} & \textbf{Test time} & \textbf{Agentic} \\
\midrule
Interaction $i$ (and execution) & prompt $\tau\to$ CoT generation & task config.\ $\xi\to$ env.\ rollout & reasoning step / sample & tool / think / ask action \\
Executive state         & policy parameters $\theta$ & policy parameters $\theta$ & answer belief state $b_t$ & answer / plan belief state $b_t$ \\
Verifier / feedback     & reward checker / scorer & adaptation loss & consistency / halting signal & tool result / task outcome \\
Utility $\mathbb{I}(i)$ & mixed-outcome proxy $u_{\mathrm{adv}}$ & predicted adaptation risk & accuracy gain (or belief change) & value of information \\
System-1 output         & $\E_{z\sim\pi_t}[w(z)\mid\tau]$ & $\hat{\calR}_{\text{adapt}}(\tau;\theta_t)$ & predicted value of continuing & predicted VoI net of cost \\
Cost $\calC(i)$         & training compute ($L\,d_{\text{model}}^2$) & physics fidelity / actuation & inference compute & latency + \$ + tokens + risk \\
Gate $\kappa_t$         & informative-sample rate & risk threshold & halting threshold & call/commit threshold \\
Efficiency result      & Eq.~(\ref{eq:IIE-ratio}) & \cref{prop:amortized-screening} & \cref{prop:tts-efficiency} & \cref{prop:agentic-efficiency} \\
\bottomrule
\end{tabular}%
}
\caption{One decision pattern under four cost ledgers. Execution is part of each interaction (first row), not a separate stage. The selection pattern is shared across columns; the utilities and learning dynamics are not, and the conserved resource differs: gradients, actuation, inference compute, or the composite cost of Eq.~(\ref{eq:agent-cost}).}
\label{tab:tt-ledger}
\end{table}

\section{A Testable Scaling Hypothesis, and Open Problems}
\label{sec:implications}

Co-design makes two kinds of prediction: what the pair should achieve as it scales, and where it can fail. We state the first as a testable hypothesis about informative compute, then turn to the main obstacle --- System-1 is supervised only by the interactions it lets System-2 execute --- and to open problems. It helps to separate three kinds of claim. \emph{Proved under stated assumptions}: the IIE accounting (\cref{prop:iie-gain}), the screening and memory results (\cref{prop:amortized-screening,prop:memory-scope}), the regret bounds (\cref{thm:regret,cor:regret-hp}), the test-time and agentic cost bounds (\cref{prop:tts-efficiency,prop:agentic-efficiency}), co-scaling under fixed error curves (\cref{prop:co-scaling}) and horizontal rescaling (\cref{prop:rescaling}). \emph{Supported by application-specific evidence}: the savings of predictive prompt and task sampling (\cref{sec:digital}). \emph{Proposed hypotheses}: a co-designed frontier that dominates on cost, quality of service and task success at once (Fig.~\ref{fig:pareto3d}), the decay of $\rho_u$ (\cref{prin:decay}), and a changed scaling exponent.

\subsection{The Scaling Hypothesis: Informative Compute}

Standard scaling laws express loss as a power law in total compute, $\calL(\calC)=(\calC_0/\calC)^{\alpha_C}$, with
\begin{equation}
  \calC_{\mathrm{info}}(\calC)\;=\;\int_0^{\calC}\rho_u^{\mathrm{exec}}(c)\,dc
  \label{eq:scaling-law}
\end{equation}
the \emph{informative compute} absorbed along a run, where $\rho_u^{\mathrm{exec}}(c)$ is the executed fraction at cumulative compute $c$, the share of compute then spent on utility-relevant interactions. Under unconditional training it equals the \emph{cost-weighted} pool density
\begin{equation}
  \rho_u^{\calC}(\theta_c)\;=\;\frac{\E_{i\sim\calD}\bigl[\calC(i)\,\indicator{u(i;\theta_c)\ge u_{\min}}\bigr]}{\E_{i\sim\calD}[\calC(i)]},
  \label{eq:rho-cost}
\end{equation}
the compute share of relevant interactions, as in \cref{prop:agentic-efficiency}; it reduces to the interaction share $\rho_u(\theta_c)$ only when execution costs are equal, and can differ substantially otherwise. Under a perfect gate it is one. With equal costs, as \cref{prop:rescaling} assumes, the two runs of \cref{prop:rescaling} accumulate informative compute at rates $\rho_u$ and $1$. The scaling hypothesis below is stated for $\rho_u$ in this equal-cost case; with heterogeneous costs, $\rho_u^{\calC}$ takes its place. The framework predicts more than a constant-factor gain: if the fraction of utility-relevant interactions changes as a learner becomes more capable, total compute is not the only meaningful coordinate of scaling, and the question is whether performance curves become more comparable when plotted against Eq.~(\ref{eq:scaling-law}). Two hypotheses answer differently and are experimentally distinguishable: the conservative one is that gating produces only a horizontal rescaling, the stronger one that it also alters the trajectory of interaction density and thereby the effective exponent.

\begin{proposition}[Horizontal Rescaling of the Scaling Curve]
\label{prop:rescaling}
Suppose (a) unconditional training follows a loss curve $\calL_{\mathrm{base}}(\calC)$ on a pool whose density of utility-relevant interactions is a constant $\rho_u\in(0,1]$; (b) dead interactions leave the learner unchanged; (c) progress depends only on the utility-relevant interactions executed, whose distribution and order the selection policy preserves; and (d) the selection policy is perfect, prediction is free, and executions cost the same. Then
\begin{equation}
  \calL_{\mathrm{CI}}(\calC)\;=\;\calL_{\mathrm{base}}\!\left(\calC/\rho_u\right),
  \label{eq:rescaling}
\end{equation}
i.e.\ on a log-compute axis the conditional curve is the unconditional one shifted left by $\ln(1/\rho_u)$.
\end{proposition}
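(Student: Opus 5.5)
The plan is a coupling argument between the two runs, indexed by the sequence of utility-relevant interactions they execute. Under (c), the learner's state after any prefix of training is a function of the ordered sequence of utility-relevant interactions executed so far. I would therefore fix a common realization of that sequence, $(j_1,j_2,\dots)$, drawn from the relevant sub-distribution of $\calD$ in the order the unconditional run would encounter them. I would then write $\theta^{(k)}$ for the learner state after the first $k$ of them. By (b), interleaved dead interactions act as the identity on $\theta$, so in the unconditional run the state after any stream equals $\theta^{(k)}$, with $k$ the number of relevant items seen. By (c) and (d) the gated run executes exactly $j_1,j_2,\dots$ in the same order, so its state after $k$ executions is also $\theta^{(k)}$. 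Loss is a function of state, so $\calL=\ell(\theta^{(k)})$ in both runs.

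The second step is compute accounting. Normalize the common execution cost to one unit, which is allowed by (d). The gated run spends exactly $k$ units to reach $\theta^{(k)}$, because prediction is free and no dead candidate is admitted. The unconditional run spends the number $N_k$ of pool draws needed to collect $k$ relevant ones. Since the pool density is the constant $\rho_u$ by (a), $N_k$ is a negative-binomial count with $\E N_k=k/\rho_u$, and $N_k/k\to1/\rho_u$ almost surely. Hence $\calL_{\mathrm{base}}(\calC)$ is the loss at $k\approx\rho_u\calC$ relevant interactions, and $\calL_{\mathrm{CI}}(\calC)$ is the loss at $k=\calC$. Substituting $\calC\mapsto\calC/\rho_u$ gives $\calL_{\mathrm{CI}}(\calC)=\calL_{\mathrm{base}}(\calC/\rho_u)$. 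Taking logarithms, $\ln(\calC/\rho_u)=\ln\calC+\ln(1/\rho_u)$, which is the stated leftward shift.

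The main obstacle is making the identity exact rather than asymptotic, since $N_k$ is random. I would handle this by interpreting $\calL_{\mathrm{base}}$ as the deterministic curve of assumption (a), where the unconditional run absorbs relevant interactions at the constant rate $\rho_u$. This is the fluid accounting identity $\calC_{\mathrm{info}}=\rho_u\calC$ from Eq.~(\ref{eq:scaling-law}) with $\rho_u^{\mathrm{exec}}\equiv\rho_u$. Under that reading both curves are the same function of informative compute, $\calL=g(\calC_{\mathrm{info}})$, with $\calC_{\mathrm{info}}=\rho_u\calC$ for the base run and $\calC_{\mathrm{info}}=\calC$ for the gated run. The identity then follows by composition. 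I would remark that for a stochastic pool it holds in expectation over the count and asymptotically by the strong law, and that assumptions (b)–(c) are exactly what rule out any change in trajectory, and hence in the exponent.
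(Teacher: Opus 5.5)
Your proposal is correct and follows the paper's proof. Both rest on the informative-compute accounting ($\rho_u\calC'$ for the unconditional run, $\calC$ for the gated one) and on the observation that, by (b) and (c), two runs with equal informative compute reach the same state and loss; your fluid reading of (a) is the deterministic accounting the paper's Step~1 assumes, and your coupling and negative-binomial discussion make that reading explicit. One caution on your closing remark: with a random count, what holds exactly is $\E N_k=k/\rho_u$, the expected compute needed to reach $\theta^{(k)}$, not an identity between expected losses at fixed compute, which generally differ by a Jensen gap unless the loss is linear in the number of relevant interactions.
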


This is a rescaling identity, valid for any shape of $\calL_{\mathrm{base}}$: it is the precise content of the $1/\rho_u$ multiplier used throughout. Assumption (a) fixes $\rho_u$, which the next hypothesis denies in general; when the pool fraction drifts, the constant factor becomes the time-varying reparameterization Eq.~(\ref{eq:scaling-law}), and that integral is what the curve-collapse test below plots against.

\begin{hypothesis}[Informative Interaction Decay]
\label{prin:decay}
After an initial transient, $\rho_u(\theta_t)$ is non-increasing as the system improves: mastered interactions leave the utility-relevant set, so $\rho_u$ tracks the shrinking measure of reachable but not-yet-mastered interactions.
\end{hypothesis}

\noindent The hypothesis is empirical and can fail, since new capabilities can expose new informative tasks and an improved predictor changes what is selected. Its logical direction matters, and it rests on an assumption that the definition of $\rho_u$ does not supply. Nothing in Eq.~(\ref{eq:rho-operational}) makes relevant interactions equally useful: two prompts with non-zero advantage can produce updates of very different size and direction. We therefore assume \emph{constant conditional progress}: over the regime studied, a unit of utility-relevant compute reduces the expected loss by the same amount $\eta$. Under that assumption unconditional learning is the absorption of informative compute, $d\calL/d\calC=-\eta\,\rho_u(\theta)$, and an observed power law corresponds to one particular decay, $\rho_u=k\,\calL^{1+1/\alpha}$ with $\alpha_C=\alpha$, but it does not follow that $\rho_u$ \emph{produces} the scaling law. The construction establishes consistency, not derivation: an \emph{independently measured} trajectory must satisfy it, so slower decay means a larger exponent. Gradient noise, intrinsic dimension and loss curvature generate power-law-like curves too; $\rho_u$ is simply the most \emph{actionable}, since gating intervenes on it directly (\cref{rem:identifiability}).

\noindent Against informative compute measured as in Eq.~(\ref{eq:rho-operational}), the null implied by \cref{prop:rescaling} predicts that the two curves \emph{coincide}, the alternative that the gated curve is \emph{steeper} even after rescaling. A strict improvement would not by itself establish \cref{prin:decay}, since different optimizer dynamics, lower gradient variance, a selection-induced curriculum or reduced redundancy could all produce it; attributing it to $\rho_u$ requires measuring $\rho_u$ along both runs and testing whether the difference in exponents is explained by the difference in those trajectories. The savings reported in \cref{sec:digital} are consistent with the null.

\subsection{Selective Supervision: The Gate Censors Its Own Training Data}
\label{sec:selective-supervision}

The most consequential technical limitation is structural: once System-1 decides which interactions are executed, it also decides which interactions it is trained on. With $p_{\mathrm{cand}}(i)$ the candidate distribution and $e(i)=\Pr[i\text{ executed}\mid i,\calH_{0:t}]$ the gate's execution probability, the predictor is trained on
\begin{equation}
  p_{\mathrm{train}}(i)\;\propto\;p_{\mathrm{cand}}(i)\,e(i)\;\neq\;p_{\mathrm{cand}}(i),
  \label{eq:selective-train}
\end{equation}
so labels are missing not at random: whether an outcome is observed depends on the predictor's own estimate of it, and in the worst case a candidate scored as low-value is suppressed, its outcome never observed, and the score never corrected. This is the selective-labels problem of algorithmic decision-making \citep{lakkaraju2017selective}, the positivity problem of causal inference, and, under shifting distributions, the label-querying problem of active self-training \citep{zhou2022online,zhou2022active}. Three features make it tractable rather than hopeless. Positivity can be designed in, but not by temperature: the indicator in Eq.~(\ref{eq:ci-policy}) gives every candidate with $\hat u<\kappa_t$ probability zero at every finite $\beta$, so a soft gate only randomizes within the admissible set. Positivity needs an explicit component that reaches the suppressed region: an exploration mixture $\pi_\epsilon=(1-\epsilon)\,\pi_{\mathrm{CI}}+\epsilon\,\pi_0$ with $\pi_0(i)>0$ on the whole pool, which gives $e(i)\ge\epsilon\,\pi_0(i)$ per draw, or independent audit sampling of suppressed candidates at a known rate $a$, which gives $e(i)\ge a$. Either way $e(i)\ge e_{\min}>0$. The propensities are known, because the gate assigns them, so weighting the predictor's loss by $1/e(i)$ \citep{horvitz1952generalization}, or combining that weighting with the predictor's own regression in a doubly robust estimator \citep{dudik2011doubly}, gives unbiased estimates of its error on the full candidate distribution. And the audit sample of suppressed candidates (\cref{sec:amortization}) is exactly such a positivity-preserving exploration stream, provided its rate is fixed independently of the predictor's scores. What remains open is quantitative: how large $e_{\min}$ must be for the predictor's error to shrink while the policy and the candidate distribution drift, how the variance of inverse-propensity weights trades against the screening gain, and whether any rate like Eq.~(\ref{eq:accuracy-growth}) survives adaptive, censored collection. Of all the problems the framework raises, we regard this as the one most in need of theory.

\subsection{Open Problems and Outlook}

The decay profile of $\rho_u(\theta_t)$ has not yet been measured, and doing so is the cleanest single test of the framework; it would also test a mechanistic account of \emph{emergence}, since a sub-skill whose $\rho_u$ crosses a usable threshold abruptly should appear abruptly. We regard this as the most actionable contribution of this Perspective.

Several questions stay open. Because aggressive gating discards informative interactions while permissive gating readmits dead ones, efficiency is maximized at an \emph{interior} admission rate $\rho^*(\varepsilon_\phi)$, whose characterization sits inside a non-stationary bandit problem \citep{auer2002}, as does its inference-time counterpart, whether a halting predictor can reach the accuracy \cref{prop:tts-efficiency} requires without its own forward pass eroding the savings. \Cref{prop:co-scaling} fixes the direction in which System-1 should grow with System-2, and sweeping System-1's size at several System-2 sizes while measuring $\mathrm{fp}$, $\mathrm{fn}$ and $c_{\text{pred}}$ would give the compute-optimal split between them. And the framework is single-learner: in agent societies, self-play and debate, informativeness is a property of joint beliefs, suggesting a \emph{relational} utility and an informative fraction defined on an interaction graph, since an interaction uninformative to its initiator may inform an observer. We regard this last as the most consequential boundary of the framework.

Conditional interaction finally resonates, without being equivalent, with the enactive view of cognition urged for AI by \citet{rafiee2026toward}, building on \citet{varela2017embodied} and the ``era of experience'' \citep{silver2025era}: perception as selective activity, in which an agent acts \emph{in order to} reveal information. Our utility nonetheless answers to an externally given task and cost ledger rather than the agent's own self-maintenance, so we formalize the \emph{selective-engagement} facet alone. Within that scope, routing tasks to System-2 executives (\cref{sec:master}) extends the gate from data to model selection and \cref{sec:test-time} adds the inference boundary, so a fully conditional system, whose data, computation, inference effort and external actions are all gated by a co-designed System-1, is a promising architecture for resource-efficient intelligence at scale, and the argument strengthens with capability if both \cref{prin:decay} and \cref{prop:co-scaling} hold. Beyond human-level performance \citep{mulgan2016superintelligence}, human data and curricula would no longer say what is worth learning, and a system would have to choose its own experiments; its System-1 would then decide what a superhuman System-2 learns and does, making the gate a major source of efficiency.

\section{Conclusion}

This work poses the perspective that the next stage of progress toward AGI may come from the co-design of System-1 and System-2: a cheap System-1 that values each candidate interaction before it is paid for, and a costly System-2 that executes only what clears the gate, each trained on the other's outputs. Conditional interaction, the predict--gate--execute--update loop that couples them, applies alike to training rollouts, test-time reasoning and agentic actions, and under explicit assumptions yields screening, memory, regret and co-scaling results. These results motivate the scaling hypothesis of \cref{sec:implications}. Our answer to the research question is therefore a qualified yes: the co-design is promising wherever interactions are costly, unequal in value and predictable in advance. 
The decisive next step is measurement: quantify $\rho_u$, compare scaling curves at matched informative compute, and test whether co-design changes more than a constant factor.

\bibliographystyle{plainnat}
\bibliography{scibib}

\newpage

\appendix

\section{Proofs}
\label{app:proofs}

Each subsection below restates a result exactly as it appears in the main text, keeping its original number, and proves it step by step. All results are proved for a general declared utility $u$ and its density of utility-relevant interactions $\rho_u$ (Eq.~(\ref{eq:rho-operational})); the informative interaction density $\rho_{\mathrm{info}}$ is the special case in which $u$ is the expected information gain, and every proof below specializes to it by reading ``utility-relevant'' as ``informative''. We write $u(i)=u(i;\theta_t)$ for the true utility of interaction $i$ and $\hat u(i)=\hat u_\phi(i\mid\calH_{0:t})$ for its prediction, suppressing arguments not in play. Expectations are over stochastic gradients, predictor outputs, and the interaction stream unless noted. Together, these results make precise what co-design buys and what it risks.

\subsection{Proof of \texorpdfstring{\cref{prop:ci-argmax}}{Proposition (Greedy Argmax Limit)}}
\label{app:ci-argmax}

\begin{restated}{prop:ci-argmax}
Let $\calI$ be finite and $\kappa_t\le\hat u_t^{\max}$, so that every maximizer is admissible. As $\beta\to\infty$,
\begin{equation*}
  \pi_{\mathrm{CI}}(i_t\mid\calH_{0:t})
  \;\longrightarrow\;
  \frac{\indicator{i_t\in\calI_t^\star}}{|\calI_t^\star|},
  \tag{\ref{eq:ci-argmax-limit}}
\end{equation*}
the greedy rule $i_t=\argmax_{i}\hat u(i\mid\calH_{0:t})$ with ties broken uniformly. If the maximizer $i_t^\star$ is unique, with gap $\Delta_t>0$ to the best other admissible candidate, then
\begin{equation*}
  1-\pi_{\mathrm{CI}}(i_t^\star\mid\calH_{0:t})
  \;\le\;
  \bigl(|\calI_t^{\mathrm{adm}}|-1\bigr)\,e^{-\beta\Delta_t},
  \tag{\ref{eq:ci-argmax-rate}}
\end{equation*}
so, when $|\calI_t^{\mathrm{adm}}|\ge2$, $\beta\ge\Delta_t^{-1}\ln\bigl((|\calI_t^{\mathrm{adm}}|-1)/\delta\bigr)$ selects $i_t^\star$ with probability at least $1-\delta$; when $|\calI_t^{\mathrm{adm}}|=1$ it is selected with probability one for every $\beta$.
\end{restated}

\begin{proof}
\textbf{Step 1 (Restrict the policy to the admissible set).}
The indicator in Eq.~(\ref{eq:ci-policy}) vanishes outside $\calI_t^{\mathrm{adm}}$, so for every $i\in\calI$
\begin{equation*}
  \pi_{\mathrm{CI}}(i\mid\calH_{0:t})
  =\frac{\indicator{i\in\calI_t^{\mathrm{adm}}}\,e^{\beta\hat u_t(i)}}{\sum_{j\in\calI_t^{\mathrm{adm}}}e^{\beta\hat u_t(j)}}.
\end{equation*}
Because $\kappa_t\le\hat u_t^{\max}$, every maximizer satisfies $\hat u_t(i)=\hat u_t^{\max}\ge\kappa_t$, hence $\calI_t^\star\subseteq\calI_t^{\mathrm{adm}}$, the denominator is at least $e^{\beta\hat u_t^{\max}}>0$, and $Z_t>0$.

\textbf{Step 2 (Normalize by the largest term).}
Define the utility deficit $d_t(i)=\hat u_t^{\max}-\hat u_t(i)\ge0$, which is zero exactly when $i\in\calI_t^\star$. Multiplying numerator and denominator by $e^{-\beta\hat u_t^{\max}}$,
\begin{equation*}
  \pi_{\mathrm{CI}}(i\mid\calH_{0:t})
  =\frac{\indicator{i\in\calI_t^{\mathrm{adm}}}\,e^{-\beta d_t(i)}}{\sum_{j\in\calI_t^{\mathrm{adm}}}e^{-\beta d_t(j)}}.
\end{equation*}

\textbf{Step 3 (Limit of the denominator).}
Split the denominator into maximizers and the rest:
\begin{equation*}
  \sum_{j\in\calI_t^{\mathrm{adm}}}e^{-\beta d_t(j)}
  =\underbrace{\sum_{j\in\calI_t^\star}1}_{=|\calI_t^\star|}
  +\sum_{j\in\calI_t^{\mathrm{adm}}\setminus\calI_t^\star}e^{-\beta d_t(j)}.
\end{equation*}
Every term of the second sum has $d_t(j)>0$, so it tends to $0$ as $\beta\to\infty$; the sum is finite because $\calI$ is finite. The denominator therefore converges to $|\calI_t^\star|$.

\textbf{Step 4 (Limit of each probability).}
For $i\in\calI_t^\star$ the numerator equals $1$, so $\pi_{\mathrm{CI}}(i)\to1/|\calI_t^\star|$. For $i\in\calI_t^{\mathrm{adm}}\setminus\calI_t^\star$ the numerator $e^{-\beta d_t(i)}\to0$, so $\pi_{\mathrm{CI}}(i)\to0$. For $i\notin\calI_t^{\mathrm{adm}}$, $\pi_{\mathrm{CI}}(i)=0$ for every $\beta$. Together these give Eq.~(\ref{eq:ci-argmax-limit}).

\textbf{Step 5 (Rate for a unique maximizer).}
Let $\calI_t^\star=\{i_t^\star\}$ and $S_\beta=\sum_{j\in\calI_t^{\mathrm{adm}}\setminus\{i_t^\star\}}e^{-\beta d_t(j)}$. By Step 2, $\pi_{\mathrm{CI}}(i_t^\star)=1/(1+S_\beta)$. Each of the $|\calI_t^{\mathrm{adm}}|-1$ terms of $S_\beta$ has $d_t(j)\ge\Delta_t$, so $S_\beta\le(|\calI_t^{\mathrm{adm}}|-1)e^{-\beta\Delta_t}$. Hence
\begin{equation*}
  1-\pi_{\mathrm{CI}}(i_t^\star)
  =\frac{S_\beta}{1+S_\beta}
  \le S_\beta
  \le\bigl(|\calI_t^{\mathrm{adm}}|-1\bigr)e^{-\beta\Delta_t},
\end{equation*}
which is Eq.~(\ref{eq:ci-argmax-rate}).

\textbf{Step 6 (Sufficient inverse temperature).}
The right-hand side of Eq.~(\ref{eq:ci-argmax-rate}) is at most $\delta$ if and only if $e^{-\beta\Delta_t}\le\delta/(|\calI_t^{\mathrm{adm}}|-1)$, i.e.\ $\beta\ge\Delta_t^{-1}\ln\bigl((|\calI_t^{\mathrm{adm}}|-1)/\delta\bigr)$. (If $|\calI_t^{\mathrm{adm}}|=1$ the policy is already deterministic for every $\beta$.)
\end{proof}

\subsection{Proof of \texorpdfstring{\cref{prop:iie-gain}}{Proposition (IIE Gain)}}
\label{app:iie-gain}

\begin{restated}{prop:iie-gain}
Assume dead candidates carry no utility ($\bar{\mathbb{I}}_{\text{dead}}=0$, as when $u_{\min}\to0^+$ for a non-negative utility), and use the system-level ledger of Eq.~(\ref{eq:IIE}), in its population form Eq.~(\ref{eq:IIE-pop}), which charges a predictor cost $c_{\text{pred}}$ to every screened candidate.
\begin{enumerate}[leftmargin=1.6em]
\item \emph{(Perfect, free predictor.)} If exactly the utility-relevant candidates are executed and $c_{\text{pred}}=0$,
\begin{equation*}
  \frac{\mathrm{IIE}^{\mathrm{CI}}}{\mathrm{IIE}^{\text{unc}}}
  \;=\;
  \frac{\bar{\mathbb{I}}_{\text{rel}}/\bar{\calC}_{\text{rel}}}{\bar{\mathbb{I}}/\bar{\calC}}
  \;=\;
  \frac{1}{\rho_u}\cdot\frac{\bar{\calC}}{\bar{\calC}_{\text{rel}}}
  \;=\;
  1+\frac{1-\rho_u}{\rho_u}\cdot\frac{\bar\calC_{\text{dead}}}{\bar\calC_{\text{rel}}},
  \tag{\ref{eq:IIE-ratio}}
\end{equation*}
which equals $1/\rho_u$ when $\bar\calC_{\text{dead}}=\bar\calC_{\text{rel}}$ and exceeds it if and only if $\bar\calC_{\text{dead}}>\bar\calC_{\text{rel}}$.
\item \emph{(Imperfect, costly predictor.)} With class-conditional false-positive rate $\mathrm{fp}$ (the fraction of dead candidates admitted) and false-negative rate $\mathrm{fn}$ (the fraction of utility-relevant candidates suppressed),
\begin{equation*}
  \frac{\mathrm{IIE}^{\mathrm{CI}}}{\mathrm{IIE}^{\text{unc}}}
  \;=\;
  \frac{\bar{\mathbb{I}}_{\text{rel}}/\bar{\calC}_{\text{rel}}}{\bar{\mathbb{I}}/\bar{\calC}}
  \cdot
  \left(1+\frac{\mathrm{fp}\,(1-\rho_u)\,\bar\calC_{\text{dead}}+c_{\text{pred}}}{(1-\mathrm{fn})\,\rho_u\,\bar\calC_{\text{rel}}}\right)^{-1},
  \tag{\ref{eq:IIE-ratio-imperfect}}
\end{equation*}
which is non-increasing in each of $\mathrm{fp}$, $\mathrm{fn}$ and $c_{\text{pred}}$, strictly decreasing in any of them whose coefficient is positive, and recovers Eq.~(\ref{eq:IIE-ratio}) as $\mathrm{fp},c_{\text{pred}}\to0$.
\end{enumerate}
\end{restated}

\begin{proof}
\textbf{Step 1 (Per-candidate accounting).}
We work with the population efficiency of Eq.~(\ref{eq:IIE-pop}), the ratio of the expected utility per screened candidate to the expected cost per screened candidate; the finite-stream $\mathrm{IIE}_T$ converges to it almost surely for an i.i.d.\ stream, and for a fixed pool the identities below hold exactly with empirical fractions and class means. We normalize everything per candidate. Recall $\bar{\mathbb{I}}=\rho_u\bar{\mathbb{I}}_{\text{rel}}+(1-\rho_u)\bar{\mathbb{I}}_{\text{dead}}$ and $\bar\calC=\rho_u\bar\calC_{\text{rel}}+(1-\rho_u)\bar\calC_{\text{dead}}$, and the clean case sets $\bar{\mathbb{I}}_{\text{dead}}=0$.

\textbf{Step 2 (Unconditional interaction).}
Unconditional interaction executes every candidate, so its utility and cost per candidate are $\bar{\mathbb{I}}$ and $\bar\calC$, and $\mathrm{IIE}^{\text{unc}}=\bar{\mathbb{I}}/\bar\calC$.

\textbf{Step 3 (Utility under the gate).}
The gate executes a fraction $1-\mathrm{fn}$ of the utility-relevant candidates (mass $(1-\mathrm{fn})\rho_u$) and a fraction $\mathrm{fp}$ of the dead ones (mass $\mathrm{fp}(1-\rho_u)$). In the clean case the latter add no utility, so the utility per candidate is
\begin{equation*}
  \mathbb{S}_{\mathrm{CI}}=(1-\mathrm{fn})\,\rho_u\,\bar{\mathbb{I}}_{\text{rel}}.
\end{equation*}

\textbf{Step 4 (Cost under the gate).}
Executed utility-relevant candidates cost $\bar\calC_{\text{rel}}$ each, executed dead candidates $\bar\calC_{\text{dead}}$ each, and every candidate is scored at $c_{\text{pred}}$, so
\begin{equation*}
  \mathbb{C}_{\mathrm{CI}}=(1-\mathrm{fn})\,\rho_u\,\bar\calC_{\text{rel}}+\mathrm{fp}\,(1-\rho_u)\,\bar\calC_{\text{dead}}+c_{\text{pred}}.
\end{equation*}

\textbf{Step 5 (Factor out the utility-relevant cost).}
Dividing $\mathbb{S}_{\mathrm{CI}}$ and $\mathbb{C}_{\mathrm{CI}}$ by $(1-\mathrm{fn})\rho_u\bar\calC_{\text{rel}}$ (assume $\mathrm{fn}<1$),
\begin{equation*}
  \mathrm{IIE}^{\mathrm{CI}}=\frac{\mathbb{S}_{\mathrm{CI}}}{\mathbb{C}_{\mathrm{CI}}}
  =\frac{\bar{\mathbb{I}}_{\text{rel}}}{\bar\calC_{\text{rel}}}\bigl(1+g\bigr)^{-1},
  \qquad
  g=\frac{\mathrm{fp}\,(1-\rho_u)\,\bar\calC_{\text{dead}}+c_{\text{pred}}}{(1-\mathrm{fn})\,\rho_u\,\bar\calC_{\text{rel}}}.
\end{equation*}
Dividing by $\mathrm{IIE}^{\text{unc}}$ from Step 2 gives Eq.~(\ref{eq:IIE-ratio-imperfect}).

\textbf{Step 6 (Monotonicity).}
Write the ratio as $K(1+g)^{-1}$ with $K=(\bar{\mathbb{I}}_{\text{rel}}/\bar\calC_{\text{rel}})/(\bar{\mathbb{I}}/\bar\calC)$ independent of the error rates. The map $g\mapsto(1+g)^{-1}$ is decreasing, and $g$ is non-decreasing in $\mathrm{fp}$ (coefficient $(1-\rho_u)\bar\calC_{\text{dead}}\ge0$), in $c_{\text{pred}}$ (coefficient positive), and in $\mathrm{fn}$ (through $(1-\mathrm{fn})^{-1}$, which increases on $[0,1)$). Hence the ratio is non-increasing in each, and strictly decreasing whenever the corresponding coefficient is positive.

\textbf{Step 7 (Ideal case).}
With $\mathrm{fp}=c_{\text{pred}}=0$, $g=0$ and the ratio equals $K$, the first equality of Eq.~(\ref{eq:IIE-ratio}). Substituting $\bar{\mathbb{I}}=\rho_u\bar{\mathbb{I}}_{\text{rel}}$,
\begin{equation*}
  K=\frac{\bar{\mathbb{I}}_{\text{rel}}}{\bar\calC_{\text{rel}}}\cdot\frac{\bar\calC}{\rho_u\bar{\mathbb{I}}_{\text{rel}}}
  =\frac{1}{\rho_u}\cdot\frac{\bar\calC}{\bar\calC_{\text{rel}}},
\end{equation*}
the second equality. Substituting $\bar\calC$,
\begin{equation*}
  \frac{1}{\rho_u}\cdot\frac{\rho_u\bar\calC_{\text{rel}}+(1-\rho_u)\bar\calC_{\text{dead}}}{\bar\calC_{\text{rel}}}
  =1+\frac{1-\rho_u}{\rho_u}\cdot\frac{\bar\calC_{\text{dead}}}{\bar\calC_{\text{rel}}},
\end{equation*}
the third equality.

\textbf{Step 8 (Comparison with $1/\rho_u$).}
Subtracting, $K-\frac{1}{\rho_u}=\frac{1-\rho_u}{\rho_u}\bigl(\frac{\bar\calC_{\text{dead}}}{\bar\calC_{\text{rel}}}-1\bigr)$, which for $\rho_u\in(0,1)$ is zero if and only if $\bar\calC_{\text{dead}}=\bar\calC_{\text{rel}}$ and positive if and only if $\bar\calC_{\text{dead}}>\bar\calC_{\text{rel}}$.

\textbf{Step 9 (Dead candidates with positive utility).}
If the zero-utility assumption is dropped, $\bar{\mathbb{I}}_{\text{dead}}\ge0$, the perfect free predictor still has $\mathrm{IIE}^{\mathrm{CI}}=\rho_u\bar{\mathbb{I}}_{\text{rel}}/(\rho_u\bar\calC_{\text{rel}})$, so the first equality of Eq.~(\ref{eq:IIE-ratio}) holds unchanged. Since $\rho_u\bar{\mathbb{I}}_{\text{rel}}=\bar{\mathbb{I}}-(1-\rho_u)\bar{\mathbb{I}}_{\text{dead}}$,
\begin{equation*}
  K=\frac{\bar\calC}{\bar\calC_{\text{rel}}}\cdot\frac{\bar{\mathbb{I}}_{\text{rel}}}{\bar{\mathbb{I}}}
  =\frac{1}{\rho_u}\cdot\frac{\bar\calC}{\bar\calC_{\text{rel}}}\Bigl(1-\frac{(1-\rho_u)\,\bar{\mathbb{I}}_{\text{dead}}}{\bar{\mathbb{I}}}\Bigr)
  \;\le\;\frac{1}{\rho_u}\cdot\frac{\bar\calC}{\bar\calC_{\text{rel}}},
\end{equation*}
with equality if and only if $\bar{\mathbb{I}}_{\text{dead}}=0$ (for $\rho_u<1$). The closed forms of Eq.~(\ref{eq:IIE-ratio}) are then upper bounds; this is the relevant case when the floor $u_{\min}$ of Eq.~(\ref{eq:rho-operational}) is strictly positive.
\end{proof}

\noindent\emph{Remarks.} False negatives affect the ratio only through $g$: the IIE is utility per unit cost, and suppressed utility-relevant candidates are neither paid for nor counted. Their main effect is on \emph{yield}, $\mathbb{S}_{\mathrm{CI}}$, which falls by the factor $1-\mathrm{fn}$ and determines how large a pool is needed to reach a target. In the notation of Eq.~(\ref{eq:yield}), $\mathbb{S}_{\mathrm{CI}}$ is the expected yield. Setting $c_{\text{pred}}=0$ recovers execution-only efficiency.

\subsection{Proof of \texorpdfstring{\cref{prop:amortized-screening}}{Proposition (Amortized Screening Gain)}}
\label{app:amortized-screening}

\begin{restated}{prop:amortized-screening}
Let a pool of $n$ candidates contain a fraction $\rho_u$ of utility-relevant ones, with execution cost $c_{\text{exec}}$, screening cost $c_{\text{screen}}=c_{\text{embed}}+c_{\text{score}}$ (encoding the candidate and scoring it; this is the $c_{\text{pred}}$ of Eq.~(\ref{eq:IIE})), and $r=c_{\text{screen}}/c_{\text{exec}}$. With a perfect screen, acquiring the utility-relevant subset costs
\begin{equation*}
  \frac{\calC_{\mathrm{CI}}}{\calC_{\text{react}}}
  \;=\;
  \rho_u \;+\; r
  \tag{\ref{eq:amortized-ratio}}
\end{equation*}
of what exhaustive execute-then-filter evaluation spends ($\calC_{\text{react}}=n\,c_{\text{exec}}$, the baseline that must execute every candidate to find all relevant ones), a speedup approaching $1/\rho_u$ as $r\to0$, provided a candidate's utility class cannot be known without executing or scoring it.
\end{restated}

\begin{proof}
\textbf{Step 1 (Information model).}
We assume that a candidate's utility class (relevant or dead) cannot be learned without either executing it (reactive evaluation) or scoring it with the predictor (conditional interaction), and that screening costs $c_{\text{screen}}=c_{\text{embed}}+c_{\text{pred}}$: the predictor head costs $c_{\text{score}}=\Theta(d_\phi^2)$ because it acts on a fixed-size embedding, and encoding the candidate costs $c_{\text{embed}}$, zero when embeddings are precomputed and cached; neither depends on the rollout length $L$ or the group size $G$.

\textbf{Step 2 (Reactive cost and its lower bound).}
A reactive scheme has no signal about an unexecuted candidate. If it skipped a candidate, it could not tell whether that candidate is utility-relevant, so it could not guarantee to acquire the full utility-relevant subset. It must therefore execute all $n$ candidates:
\begin{equation*}
  \calC_{\text{react}}=n\,c_{\text{exec}}=\Theta(n\,G\,L\,d_{\text{model}}^2),
\end{equation*}
which matches Eq.~(\ref{eq:unc-complexity}). Under the information model of Step~1, no execute-then-filter scheme that must guarantee the full utility-relevant subset can spend less; schemes that sample adaptively, use side information or accept losses fall outside this baseline.

\textbf{Step 3 (Conditional cost).}
Conditional interaction encodes and scores all $n$ candidates, at cost $n\,c_{\text{screen}}$. A perfect screen then executes exactly the $\rho_u n$ utility-relevant candidates, at cost $\rho_u n\,c_{\text{exec}}$. Hence
\begin{equation*}
  \calC_{\mathrm{CI}}=n\,c_{\text{screen}}+\rho_u\,n\,c_{\text{exec}},
\end{equation*}
which matches Eq.~(\ref{eq:ci-complexity}) with $B=\rho_u n$.

\textbf{Step 4 (Ratio and speedup).}
Dividing Step 3 by Step 2,
\begin{equation*}
  \frac{\calC_{\mathrm{CI}}}{\calC_{\text{react}}}
  =\frac{n\,c_{\text{screen}}+\rho_u n\,c_{\text{exec}}}{n\,c_{\text{exec}}}
  =\rho_u+\frac{c_{\text{screen}}}{c_{\text{exec}}}
  =\rho_u+r .
\end{equation*}
The speedup is the reciprocal $1/(\rho_u+r)$, which increases to $1/\rho_u$ as $r\to0$.
\end{proof}

\subsection{Proof of \texorpdfstring{\cref{prop:memory-scope}}{Proposition (Memory-Decoupled Scope)} and \texorpdfstring{\cref{cor:scope-quality}}{Corollary}}
\label{app:memory-scope}

\begin{restated}{prop:memory-scope}
To select the top-$B$ of $n$ candidates, synchronous reactive evaluation needs peak memory $M_{\text{react}}(n)$, whereas predictive scoring streamed in micro-batches of size $b$ needs $M_{\mathrm{CI}}(n)$, where
\begin{align*}
  M_{\text{react}}(n) &\;=\; \Theta\!\bigl(n\,G\,L\,d_{\text{model}}\bigr),
  \tag{\ref{eq:react-memory}}\\
  M_{\mathrm{CI}}(n) &\;=\; \Theta\!\bigl(B\,G\,L\,d_{\text{model}} \;+\; b\,d_e \;+\; B\bigr),
  \tag{\ref{eq:ci-memory}}
\end{align*}
independent of $n$; the pool is then limited only by screening time $\Theta(n\,c_{\text{pred}})$ and score storage.
\end{restated}

\begin{proof}
\textbf{Step 1 (Assumptions).}
(i) Synchronous reactive evaluation keeps each candidate's in-flight state resident until all outcomes can be compared; this state is $\Theta(GLd_{\text{model}})$ per candidate (the KV cache of $G$ length-$L$ generations, or trajectory and simulator state). (ii) Predictive scoring needs only the embedding $e(\tau)\in\R^{d_e}$ and returns a scalar. (iii) Scoring has no side effects.

\textbf{Step 2 (Reactive peak memory).}
By (i), at the moment outcomes are compared all $n$ states are resident, so the peak memory is $n\cdot\Theta(GLd_{\text{model}})$, giving Eq.~(\ref{eq:react-memory}).

\textbf{Step 3 (Reactive pool cap).}
Requiring $n\,\Theta(GLd_{\text{model}})\le M$ gives $n\le n_{\max}^{\text{react}}=\Theta\bigl(M/(GLd_{\text{model}})\bigr)$.

\textbf{Step 4 (Predictive peak memory).}
Process candidates in micro-batches of size $b$. By (ii) and (iii), each embedding can be discarded once its scalar score is recorded, so at most $b$ embeddings are resident at once: $\Theta(b\,d_e)$. The scores feed a running top-$B$ structure: $\Theta(B)$. The only large state is that of the $B$ executed survivors: $\Theta(BGLd_{\text{model}})$. Summing the three terms gives Eq.~(\ref{eq:ci-memory}), and none of them depends on $n$.

\textbf{Step 5 (What still depends on $n$).}
Screening all candidates takes time $\Theta(n\,c_{\text{pred}})$, and storing every score takes $\Theta(n)$ memory (or $\Theta(B)$ if only the running top-$B$ is kept). These, not rollout memory, bound the pool.
\end{proof}

\noindent\emph{Remark (streaming reactive evaluation).} If assumption (i) is dropped, a reactive filter can also execute candidates in micro-batches and keep only their scores, giving $n$-independent peak memory at $\Theta(n\,c_{\text{exec}})$ time. The memory advantage is thus genuine for synchronous pipelines (e.g.\ group-based advantage estimation holding all $G$ rollouts of a prompt together, or batched simulators), and otherwise reduces to the time advantage of \cref{prop:amortized-screening}.

\begin{restated}{cor:scope-quality}
The best true utility $U_{(1)}^n$ among $n$ i.i.d.\ candidates is non-decreasing in $n$: it converges to the essential supremum if utility is bounded, and grows as $n^{\xi}$ (in probability, up to a slowly varying factor) if $\Pr(U>u)=u^{-1/\xi}\ell(u)$. Under predictive sampling this gain costs no additional rollout memory (Eq.~(\ref{eq:ci-memory})), provided the predictor ranks the top of the pool correctly.
\end{restated}

\begin{proof}
Let $U_1,U_2,\dots$ be i.i.d.\ candidate utilities and $U_{(1)}^n=\max_{i\le n}U_i$.

\textbf{Step 1 (Monotonicity).}
Adding a candidate cannot lower the maximum: $U_{(1)}^{n+1}=\max(U_{(1)}^n,U_{n+1})\ge U_{(1)}^n$ pointwise, so $U_{(1)}^n$ is non-decreasing in $n$, pointwise and hence stochastically.

\textbf{Step 2 (Bounded utility).}
Let $u^+$ be the essential supremum. For any $s>0$, $p_s=\Pr(U\le u^+-s)<1$ by definition of $u^+$, and independence gives $\Pr(U_{(1)}^n\le u^+-s)=p_s^{\,n}\to0$. Hence $U_{(1)}^n\to u^+$ in probability, and almost surely because the sequence is monotone.

\textbf{Step 3 (Heavy upper tail).}
Suppose $\Pr(U>u)=u^{-1/\xi}\ell(u)$ with $\ell$ slowly varying, and choose $a_n$ with $\Pr(U>a_n)=1/n$; then $a_n$ is regularly varying in $n$ with index $\xi$. For $x>0$,
\begin{equation*}
  \Pr\bigl(U_{(1)}^n\le x\,a_n\bigr)=\bigl(1-\Pr(U>x\,a_n)\bigr)^n
  =\Bigl(1-\tfrac{x^{-1/\xi}\ell(xa_n)}{n\,\ell(a_n)}\Bigr)^{n}
  \longrightarrow\exp\bigl(-x^{-1/\xi}\bigr),
\end{equation*}
using $\ell(xa_n)/\ell(a_n)\to1$. This is the Fréchet law, so $U_{(1)}^n/a_n$ converges in distribution and $U_{(1)}^n$ grows as $n^{\xi}$ up to a slowly varying factor, in probability. For $\xi>1$ the mean of $U$ is infinite (for $\xi=1$ it may be, depending on $\ell$), so the growth statement is made in probability, not in expectation.

\textbf{Step 4 (Cost of realizing the growth).}
By \cref{prop:memory-scope}, enlarging the pool leaves peak rollout memory at Eq.~(\ref{eq:ci-memory}); by \cref{prop:amortized-screening}, each extra candidate adds only $c_{\text{pred}}$. The growth of Steps 2 and 3 concerns the best \emph{true} utility in the pool; it is realized by the selection only if the predictor ranks the top of the pool correctly, which is quantified by \cref{thm:regret,cor:regret-hp}.
\end{proof}

\subsection{Proof of \texorpdfstring{\cref{prop:active-inference}}{Proposition (Active Inference Optimality)}}
\label{app:active-inference}

\begin{restated}{prop:active-inference}
Write the expected free energy of interaction $i$ as
\begin{equation*}
  G(i)
  = \underbrace{-\,\E_{q_\phi(o\mid i)}\!\bigl[\ln p(o\mid C)\bigr]}_{\text{pragmatic value}}
  - \underbrace{\E_{q_\phi(o\mid i)}\!\bigl[\KL{q_\phi(\theta\mid o,i)}{q_\phi(\theta\mid i)}\bigr]}_{\text{epistemic value}},
  \tag{\ref{eq:efe-decomp}}
\end{equation*}
with preferences $p(o\mid C)$ over a finite outcome space $\mathcal{O}$. If preferences are uniform and cost is accounted separately through $\calC(i)$, then
\begin{equation*}
  \argmin_{i\in\calI} G(i)
  = \argmax_{i\in\calI}\; I_{q_\phi}(\theta;o\mid i)
  = \argmax_{i\in\calI}\; \calU_{\mathrm{IG}}^{q_\phi}(i\mid\calH_t),
  \tag{\ref{eq:MI-optimality}}
\end{equation*}
where $\calU_{\mathrm{IG}}^{q_\phi}$ is Eq.~(\ref{eq:exp-info-gain}) evaluated under the variational model $q_\phi$, and coincides with $\calU_{\mathrm{IG}}$ when $q_\phi$ is the exact posterior and predictive. Under non-uniform preferences the two criteria diverge.
\end{restated}

\begin{proof}
\textbf{Step 1 (Well-posedness).}
Assume $q_\phi(\theta\mid o,i)$ is absolutely continuous with respect to $q_\phi(\theta\mid i)$ for every $o$ and $i$, so all divergences below are finite.

\textbf{Step 2 (The pragmatic term is constant).}
Under uniform preferences $p(o\mid C)=1/|\mathcal{O}|$, so $-\ln p(o\mid C)=\ln|\mathcal{O}|$ for every $o$, and its expectation under any $q_\phi(o\mid i)$ is $\ln|\mathcal{O}|$, independent of $i$.

\textbf{Step 3 (Reduce to the epistemic term).}
Subtracting a constant does not change a minimizer, and minimizing the negative of a quantity is maximizing it, so by Eq.~(\ref{eq:efe-decomp})
\begin{equation}
\begin{aligned}
  \argmin_i G(i)
  &=\argmin_i\Bigl(-\E_{q_\phi(o\mid i)}\bigl[\KL{q_\phi(\theta\mid o,i)}{q_\phi(\theta\mid i)}\bigr]\Bigr)\\
  &=\argmax_i\;\E_{q_\phi(o\mid i)}\bigl[\KL{q_\phi(\theta\mid o,i)}{q_\phi(\theta\mid i)}\bigr].
\end{aligned}
  \label{eq:app-epistemic}
\end{equation}

\textbf{Step 4 (The epistemic term is a mutual information).}
Write the divergence as an expectation and use Bayes' rule, $q_\phi(\theta\mid o,i)/q_\phi(\theta\mid i)=q_\phi(\theta,o\mid i)/[q_\phi(\theta\mid i)\,q_\phi(o\mid i)]$:
\begin{equation}
  \E_{q_\phi(o\mid i)}\!\Bigl[\E_{q_\phi(\theta\mid o,i)}\ln\frac{q_\phi(\theta\mid o,i)}{q_\phi(\theta\mid i)}\Bigr]
  =\E_{q_\phi(\theta,o\mid i)}\!\left[\ln\frac{q_\phi(\theta,o\mid i)}{q_\phi(\theta\mid i)\,q_\phi(o\mid i)}\right]
  =I_{q_\phi}(\theta;o\mid i).
  \label{eq:app-mi}
\end{equation}

\textbf{Step 5 (Identify the utility).}
By definition, $\calU_{\mathrm{IG}}^{q_\phi}(i\mid\calH_t)$ is the expected posterior-to-prior divergence of Eq.~(\ref{eq:exp-info-gain}) with $p$ replaced by $q_\phi$, which by Eq.~(\ref{eq:app-mi}) equals $I_{q_\phi}(\theta;o\mid i)$. Combining Steps 3 and 4 gives Eq.~(\ref{eq:MI-optimality}). If $q_\phi$ is the exact posterior and predictive, $\calU_{\mathrm{IG}}^{q_\phi}=\calU_{\mathrm{IG}}$; otherwise the maximizer is that of the variational surrogate, not necessarily of the true information gain.

\textbf{Step 6 (Non-uniform preferences).}
If $p(o\mid C)$ is not uniform, the pragmatic term depends on $i$ through $q_\phi(o\mid i)$, it no longer drops out in Step 3, and the minimizer of $G$ trades information gain against preference satisfaction.
\end{proof}

\subsection{Proof of \texorpdfstring{\cref{thm:regret}}{Theorem (Regret Bound)} and \texorpdfstring{\cref{cor:regret-hp}}{Corollary}}
\label{app:regret}

\begin{restated}{thm:regret}
Assume (i) selection is the deterministic limit of the fixed-size batch rule Eq.~(\ref{eq:subset-selection}) with $\mu=0$ and $\calA$ the predicted utility $\hat u$, the cardinality-constrained form of the master objective Eq.~(\ref{eq:master}), so each round retains the top-$B$ set by $\hat u$; (ii) $\varepsilon_\phi=\sup_{0\le t<T}\sup_{i\in\calI_t}\abs{\hat u_t(i)-u_t(i)}$, a time-uniform error, with $\hat u_t(i)=\hat u_{\phi_t}(i\mid\calH_{0:t})$; and (iii) $\mathrm{gap}_{\min} = \min_{t}\min_{i\in\mathcal{B}_t^*,\,j\in\calI_t\setminus\mathcal{B}_t^*}\bigl(u_{t}(i)-u_{t}(j)\bigr)>0$. Then
\begin{equation*}
  \calR_T(\pi_{\mathrm{CI}}, \pi^*)
  \;\le\;
  2\,T\,B\,\varepsilon_\phi\,\cdot\,
  \indicator{2\varepsilon_\phi \ge \mathrm{gap}_{\min}};
  \tag{\ref{eq:regret-bound}}
\end{equation*}
in particular, the selection is exactly the oracle's whenever $2\varepsilon_\phi<\mathrm{gap}_{\min}$. More generally, with round-specific errors $\varepsilon_t=\sup_{i\in\calI_t}\abs{\hat u_t(i)-u_t(i)}$ and gaps $\mathrm{gap}_t$, $\calR_T\le\sum_{t<T}2B\varepsilon_t\,\indicator{2\varepsilon_t\ge\mathrm{gap}_t}$.
\end{restated}

\begin{proof}
\textbf{Step 1 (Per-round regret).}
By Eq.~(\ref{eq:cond-regret}), $\calR_T=\sum_{t=0}^{T-1}r_t$ with $r_t=\sum_{i\in\mathcal{B}_t^*}u_t(i)-\sum_{i\in\hat{\mathcal{B}}_t}u_t(i)$. It suffices to bound each $r_t$. Fix a round $t$ and, in Steps 2--6, write $u=u_t$ and $\hat u=\hat u_t$; by the time-uniform assumption (ii), $\abs{\hat u(i)-u(i)}\le\varepsilon_\phi$ for every $i\in\calI_t$.

\textbf{Step 2 (Only swapped items contribute).}
Write $\mathcal{B}_t^*\setminus\hat{\mathcal{B}}_t=\{i_1,\dots,i_m\}$ (oracle items missed) and $\hat{\mathcal{B}}_t\setminus\mathcal{B}_t^*=\{j_1,\dots,j_m\}$ (wrong items selected). Both sets have the same size $m\le B$ because both selections have size $B$. Items in $\mathcal{B}_t^*\cap\hat{\mathcal{B}}_t$ cancel, so
\begin{equation}
  r_t=\sum_{k=1}^m\bigl[u(i_k)-u(j_k)\bigr]
  \label{eq:app-rt}
\end{equation}
for any pairing of the two sets.

\textbf{Step 3 (Ranking inequality).}
$\hat{\mathcal{B}}_t$ is the top-$B$ set by $\hat u$ and $i_k\notin\hat{\mathcal{B}}_t$, so every selected item scores at least as high: $\hat u(j_k)\ge\hat u(i_k)$ for all $k$.

\textbf{Step 4 (Each swap costs at most $2\varepsilon_\phi$).}
By (ii), $u(i_k)\le\hat u(i_k)+\varepsilon_\phi$ and $u(j_k)\ge\hat u(j_k)-\varepsilon_\phi$. Subtracting and using Step 3,
\begin{equation}
  u(i_k)-u(j_k)\le\hat u(i_k)-\hat u(j_k)+2\varepsilon_\phi\le2\varepsilon_\phi.
  \label{eq:app-swap-bound}
\end{equation}

\textbf{Step 5 (A swap needs a large error).}
Since $i_k\in\mathcal{B}_t^*$ and $j_k\notin\mathcal{B}_t^*$, (iii) gives $u(i_k)-u(j_k)\ge\mathrm{gap}_{\min}$. Combined with Eq.~(\ref{eq:app-swap-bound}), a swap is possible only if $\mathrm{gap}_{\min}\le2\varepsilon_\phi$. Hence if $2\varepsilon_\phi<\mathrm{gap}_{\min}$, then $m=0$, $\hat{\mathcal{B}}_t=\mathcal{B}_t^*$, and $r_t=0$.

\textbf{Step 6 (Per-round bound).}
Combining Steps 2, 4 and 5, $r_t\le m\cdot2\varepsilon_\phi\cdot\indicator{2\varepsilon_\phi\ge\mathrm{gap}_{\min}}\le2B\varepsilon_\phi\,\indicator{2\varepsilon_\phi\ge\mathrm{gap}_{\min}}$.

\textbf{Step 7 (Sum over rounds).}
Summing Step 6 over $t=0,\dots,T-1$ gives Eq.~(\ref{eq:regret-bound}). Steps 2--6 use the error and gap of round $t$ only, so replacing $\varepsilon_\phi$ and $\mathrm{gap}_{\min}$ by $\varepsilon_t$ and $\mathrm{gap}_t$ gives $r_t\le2B\varepsilon_t\indicator{2\varepsilon_t\ge\mathrm{gap}_t}$ and the round-specific bound. Both sides are in the units of the declared utility (nats when $u$ is information gain); $\mathrm{gap}_{\min}$ enters only through the indicator.
\end{proof}

\begin{restated}{cor:regret-hp}
Suppose each round's pool has at most $n$ candidates; for every round $t$ and every candidate outside a blind-spot set $\calO_t\subseteq\calI_t$ with $|\calO_t|\le n_{\mathrm{ood}}$, the error $\hat u_t(i)-u_t(i)$ is zero-mean and $\sigma_\phi$-sub-Gaussian, while errors on $\calO_t$ are unrestricted; and the utility takes values in an interval of length $u_{\max}$ (e.g.\ $u\in[0,u_{\max}]$). Then, with probability at least $1-\delta$ and $\varepsilon_\phi^{\mathrm{eff}}=\sigma_\phi\sqrt{2\ln(2nT/\delta)}$,
\begin{equation*}
  \calR_T(\pi_{\mathrm{CI}}, \pi^*)
  \;\le\;
  2\,T\,B\,\varepsilon_\phi^{\mathrm{eff}}\,
  \indicator{2\varepsilon_\phi^{\mathrm{eff}}\ge\mathrm{gap}_{\min}}
  \;+\; T\,\min(B,n_{\mathrm{ood}})\,u_{\max}.
  \tag{\ref{eq:regret-hp}}
\end{equation*}
\end{restated}

\begin{proof}
\textbf{Step 1 (Tail bound for one candidate).}
A zero-mean $\sigma_\phi$-sub-Gaussian error satisfies $\Pr\bigl(|\hat u_t(i)-u_t(i)|>s\bigr)\le2e^{-s^2/(2\sigma_\phi^2)}$ for all $s>0$.

\textbf{Step 2 (Union bound over candidates and rounds).}
There are at most $nT$ candidate--round pairs outside the blind spots $\calO_t$. Setting $s=\varepsilon_\phi^{\mathrm{eff}}=\sigma_\phi\sqrt{2\ln(2nT/\delta)}$ makes each tail probability at most $\delta/(nT)$, so with probability at least $1-\delta$ every error outside the blind spots in every round is at most $\varepsilon_\phi^{\mathrm{eff}}$. The rest of the proof works on this event.

\textbf{Step 3 (Classify swapped pairs).}
In round $t$, pair the swapped items as in Step 2 of the proof of \cref{thm:regret}. Call a pair \emph{clean} if neither of its items lies in $\calO_t$ and \emph{contaminated} otherwise.

\textbf{Step 4 (Clean pairs).}
Steps 3--5 of the proof of \cref{thm:regret} apply verbatim with $\varepsilon_\phi^{\mathrm{eff}}$ in place of $\varepsilon_\phi$: each clean pair costs at most $2\varepsilon_\phi^{\mathrm{eff}}$ and exists only if $2\varepsilon_\phi^{\mathrm{eff}}\ge\mathrm{gap}_{\min}$. There are at most $B$ of them, contributing at most $2B\varepsilon_\phi^{\mathrm{eff}}\indicator{2\varepsilon_\phi^{\mathrm{eff}}\ge\mathrm{gap}_{\min}}$.

\textbf{Step 5 (Contaminated pairs).}
Each item appears in at most one pair, so at most $n_{\mathrm{ood}}$ pairs contain a blind-spot item, and at most $B$ pairs exist in total; hence there are at most $\min(B,n_{\mathrm{ood}})$ contaminated pairs. Since the utility ranges over an interval of length $u_{\max}$, each costs $u(i_k)-u(j_k)\le u_{\max}$.

\textbf{Step 6 (Sum over rounds).}
Adding Steps 4 and 5 and summing over $T$ rounds gives Eq.~(\ref{eq:regret-hp}) on the event of Step 2.
\end{proof}

\noindent\emph{Remark.} The blind-spot term is stated as a count rather than a pool fraction deliberately: selection by $\hat u$ preferentially admits candidates whose utility is overestimated, so a single blind spot with unbounded overestimation is selected with certainty, and a bound of the form $B\,\delta_{\mathrm{ood}}u_{\max}$, with $\delta_{\mathrm{ood}}$ the blind-spot fraction, would not hold in general. If utilities are tied at the $B$-th position ($\mathrm{gap}_{\min}=0$), the indicators are always active and the bounds degrade linearly in the error.

\subsection{Proof of \texorpdfstring{\cref{lem:margin}}{Lemma (Margin Condition)}}
\label{app:margin}

\begin{restated}{lem:margin}
Let candidates $i\sim\calD$ be utility-relevant when $u(i)\ge u_{\min}$, let the gate admit $i$ when $\hat u(i)\ge u_{\min}$, and suppose $\abs{\hat u(i)-u(i)}\le\varepsilon$ for every $i$. More generally, let the gate admit $i$ when $\hat u(i)\ge\kappa$. Every misclassified candidate, admitted while dead or suppressed while relevant, then lies in the band $\mathcal{B}_\varepsilon=\{i:\kappa-\varepsilon\le u(i)<u_{\min}\}\cup\{i:u_{\min}\le u(i)<\kappa+\varepsilon\}$, so the misclassified masses satisfy $\delta_{\mathrm{FP}}+\delta_{\mathrm{FN}}\le\Pr_{i\sim\calD}(\mathcal{B}_\varepsilon)$. With $\kappa=u_{\min}$, if the true utility satisfies the margin condition $\Pr_{i\sim\calD}\bigl(\abs{u(i)-u_{\min}}\le\varepsilon\bigr)\le m\,\varepsilon$ with constant $m$, then the gate misclassifies, by admitting a dead candidate or suppressing a relevant one, at most a fraction $m\,\varepsilon$ of candidates.
\end{restated}

\begin{proof}
Let the gate admit $i$ when $\hat u(i)\ge\kappa$; the lemma's first statement is the case $\kappa=u_{\min}$.

\textbf{Step 1 (A false negative lies in the upper part of the band).}
A relevant candidate is suppressed when $u(i)\ge u_{\min}$ and $\hat u(i)<\kappa$. Then $u(i)\le\hat u(i)+\varepsilon<\kappa+\varepsilon$, so $u(i)\in[u_{\min},\kappa+\varepsilon)$.

\textbf{Step 2 (A false positive lies in the lower part of the band).}
A dead candidate is admitted when $u(i)<u_{\min}$ and $\hat u(i)\ge\kappa$. Then $u(i)\ge\hat u(i)-\varepsilon\ge\kappa-\varepsilon$, so $u(i)\in[\kappa-\varepsilon,u_{\min})$.

\textbf{Step 3 (Count).}
Every misclassified candidate therefore lies in $\mathcal{B}_\varepsilon$, so $\delta_{\mathrm{FP}}+\delta_{\mathrm{FN}}\le\Pr_{i\sim\calD}(\mathcal{B}_\varepsilon)$. With $\kappa=u_{\min}$, $\mathcal{B}_\varepsilon\subseteq\{i:\abs{u(i)-u_{\min}}\le\varepsilon\}$, whose probability under $\calD$ is at most $m\varepsilon$ by the margin condition.
\end{proof}

\subsection{Proof of \texorpdfstring{\cref{prop:tts-efficiency}}{Proposition (Test-Time Compute Efficiency)} and \texorpdfstring{\cref{prop:agentic-efficiency}}{Proposition (Agentic Call Efficiency)}}
\label{app:tts-efficiency}

\begin{restated}{prop:tts-efficiency}
Let $\calA^{\mathrm{tt}}$ be the queries on which the full budget is more accurate in expectation than one increment, with density $\rho_{\mathrm{acc}}^{\mathrm{tt}}$, and let the gate threshold a declared test-time utility whose relevant set contains $\calA^{\mathrm{tt}}$, with density $\rho_u^{\mathrm{tt}}\ge\rho_{\mathrm{acc}}^{\mathrm{tt}}$. Let unconditional scaling spend $C$ unit increments per query, let the answer be a function of the belief state, and let the selection policy be the one-probe gate: it spends one probing increment plus $c_{\text{pred}}$ and then halts or continues to the full budget, with $\delta_{\mathrm{FN}}$ and $\delta_{\mathrm{FP}}$ the fractions of all queries that are relevant but halted and dead but continued. Then the gated policy loses at most $\delta_{\mathrm{FN}}$ accuracy while spending at most a fraction
\begin{equation*}
  \rho_u^{\mathrm{tt}}+\frac{1+c_{\text{pred}}}{C}+\delta_{\mathrm{FP}}
\end{equation*}
of the unconditional compute, a multiplier approaching $1/\rho_u^{\mathrm{tt}}$ as $C\to\infty$ and $\delta_{\mathrm{FP}}\to0$. By \cref{lem:margin}, $\delta_{\mathrm{FN}}+\delta_{\mathrm{FP}}\le\Pr(\mathcal{B}_{\varepsilon_\phi})$ for a predictor with error at most $\varepsilon_\phi$, which is at most $m\varepsilon_\phi$ under the margin condition. The tightest choice is the accuracy-gain utility itself, for which $\rho_u^{\mathrm{tt}}=\rho_{\mathrm{acc}}^{\mathrm{tt}}$. The continuation utility of Eq.~(\ref{eq:tt-continuation}) with $u_{\min}\to0^+$ is also admissible: if its value is zero then $b_C=b_1$ almost surely, so the answer, and with it the accuracy, is unchanged; its gate should use $\kappa>\varepsilon_\phi$, so that the atom of zero-utility queries is never admitted. With density $\rho_{\mathrm{cont}}^{\mathrm{tt}}\ge\rho_{\mathrm{acc}}^{\mathrm{tt}}$ it gives the weaker multiplier $1/\rho_{\mathrm{cont}}^{\mathrm{tt}}$. The next-step utility of Eq.~(\ref{eq:tt-utility}) is admissible only under the no-delayed-gain condition.
\end{restated}

\begin{proof}
\textbf{Step 1 (Assumptions made explicit).}
(i) Unconditional scaling spends $C$ unit-cost increments on every query. (ii) Every query outside the relevant set $R$ of the declared utility lies outside $\calA^{\mathrm{tt}}$, so its expected accuracy after the first increment is at least that after $C$ increments. For the continuation utility of Eq.~(\ref{eq:tt-continuation}) with $u_{\min}\to0^+$ this holds because a query outside $R$ has $\E[\KL{b_C}{b_1}]=0$; the divergence is non-negative, so $b_C=b_1$ almost surely, and since the answer is a function of the belief state, the answer and its expected accuracy are unchanged. For the next-step utility of Eq.~(\ref{eq:tt-utility}) the same conclusion needs the no-delayed-gain condition, because a zero next-step change does not rule out later changes. (iii) The gated policy spends one probing increment, pays $c_{\text{pred}}$ for its prediction, and then either halts or continues to the full budget $C$. (iv) Per-query accuracy lies in $[0,1]$.

\textbf{Step 2 (Partition of queries).}
Let $\rho=\rho_u^{\mathrm{tt}}$. Each query falls into one of four classes: relevant and continued (fraction $\rho-\delta_{\mathrm{FN}}$), relevant but halted (false negatives, $\delta_{\mathrm{FN}}$), dead and halted ($1-\rho-\delta_{\mathrm{FP}}$), and dead but continued (false positives, $\delta_{\mathrm{FP}}$); these are unconditional masses of the whole query pool, not the class-conditional rates $\mathrm{fp},\mathrm{fn}$ of \cref{prop:iie-gain}. By \cref{lem:margin}, $\delta_{\mathrm{FN}}+\delta_{\mathrm{FP}}\le\Pr(\mathcal{B}_{\varepsilon_\phi})$, which is at most $m\varepsilon_\phi$ under the margin condition.

\textbf{Step 3 (Accuracy class by class).}
Continued queries, relevant or not, receive the full budget and give exactly the unconditional answer. Correctly halted dead queries are, by (ii), at least as accurate as under the full budget. Only false negatives can lose accuracy, and by (iv) each loses at most one unit. Hence the accuracy loss is at most $\delta_{\mathrm{FN}}$.

\textbf{Step 4 (Compute).}
Every query costs $c_{\text{pred}}$ for prediction, plus $C$ if continued or $1$ if halted. The continued fraction is $\rho-\delta_{\mathrm{FN}}+\delta_{\mathrm{FP}}\le\rho+\delta_{\mathrm{FP}}$ and the halted fraction is at most $1$, so the expected cost per query is at most
\begin{equation*}
  (\rho+\delta_{\mathrm{FP}})\,C+1+c_{\text{pred}}.
\end{equation*}

\textbf{Step 5 (Compute fraction).}
Dividing by the unconditional cost $C$ gives the upper bound $\rho+\frac{1+c_{\text{pred}}}{C}+\delta_{\mathrm{FP}}$ on the compute fraction stated in the proposition, which tends to $\rho$ as $C\to\infty$ and $\delta_{\mathrm{FP}}\to0$; the multiplier then approaches $1/\rho$.
\end{proof}

\begin{restated}{prop:agentic-efficiency}
Suppose (a) omitting any set of decision-irrelevant actions leaves the answer unchanged; (b) the gate thresholds a predicted relevance score with error at most $\varepsilon_\phi$, and on every task the action scores satisfy the margin condition of \cref{lem:margin} with constant $m$, so that in expectation at most a fraction $m\,\varepsilon_\phi$ of the actions are misclassified; and (c) prediction costs a fraction $r$ of an average action. Consider tasks with $N$ candidate actions in which no action costs more than $R_c$ times the task's average action cost. If utility-relevant actions carry a share $\rho_u^{\calC}$ of the expected call-everything cost (the cost-weighted density of utility-relevant actions, which reduces to their fraction $\rho_u$ when all actions cost the same), the conditional agent's accuracy is within $N m\,\varepsilon_\phi$ of call-everything accuracy, at expected cost
\begin{equation*}
  \E[\calC_{\mathrm{CI}}]
  \;\le\;
  \bigl(\rho_u^{\calC}+r+m\,\varepsilon_\phi\,R_c\bigr)\,
  \E[\calC_{\text{all}}].
\end{equation*}
\end{restated}

\begin{proof}
\textbf{Step 1 (Setup).}
Consider a task with $N$ candidate actions with costs $c_1,\dots,c_N\le c_{\max}$ and total $C_{\text{all}}=\sum_kc_k$, so the average action costs $\bar c=C_{\text{all}}/N$ and, by assumption, $c_{\max}\le R_c\,\bar c$. The predictor costs $r\bar c$ per action, $rC_{\text{all}}$ in total.

\textbf{Step 2 (Accuracy).}
Let $\mathcal{E}$ be the set of actions the conditional agent executes, and $\mathcal{E}_{\mathrm{rel}}$ the set of utility-relevant actions. If there is no false negative, $\mathcal{E}\supseteq\mathcal{E}_{\mathrm{rel}}$, so $\mathcal{E}$ differs from the call-everything set only by omitting some decision-irrelevant actions, and by (a) the committed answer is unchanged. The answer can therefore differ only on the event that at least one false negative occurs. Conditional on the task, by (b) and Markov's inequality, this event has probability at most $\E[\#\text{false negatives}]\le m\varepsilon_\phi N$, which bounds the accuracy loss; averaging over tasks preserves the bound. The loss grows with the task length $N$.

\textbf{Step 3 (Cost of one task).}
The conditional agent pays the predictor on every action ($rC_{\text{all}}$), the full cost on executed relevant actions (at most the task's relevant cost $C_{\text{rel}}=\sum_{k\in\mathcal{E}_{\mathrm{rel}}}c_k$), and the full cost on false positives, each at most $c_{\max}\le R_c\bar c$. Conditional on the task, by (b) the expected number of false positives is at most $m\varepsilon_\phi N$, so
\begin{equation*}
  \E[C_{\mathrm{CI}}\mid\text{task}]
  \le rC_{\text{all}}+C_{\text{rel}}+m\varepsilon_\phi N R_c\bar c
  = rC_{\text{all}}+C_{\text{rel}}+m\varepsilon_\phi R_c\,C_{\text{all}}.
\end{equation*}

\textbf{Step 4 (Take expectations over tasks).}
By definition of the cost-weighted density, $\E[C_{\text{rel}}]=\rho_u^{\calC}\,\E[C_{\text{all}}]$. Averaging Step 3 over tasks therefore gives
\begin{equation*}
  \E[C_{\mathrm{CI}}]\le\bigl(\rho_u^{\calC}+r+m\varepsilon_\phi R_c\bigr)\,\E[C_{\text{all}}],
\end{equation*}
the stated bound with every constant explicit. When all actions cost the same, $C_{\text{rel}}/C_{\text{all}}$ is the fraction of relevant actions and $\rho_u^{\calC}$ reduces to $\rho_u$.
\end{proof}

\noindent\emph{Remark.} Separability is essential. If two individually irrelevant actions are jointly relevant (for instance, two retrievals whose combination answers the question), suppressing both can change the answer, and the accuracy statement fails.

\subsection{Proof of \texorpdfstring{\cref{prop:IB}}{Proposition (Budgeted Selection)}}
\label{app:IB}

\begin{restated}{prop:IB}
Assume equal costs, a budget of $B$ executions, and that each candidate's utility depends on the predictor outputs only through its own, $\E[U(i)\mid\hat U]=\E[U(i)\mid\hat U(i)]$. Then, among all selection rules that see only the predictor outputs $\hat U$, executing the $B$ candidates with the largest posterior mean $\E[U\mid\hat U]$ maximizes expected executed utility and hence the IIE. Setting $\kappa_t$ to the $B$-th largest posterior mean makes the admissible set of Eq.~(\ref{eq:ci-policy}) exactly that top-$B$ set, up to ties at the $B$-th value.
\end{restated}

\begin{proof}
\textbf{Step 1 (Admissible rules).}
A selection rule that depends on the candidates only through their predictor outputs chooses a set $S$ of size $B$ that is a function of $\hat U=(\hat U(1),\dots,\hat U(n))$ (and possibly of independent randomization).

\textbf{Step 2 (IIE reduces to executed utility).}
With equal costs $\bar\calC$, a constant scoring cost $c_{\text{pred}}$ for each of the $n$ screened candidates, and exactly $B$ executions, the IIE denominator in Eq.~(\ref{eq:IIE}) equals $n\,c_{\text{pred}}+B\bar\calC$ for every such $S$, so maximizing IIE is equivalent to maximizing $\E\bigl[\sum_{i\in S}U(i)\bigr]$.

\textbf{Step 3 (Tower property).}
Because $S$ is determined by $\hat U$,
\begin{equation*}
  \E\Bigl[\sum_{i\in S}U(i)\Bigr]
  =\E\Bigl[\,\E\Bigl[\sum_{i\in S}U(i)\,\Big|\,\hat U\Bigr]\Bigr]
  =\E\Bigl[\sum_{i\in S}\E\bigl[U(i)\mid\hat U\bigr]\Bigr].
\end{equation*}

\textbf{Step 4 (Locality).}
Assume that each candidate's utility depends on the predictor outputs only through its own output, i.e.\ $\E[U(i)\mid\hat U]=\E[U(i)\mid\hat U(i)]=:\mu(\hat U(i))$.

\textbf{Step 5 (Pointwise optimization).}
For each realization of $\hat U$, the inner sum $\sum_{i\in S}\mu(\hat U(i))$ over sets of size $B$ is maximized by the $B$ largest values of $\mu(\hat U(i))$: exchanging any selected item for an unselected one with a larger value increases the sum. The top-$B$ rule is therefore optimal for every realization, hence in expectation, and randomization cannot help.

\textbf{Step 6 (Realization by the gate).}
Setting $\hat u=\mu(\hat U)=\E[U\mid\hat U]$ and $\kappa_t$ to the $B$-th largest of these values makes the admissible set $\calI_t^{\mathrm{adm}}$ of Eq.~(\ref{eq:ci-admissible}) exactly the top-$B$ set of Step 5 (when several candidates tie at the $B$-th value, the admissible set contains all of them and ties are broken arbitrarily, which leaves the executed utility unchanged), so executing the admissible set realizes the optimum. This is the thresholding half of Eq.~(\ref{eq:ci-policy}); the sampling half is irrelevant here, since the whole admissible set is executed, and the batch rule is the cardinality-constrained form of Eq.~(\ref{eq:subset-selection}) rather than the single-draw limit of \cref{prop:ci-argmax}.
\end{proof}

\noindent\emph{Remark (relation to the information bottleneck).} Stationary points of Eq.~(\ref{eq:IB}) satisfy the self-consistent IB equations \citep{tishby2000information}, $q(\hat U\mid\calH)\propto q(\hat U)\exp\bigl(-\beta_{\mathrm{IB}}\,\KL{p(U\mid\calH)}{p(U\mid\hat U)}\bigr)$. Stationarity does not imply that $\hat U$ is the most informative feature of its complexity, and a higher $I(\hat U;U)$ does not by itself guarantee better top-$B$ selection. The proposition therefore makes no claim about which $\hat U$ is optimal, only about how to use a given one.

\subsection{Proof of \texorpdfstring{\cref{prop:co-scaling}}{Proposition (Co-Scaling)}}
\label{app:co-scaling}

\begin{restated}{prop:co-scaling}
Assume the setting of \cref{prop:iie-gain}(ii); that $\rho_u$, $\mathrm{fp}(s)$ and $\mathrm{fn}(s)<1$ do not depend on $k$; that $c_{\text{pred}}(s)/(1-\mathrm{fn}(s))$ is non-decreasing in $s$; and that the IIE ratio Eq.~(\ref{eq:IIE-ratio-imperfect}) attains its maximum over $\calS_1$ for every $k$, on a set $S^\star(k)$ of optimal System-1 sizes. Then (i) $S^\star$ is non-decreasing in $k$ in the strong set order: if $k'>k$, $s\in S^\star(k)$ and $s'\in S^\star(k')$, then $\max(s,s')\in S^\star(k')$ and $\min(s,s')\in S^\star(k)$, so the largest and the smallest optimal sizes, where they exist, are non-decreasing in $k$; and (ii) the maximal IIE ratio is non-decreasing in $k$.
\end{restated}

\begin{proof}
\textbf{Step 1 (The ratio in terms of $s$ and $k$).}
Write $\bar\calC_{\text{rel}},\bar\calC_{\text{dead}}$ for the execution costs at $k=1$. After scaling System~2 they become $k\bar\calC_{\text{rel}}$ and $k\bar\calC_{\text{dead}}$, so the pool mean $\bar\calC$ also scales by $k$. The first factor of Eq.~(\ref{eq:IIE-ratio-imperfect}), $(\bar{\mathbb{I}}_{\text{rel}}/\bar\calC_{\text{rel}})/(\bar{\mathbb{I}}/\bar\calC)=\bar{\mathbb{I}}_{\text{rel}}\bar\calC/(\bar{\mathbb{I}}\,\bar\calC_{\text{rel}})$, is a ratio of two costs and is therefore unchanged by $k$; it does not depend on $s$ either, and we call it $K>0$. Dividing numerator and denominator of the overhead term by $k$,
\begin{equation*}
  \frac{\mathrm{IIE}^{\mathrm{CI}}}{\mathrm{IIE}^{\text{unc}}}(s,k)
  \;=\;\frac{K}{1+h(s,k)},
  \qquad
  h(s,k)\;=\;a(s)+\frac{b(s)}{k},
\end{equation*}
with
\begin{equation*}
  a(s)=\frac{\mathrm{fp}(s)\,(1-\rho_u)\,\bar\calC_{\text{dead}}}{(1-\mathrm{fn}(s))\,\rho_u\,\bar\calC_{\text{rel}}},
  \qquad
  b(s)=\frac{c_{\text{pred}}(s)}{(1-\mathrm{fn}(s))\,\rho_u\,\bar\calC_{\text{rel}}}.
\end{equation*}
By assumption neither $a$ nor $b$ depends on $k$, and $b\ge0$.

\textbf{Step 2 (Reduction to minimizing $h$).}
Since $x\mapsto K/(1+x)$ is strictly decreasing on $x\ge0$, maximizing the IIE ratio over $s$ is equivalent to minimizing $h(\cdot,k)$, and $S^\star(k)=\argmin_{s\in\calS_1}h(s,k)$. Put $t=1/k$, so that increasing $k$ means decreasing $t$, and $h=a(s)+b(s)\,t$.

\textbf{Step 3 (Increasing differences).}
The factor $\rho_u\bar\calC_{\text{rel}}$ is a positive constant, so $b$ is non-decreasing in $s$ by the assumption on $c_{\text{pred}}/(1-\mathrm{fn})$. Hence for $s'\ge s$ the difference
\begin{equation*}
  h(s',t)-h(s,t)\;=\;a(s')-a(s)+\bigl(b(s')-b(s)\bigr)\,t
\end{equation*}
is non-decreasing in $t$.

\textbf{Step 4 (Strong set order).}
Let $k'>k$, i.e.\ $t'=1/k'<t=1/k$, and take $s\in S^\star(k)$, $s'\in S^\star(k')$. If $s\le s'$ then $\max(s,s')=s'$ and $\min(s,s')=s$ and there is nothing to prove, so suppose $s'<s$. Optimality of $s$ at $t$ gives $h(s,t)\le h(s',t)$. By Step 3 applied to the pair $s'<s$,
\begin{equation*}
  h(s,t')-h(s',t')\;\le\;h(s,t)-h(s',t)\;\le\;0,
\end{equation*}
so $s=\max(s,s')$ is also optimal at $t'$, i.e.\ $s\in S^\star(k')$. Symmetrically, optimality of $s'$ at $t'$ gives $h(s',t')\le h(s,t')$, and Step 3 gives
\begin{equation*}
  h(s',t)-h(s,t)\;=\;-\bigl(h(s,t)-h(s',t)\bigr)\;\le\;-\bigl(h(s,t')-h(s',t')\bigr)\;=\;h(s',t')-h(s,t')\;\le\;0,
\end{equation*}
so $s'=\min(s,s')$ is optimal at $t$, i.e.\ $s'\in S^\star(k)$. This is (i). For the extreme points: if $\bar s=\max S^\star(k)$ exists, taking $s=\bar s$ and any $s'\in S^\star(k')$ shows $\max(\bar s,s')\in S^\star(k')$, so every upper bound of $S^\star(k')$ is at least $\bar s$; likewise $\min S^\star(k)\le\min S^\star(k')$ whenever both minima exist.

\textbf{Step 5 (The optimal ratio).}
For each fixed $s$, $h(s,k)=a(s)+b(s)/k$ is non-increasing in $k$ because $b(s)\ge0$. A pointwise minimum of non-increasing functions is non-increasing, so $h^\star(k)=\min_s h(s,k)$ is non-increasing in $k$, and the maximal ratio $K/(1+h^\star(k))$ is non-decreasing in $k$. This is (ii).
\end{proof}

\noindent\emph{Remark.} The monotonicity assumption on $c_{\text{pred}}/(1-\mathrm{fn})$ is mild: $1-\mathrm{fn}$ lies in $(0,1]$ and can rise at most to one, while the scoring cost grows roughly in proportion to System~1's size. If it fails, the conclusion can fail too, since a larger predictor both more accurate and relatively cheaper per recalled utility-relevant candidate could become optimal only at small $k$. The assumption that error rates do not depend on $k$ is the substantive one, discussed after the proposition.

\subsection{Proof of \texorpdfstring{\cref{prop:rescaling}}{Proposition (Horizontal Rescaling)}}
\label{app:rescaling}

\begin{restated}{prop:rescaling}
Suppose (a) unconditional training follows a loss curve $\calL_{\mathrm{base}}(\calC)$ on a pool whose density of utility-relevant interactions is a constant $\rho_u\in(0,1]$; (b) dead interactions leave the learner unchanged; (c) progress depends only on the utility-relevant interactions executed, whose distribution and order the selection policy preserves; and (d) the selection policy is perfect, prediction is free, and executions cost the same. Then
\begin{equation*}
  \calL_{\mathrm{CI}}(\calC)\;=\;\calL_{\mathrm{base}}\!\left(\calC/\rho_u\right),
  \tag{\ref{eq:rescaling}}
\end{equation*}
i.e.\ on a log-compute axis the conditional curve is the unconditional one shifted left by $\ln(1/\rho_u)$.
\end{restated}

\begin{proof}
\textbf{Step 1 (Unconditional informative compute).}
By (a) and (d), each unit of compute executes candidates of equal cost, a fraction $\rho_u$ of which are utility-relevant. Unconditional training at compute $\calC'$ therefore spends informative compute $\rho_u\calC'$ (the integral of Eq.~(\ref{eq:scaling-law}) with $\rho_u^{\mathrm{exec}}\equiv\rho_u$).

\textbf{Step 2 (Conditional informative compute).}
By (d), the gate executes only utility-relevant candidates at no screening cost, so conditional training at compute $\calC$ spends informative compute $\calC$ ($\rho_u^{\mathrm{exec}}\equiv1$).

\textbf{Step 3 (Only informative compute matters).}
By (b), executing a dead candidate leaves the learner unchanged, and by (c) the learner's state depends only on the utility-relevant interactions executed, whose distribution and order are the same in both regimes. Two runs that have executed the same informative compute are therefore in the same state and have the same loss.

\textbf{Step 4 (Match the runs).}
By Steps 1--3, $\calL_{\mathrm{CI}}(\calC)=\calL_{\mathrm{base}}(\calC')$ whenever $\rho_u\calC'=\calC$, i.e.\ $\calC'=\calC/\rho_u$, which is Eq.~(\ref{eq:rescaling}).

\textbf{Step 5 (Log-axis interpretation).}
On a logarithmic compute axis, $\ln(\calC/\rho_u)=\ln\calC+\ln(1/\rho_u)$, so the conditional curve is the unconditional one shifted left by $\ln(1/\rho_u)$. No property of $\calL_{\mathrm{base}}$ beyond being a function of compute was used, so the identity holds for any curve shape.
\end{proof}


\section{Acquisition Functions for the Digital and Physical Instantiations}
\label{app:acquisition}

This appendix collects the acquisition functions behind the methods discussed in \cref{sec:digital}, and shows that they share one cost-normalized form: the scalar on which System-1 gates System-2.

\paragraph{Digital: scoring a prompt before generation.}
Given the prompt and the model's recent history, System-1 estimates how the outcomes of a prompt $\tau$ would be distributed and scores it by how much of that distribution falls in the informative, mixed-outcome band,
\begin{equation}
  \calA_{\text{digital}}(\tau)
  \;=\; \E_{z\sim\pi_t(\cdot\mid\tau)}\!\bigl[w(z)\bigr],
  \label{eq:digital-acquisition}
\end{equation}
with $z$ summarizing how the prompt would resolve under the current policy and $w$ large on the mixed-outcome band. The predictive prompt selectors of \cref{sec:digital} are of this form.

\paragraph{Physical: adaptation risk and its tail.}
The quantity worth predicting for a task configuration is how badly the agent would still do on it \emph{after} adapting to it, the \emph{adaptation risk}
\begin{equation}
  \calR_{\text{adapt}}(\tau;\theta)
  \;=\; \E_{\xi\sim p(\xi\mid\tau)}\!\left[
    \ell\!\bigl(\theta-\eta_{\text{in}}\nabla_\theta\calL(\theta,\tau,\xi),\,\tau_{\text{target}}\bigr)
  \right],
  \label{eq:adaptation-risk}
\end{equation}
and concentrating training on its adverse tail is minimizing a conditional value-at-risk \citep{rockafellar2000} at level $\nu\in[0,1)$,
\begin{equation}
  \min_\theta\;
  \CVaR_\nu\!\bigl[\ell(\theta_{\text{adapted}},\tau)\bigr]
  \;=\; \min_\theta\;\frac{1}{1-\nu}\int_\nu^1 F_\ell^{-1}(v)\,dv,
  \label{eq:cvar}
\end{equation}
which reduces to the average case at $\nu=0$. MPTS and PDTS \citep{wang2025modelpredictivetasksampling, qu2025fast} estimate this risk from a task embedding and the optimization history. MPTS scores each candidate optimistically by $\hat{\calR}_{\text{adapt}}(\tau;\theta_t)+\gamma\,\sigma_\phi(\tau)$. PDTS replaces this upper-confidence score with one posterior sample $\tilde{\calR}_{\text{adapt}}(\tau;\theta_t)$ per candidate and selects the whole batch at once, trading summed risk against the diversity of the subset:
\begin{equation}
  \mathcal{B}_t^\star
  \;=\; \argmax_{\mathcal{B}\subseteq\hat{\calT}_t,\;|\mathcal{B}|=B}\;
       \sum_{\tau\in\mathcal{B}} \tilde{\calR}_{\text{adapt}}(\tau;\theta_t)
       \;+\; \mu\,\mathrm{Div}(\mathcal{B}),
  \label{eq:phys-acquisition}
\end{equation}
where $\hat{\calT}_t$ is the candidate pool and $\mathrm{Div}(\mathcal{B})$ grows with the pairwise distances $d(\tau,\tau')$ between selected tasks. The problem is NP-hard and is solved approximately; when $\mathrm{Div}$ is a sum of pairwise distances it is supermodular, so the $(1-1/e)$ greedy guarantee of \cref{sec:mpc} does not apply, and a coverage or log-determinant diversity term would restore it; it is the physical instance of the budgeted subset selection of Eq.~(\ref{eq:subset-selection}).

\begin{remark}[Common acquisition form]
\label{prop:unified-acquisition}
The per-prompt score of Eq.~(\ref{eq:digital-acquisition}) and the per-task risk summed in Eq.~(\ref{eq:phys-acquisition}) are both local approximations of the cost-normalized expected utility
\begin{equation}
  \calA^*(i\mid\calH_{0:t})
  \;=\; \frac{1}{\calC(i)}\,\E_{q_\phi}\!\bigl[\,u(i;\theta_t)\,\bigr],
  \label{eq:unified-acquisition}
\end{equation}
with the mixed-outcome proxy, locally a Fisher-weighted gradient norm (Steps 3--4 below), in the digital case and predicted tail adaptation risk in the physical case; the diversity term captures diminishing returns from similar interactions. They share the form, not the scalar (\cref{rem:utility-scope}).
\end{remark}

\subsection{Derivation for \texorpdfstring{\cref{prop:unified-acquisition}}{Remark (Common Acquisition Form)}}
\label{app:unified-acquisition}

\noindent This is a heuristic derivation under local (Laplace) approximations, not a theorem. Only Steps 2 and 7 are exact, and only within the Gaussian models they state. The rest is a local analogy: a gradient step is not a Bayesian posterior update, a large gradient can reflect noise or a harmful direction rather than useful learning, and the connection to the acquisition functions of the cited methods is a motivation, not a derivation of them.

\textbf{Step 1 (Local model).}
Assume the pre-interaction belief is approximately Gaussian, $\mathcal{N}(\theta_t,\calF_t^{-1})$, with precision $\calF_t$ (the accumulated Fisher information), and that an interaction moves the mean by a small step $\Delta\theta(i)=\eta\,g(i)$, where $g(i)$ is the score of the observed learning signal.

\textbf{Step 2 (Quadratic information gain).}
For two Gaussians with the same covariance $\calF_t^{-1}$ and means differing by $\Delta\theta$, the KL divergence is exactly $\frac{1}{2}\Delta\theta^\top\calF_t\Delta\theta$. A single observation contracts the covariance only slightly, so to leading order
\begin{equation}
  \KL{p(\theta\mid\calH_{0:t},i,o)}{p(\theta\mid\calH_{0:t})}
  \;\approx\;
  \tfrac12\,\Delta\theta(i)^\top \calF_t\,\Delta\theta(i).
  \label{eq:app-KL-quad}
\end{equation}

\textbf{Step 3 (Expected utility).}
Substituting $\Delta\theta(i)=\eta\,g(i)$ and taking expectations under $q_\phi$ gives $\E_{q_\phi}[u(i;\theta_t)]\approx\frac{\eta^2}{2}\E_{q_\phi}\bigl[g(i)^\top\calF_t\,g(i)\bigr]=\frac{\eta^2}{2}\E_{q_\phi}\norm{g(i)}_{\calF_t}^2$.

\textbf{Step 4 (Digital case).}
If the score depends on the prompt only through the latent informativeness $z$, then conditioning on $z$ and dividing by cost gives $\calA^*=\E_{z\sim\pi_t(\cdot\mid\tau)}[w(z)]$ with $w(z)\propto\E[\norm{g}^2_{\calF_t}\mid z]/\calC$. This is Eq.~(\ref{eq:digital-acquisition}); $w$ is large exactly where outcomes are mixed, and the three-state weight $\indicator{z=\text{partial}}$ is its binarization.

\textbf{Step 5 (Physical case).}
Linearizing the post-adaptation loss of Eq.~(\ref{eq:adaptation-risk}) around $\theta_t$, its expectation is to leading order the predicted risk $\hat{\calR}_{\text{adapt}}(\tau;\theta_t)$, and the spread of this linear predictor under $q_\phi$ is $\sigma_\phi(\tau)$. An optimistic (upper-confidence) estimate of the risk is then $\hat{\calR}_{\text{adapt}}+\gamma\sigma_\phi$, the per-task score of MPTS, and a posterior sample $\tilde{\calR}_{\text{adapt}}$, as summed in Eq.~(\ref{eq:phys-acquisition}), is its randomized counterpart; with roughly constant simulator cost, dividing by $\calC$ does not change the ranking.

\textbf{Step 6 (Where the diversity term does \emph{not} come from).}
For a batch, the summed update $\sum_{i}\Delta\theta(i)$ in Eq.~(\ref{eq:app-KL-quad}) produces cross terms $\eta^2g(i)^\top\calF_t\,g(j)$, which are \emph{positive} when updates align. The mean-shift approximation therefore does not produce a redundancy penalty.

\textbf{Step 7 (Where it does come from).}
Redundancy comes from covariance contraction, which Step 2 neglected. In the linear-Gaussian model $o_i=g(i)^\top\theta+\varepsilon_i$ with $\varepsilon_i\sim\mathcal{N}(0,\sigma^2)$ conditionally independent given $\theta$, the information gained by adding $i$ after a set $S$ is
\begin{equation*}
  I(\theta;o_i\mid o_S)=\tfrac12\ln\bigl(1+\sigma^{-2}g(i)^\top\Sigma_S\,g(i)\bigr),
\end{equation*}
where $\Sigma_S$ is the posterior covariance after $S$. Since $\Sigma_S$ shrinks along the directions $g(j)$, $j\in S$, already probed, this marginal gain falls as $g(i)$ aligns with the selected interactions. Information gain is thus submodular in the executed set of this linear-Gaussian model; nothing here shows that the task utility of a generic language model or embodied agent is submodular, and the subset diversity term $\mu\,\mathrm{Div}(\mathcal{B})$ in Eq.~(\ref{eq:phys-acquisition}) is a cheap proxy for this diminishing return, though a pairwise-distance sum is not itself submodular.

\end{document}